\documentclass[11pt]{article}

\usepackage{amsfonts}
\usepackage{xcolor}
\usepackage{amsthm}
\usepackage[normalem]{ulem}

\usepackage{amsmath}
\usepackage{amssymb}
\usepackage[numbers,sort&compress]{natbib}

\usepackage{multirow}
\usepackage{mathtools}
\usepackage{graphicx}
\usepackage{url}
\usepackage[ruled]{algorithm2e}
\PassOptionsToPackage{algo2e,ruled}{algorithm2e}
\usepackage{amsthm}
\usepackage[margin=1in]{geometry}
\usepackage{hyperref}
\usepackage[capitalize]{cleveref}
\hypersetup{colorlinks=true,linkcolor=blue,citecolor=blue}

\newcommand{\comment}[1]{}
\definecolor{darkgreen}{RGB}{0,100,0}      % classic dark green
\definecolor{forestgreen}{RGB}{40,180,40}  % very popular in papers
\usepackage[mathscr]{eucal}
\usepackage{thm-restate}

\newtheorem{theorem}{Theorem}
\newtheorem{lemma}[theorem]{Lemma}
  
    \newtheorem{definition}{Definition}
  \newtheorem{proposition}[theorem]{Proposition}
  \newtheorem{remark}[theorem]{Remark}
  \newtheorem{corollary}[theorem]{Corollary}

\usepackage{pgfplots}
\pgfplotsset{compat=1.18}
\usetikzlibrary{patterns}

\title{Distribution-Free Sequential Prediction with Abstentions}
\author{
  \textbf{Jialin Yu}\\
  Georgia Institute of Technology\\
  \small{\texttt{yujl1@gatech.edu}}
  \and 
   \textbf{Mo\"ise Blanchard}\\
  Georgia Institute of Technology\\
  \small{\texttt{mblanchard41@gatech.edu}
  }
}

\date{}
\usepackage{thmtools}
\usepackage{thm-restate}
\usepackage[mathscr]{eucal}
\usepackage{bbm}

\newcommand{\acks}[1]{\section*{Acknowledgments}#1}

\newcommand{\nonl}
{\renewcommand{\nl}{\let\nl\oldnl}}% Remove line number for one line

\DeclareMathOperator*{\argmax}{arg\,max}

\usepackage{latexsym,tikz,graphicx}
\usetikzlibrary{calc}
\usetikzlibrary{decorations.pathreplacing}
\usetikzlibrary{patterns}

\renewenvironment{proof}[1][]{\par\noindent{\bf Proof #1\ }}{\hfill$\blacksquare$\\[2mm]}

\begin{document}

%--------- CUSTOM COMMANDS ----------
% names
\newcommand{\trw}{\text{\small TRW}}
\newcommand{\maxcut}{\text{\small MAXCUT}}
\newcommand{\maxcsp}{\text{\small MAXCSP}}
\newcommand{\suol}{\text{SUOL}}
\newcommand{\wuol}{\text{WUOL}}
\newcommand{\crf}{\text{CRF}}
\newcommand{\sual}{\text{SUAL}}
\newcommand{\suil}{\text{SUIL}}
\newcommand{\fs}{\text{FS}}
\newcommand{\fmv}{{\text{FMV}}}
\newcommand{\smv}{{\text{SMV}}}
\newcommand{\wsmv}{{\text{WSMV}}}
\newcommand{\trwp}{\text{\small TRW}^\prime}
\newcommand{\alg}{\text{ALG}}
\newcommand{\rhos}{\rho^\star}
\newcommand{\brhos}{\brho^\star}
\newcommand{\bzero}{{\mathbf 0}}
\newcommand{\bs}{{\mathbf s}}
\newcommand{\bw}{{\mathbf w}}
\newcommand{\bws}{\bw^\star}
\newcommand{\ws}{w^\star}
\newcommand{\Prt}{{\mathsf {Part}}}
\newcommand{\Fs}{F^\star}

\newcommand{\Hs}{{\mathsf H} }

\newcommand{\hL}{\hat{L}}
\newcommand{\hU}{\hat{U}}
\newcommand{\hu}{\hat{u}}

\newcommand{\bu}{{\mathbf u}}
\newcommand{\ubf}{{\mathbf u}}
\newcommand{\hbu}{\hat{\bu}}

\newcommand{\primal}{\textbf{Primal}}
\newcommand{\dual}{\textbf{Dual}}

\newcommand{\Ptree}{{\sf P}^{\text{tree}}}
\newcommand{\bv}{{\mathbf v}}

%bold commands
\newcommand{\bq}{\boldsymbol q}

%random variables
\newcommand{\rvM}{\text{M}}

%\mathcal commands
\newcommand{\Acal}{\mathcal{A}}
\newcommand{\Bcal}{\mathcal{B}}
\newcommand{\Ccal}{\mathcal{C}}
\newcommand{\Dcal}{\mathcal{D}}
\newcommand{\Ecal}{\mathcal{E}}
\newcommand{\Fcal}{\mathcal{F}}
\newcommand{\Gcal}{\mathcal{G}}
\newcommand{\Hcal}{\mathcal{H}}
\newcommand{\Ical}{\mathcal{I}}
\newcommand{\Kcal}{\mathcal{K}}
\newcommand{\Lcal}{\mathcal{L}}
\newcommand{\Mcal}{\mathcal{M}}
\newcommand{\Ncal}{\mathcal{N}}
\newcommand{\Pcal}{\mathcal{P}}
\newcommand{\Scal}{\mathcal{S}}
\newcommand{\Tcal}{\mathcal{T}}
\newcommand{\Ucal}{\mathcal{U}}
\newcommand{\Vcal}{\mathcal{V}}
\newcommand{\Wcal}{\mathcal{W}}
\newcommand{\Xcal}{\mathcal{X}}
\newcommand{\Ycal}{\mathcal{Y}}
\newcommand{\Ocal}{\mathcal{O}}
\newcommand{\Qcal}{\mathcal{Q}}
\newcommand{\Rcal}{\mathcal{R}}

\newcommand{\brho}{\boldsymbol{\rho}}

%\mathbb commands
\newcommand{\Cbb}{\mathbb{C}}
\newcommand{\Ebb}{\mathbb{E}}
\newcommand{\Nbb}{\mathbb{N}}
\newcommand{\Pbb}{\mathbb{P}}
\newcommand{\Qbb}{\mathbb{Q}}
\newcommand{\Rbb}{\mathbb{R}}
\newcommand{\Sbb}{\mathbb{S}}
\newcommand{\Vbb}{\mathbb{V}}
\newcommand{\Wbb}{\mathbb{W}}
\newcommand{\Xbb}{\mathbb{X}}
\newcommand{\Ybb}{\mathbb{Y}}
\newcommand{\Zbb}{\mathbb{Z}}

\newcommand{\Rbbp}{\Rbb_+}

\newcommand{\bX}{{\mathbf X}}
\newcommand{\bx}{{\boldsymbol x}}

\newcommand{\btheta}{\boldsymbol{\theta}}

\newcommand{\Pb}{\mathbb{P}}

\newcommand{\hPhi}{\widehat{\Phi}}

%\hat commands
\newcommand{\Sigmah}{\widehat{\Sigma}}
\newcommand{\thetah}{\widehat{\theta}}

%Functional commands
\newcommand{\indep}{\perp \!\!\! \perp}
\newcommand{\notindep}{\not\!\perp\!\!\!\perp}

\newcommand{\one}{\mathbbm{1}}
\newcommand{\1}{\mathbbm{1}}%{{\rm 1}\kern-0.24em{\rm I}}
\newcommand{\aprx}{\alpha}

%Article commands
\newcommand{\ST}{\Tcal(\Gcal)}
\newcommand{\x}{\mathsf{x}}
\newcommand{\y}{\mathsf{y}}
\newcommand{\Ybf}{\textbf{Y}}
\newcommand{\smiddle}[1]{\;\middle#1\;}%middle with spaces before and after

%Edition commands
\definecolor{dark_red}{rgb}{0.2,0,0}
\newcommand{\detail}[1]{\textcolor{dark_red}{#1}}

\newcommand{\ds}[1]{{\color{red} #1}}
\newcommand{\rc}[1]{{\color{green} #1}}

\newcommand{\mb}[1]{\ensuremath{\boldsymbol{#1}}}

\newcommand{\metric}{\rho}
\newcommand{\proj}{\text{Proj}}

\newcommand{\paren}[1]{\left( #1 \right)}
\newcommand{\sqb}[1]{\left[ #1 \right]}
\newcommand{\set}[1]{\left\{ #1 \right\}}
\newcommand{\floor}[1]{\left\lfloor #1 \right\rfloor}
\newcommand{\ceil}[1]{\left\lceil #1 \right\rceil}
\newcommand{\abs}[1]{\left|#1\right|}
\newcommand{\norm}[1]{\left\|#1\right\|}

\newcommand{\todo}[1]{{\color{red} TODO: #1}}
\newcommand{\algoname}{\textsc{AbstainBoost} }

\maketitle
\thispagestyle{empty}

\begin{abstract}
We study a sequential prediction problem in which an adversary is allowed to inject arbitrarily many adversarial instances in a stream of i.i.d.\ instances, but at each round, the learner may also \emph{abstain} from making a prediction without incurring any penalty if the instance was indeed corrupted. This semi-adversarial setting naturally sits between the classical stochastic case with i.i.d.\ instances for which function classes with finite VC dimension are learnable; and the adversarial case with arbitrary instances, known to be significantly more restrictive. For this problem, \cite{goel2023adversarial} showed that, if the learner knows the distribution $\mu$  of clean samples in advance, learning can be achieved for all VC classes without restrictions on adversary corruptions. This is, however, a strong assumption in both theory and practice: a natural question is whether similar learning guarantees can be achieved without prior distributional knowledge, as is standard in classical learning frameworks (e.g., PAC learning or asymptotic consistency) and other non-i.i.d.\ models (e.g., smoothed online learning). 
We therefore focus on the distribution-free setting where $\mu$ is \emph{unknown} and propose an algorithm \algoname based on a boosting procedure of weak learners, which guarantees sublinear error for general VC classes in \emph{distribution-free} abstention learning for oblivious adversaries.
%In this setting, \cite{goel2023adversarial} proposed an algorithm that learns VC-1 hypothesis classes with sublinear errors/costs. 
These algorithms also enjoy similar guarantees for adaptive adversaries, for structured function classes including linear classifiers. These results are complemented with corresponding lower bounds, which reveal an interesting polynomial trade-off between misclassification error and number of erroneous abstentions.
\end{abstract}

\section{Introduction}

We study the classical online prediction problem in which a learner sequentially observes an instance $x_t\in\Xcal$ and makes a prediction
about a value $y_t$ before observing the true label. The learner’s
objective is to minimize the error of its predictions $\hat{y}_t$ compared to
the true value $y_t$. The problem has been extensively studied under two extreme scenarios: the \emph{stochastic} setting where instances $x_1,\ldots,x_T$ are assumed to be independently and identically distributed (i.i.d.) usually from an unknown distribution $\mu$ to the learner, and the general \emph{adversarial} setting where no assumptions are made on the instance generation process. In the stochastic setting, also known as PAC learning \citep{vapnik1974theory,valiant1984theory}, simple algorithms can provably learn the best prediction function within function classes $\Fcal$ of finite Vapnik-Chervonenkis (VC) dimension \citep{vapnik1971uniform,vapnik1974theory,valiant1984theory}, a combinatorial measure of complexity that is well controlled for many practical hypothesis classes, including linear classifiers. On the other hand, learning is severely limited for adversarial data. Specifically, learning in the adversarial setting requires the function class to have finite so-called Littlestone dimension \citep{littlestone1988learning,ben2009agnostic}, which does not hold even for threshold function classes in $[0,1]$.\footnote{The threshold function class is defined as $\Fcal=\{x\in[0,1]\mapsto \1[x\leq x_0]: x_0\in[0,1]\}$.}

This significant gap has sparked substantial interest in intermediate learning models which aim to relax the i.i.d.\ assumption while preserving strong learning guarantees. This includes distributionally-constrained adversaries \citep{rakhlin2011online,blanchard2025distributionally} and specifically the smooth adversarial model which imposes density ratio constraints on data distributions \citep{rakhlin2011online,haghtalab2024smoothed,block2022smoothed}. 
As an alternative model for non-i.i.d.\ data, another line of work focused on algorithms robust to \emph{adversarial corruptions} in which an adversary may corrupt training \citep{valiant1985learning,kearns1993learning,bshouty2002pac,awasthi2017power,gao2021instance,hanneke2022optimal,balcan2022robustly,shafahi2018poison,blum2021robust} or test data \citep{feige2018robust, attias2019improved, montasser2019vc, montasser2020reducing, montasser2021unknown, montasser2022generic, montasser2020halfspaces}.
A central challenge with adversarial corruptions is that achieving strong predictive performance typically requires imposing substantial restrictions on the adversary, such as limiting corruptions to a small fraction of the data or to specific classes of perturbations. 
To circumvent these limitations, allowing the learner to \emph{abstain} on difficult or out-of-training instances has emerged as a fruitful workaround \citep{chow2009optimum,bartlett2008classification,goldwasser2020beyond,kalai2021towards,cortes2016learning}. In practice, an abstention from the machine learning model could trigger a conservative default policy, for instance in the context of content moderation systems, or prompt human review/intervention in medical diagnoses or autonomous driving applications. In high-risk applications such as these, a cautious abstention is often significantly more desirable and cost-effective than a prediction error, and enables the model to selectively predict on instances for which it has high confidence---an approach sometimes referred to as ``reliable learning'' \citep{kanade2014distribution,kalai2021towards}. 

%Similarly, \cite{goel2023adversarial} proposed the \emph{abstention learning} framework that sits between the two extremes, where there is an adversary that injects adversarial (or out-of-distribution) examples in a stream of i.i.d. examples, and the learner is allowed to abstain from predicting on potentially adversarial examples. There are many examples in which the assumption in this framework is meaningful. For instance, when a self-driving car encounters weather conditions or unknown information signs outside of its training, it is better for the algorithm to abstain and hand over access to the driver instead of making a wrong decision.\footnote{This example comes from \cite{goel2023adversarial}.}  

\paragraph{Distribution-free sequential learning with abstentions.} We consider the sequential prediction model with abstentions introduced by \cite{goel2023adversarial} in which an adversary may inject arbitrary corruptions within the clean sequence of i.i.d.\ instances from a distribution $\mu$. In particular, the adversary may corrupt half (or all) of the data instances $x_1,\ldots,x_T$. On the other hand, the learner may abstain on instances which they suspect may be corrupted. For simplicity, we focus on the \emph{realizable binary classification} case where all instances are labeled by a fixed in-class function $f^\star:\Xcal\to\{0,1\}$. In other terms, we focus on a \emph{clean-label} model \citep{shafahi2018poison,blum2021robust}, where corrupted instances are still correctly labeled. 
In this context, the learner aims to always make correct predictions whenever it does not abstain, thereby ensuring reliability, while simultaneously avoiding abstentions on clean instances. Crucially, the learner incurs no penalty for abstaining on corrupted instances; in light of the fundamental limitations for adversarial learning even for realizable data \citep{littlestone1988learning}, this is essentially necessary to achieve meaningful learning guarantees when the data may contain a large fraction of corrupted examples. 

In this abstention sequential learning framework, \cite{goel2023adversarial} propose learning algorithms that guarantee both low misclassification error and few abstentions on clean instances for any function class $\Fcal$ with finite VC dimension, under the core assumption that the clean sample distribution $\mu$ is \emph{known} to the learner a priori. While these results provide strong theoretical evidence for the benefit of abstentions in the presence of adversarial attacks, the knowledge of $\mu$ is an important practical limitation. Indeed, assuming access to the underlying data distribution significantly departs from classical results in the PAC learning and consistency literature, in which simple algorithms such as Empirical Risk Minimization (ERM) or nearest-neighbor-based algorithms achieve desired learning guarantees without any prior distributional knowledge. Accordingly, recent work has aimed to remove distributional assumptions in other beyond-i.i.d.\ learning models, including smooth adversaries \citep{block2024performance,blanchard2025agnostic}. For the abstention learning problem itself, \cite{goel2023adversarial} show as a proof of concept that prior knowledge of the distribution $\mu$ is not required for certain function classes: namely axis-aligned boxes and those with VC dimension $1$. In a previous version of their concurrent work, \cite{edelman2026reliableabstention} also show that learning linear classifiers in $\Rbb^2$ also does not require distributional knowledge. This naturally leads to the following fundamental question for abstention sequential learning:
\newpage %
\begin{quote}
    \emph{Is learning with abstentions possible for general function classes (with finite VC dimension) without prior knowledge of the data distribution $\mu$?}
\end{quote}

\paragraph{Our contributions.}
Throughout, we refer to the number of erroneous abstentions (on non-corrupted instances) as the \emph{abstention error}. Our results reveal a trade-off between the misclassification error and the abstention error of learning algorithms. Towards characterizing the achievable misclassification/abstention error landscape for distribution-free sequential prediction with abstentions, we make the following contributions. Our key findings are summarized in \cref{fig:tradeoff results}.
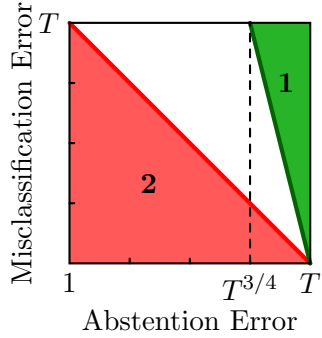
\begin{figure}[t] 
    \centering % 1. Centers the figure on the page
    
    \begin{tikzpicture}[scale=3.2, >=latex, line cap=round]
        % --- Definitions ---
        \def\xaxislabel{Abstention Error}
        \def\yaxislabel{Misclassification Error}
    
        % --- Regions ---
        
        % 1. Red Hatched Region 
        % \fill[pattern=north east lines, pattern color=red!70] 
        %     (0,0) -- (1,0) -- (0,1) -- cycle;
        \fill[red!65] (0,0) -- (1,0) -- (0,1) -- cycle;
        
        % 2. Green Hatched Region 
        % \fill[pattern=north east lines, pattern color=forestgreen] 
        %     (0.75, 1) -- (1, 1) -- (1, 0) -- cycle;
        \fill[forestgreen] 
            (0.66, 1) -- (1, 1) -- (1, 0) -- cycle;
        % --- Lines ---
        
        % Red Diagonal Line
        \draw[red, ultra thick] (0,1) -- (1,0);
        
        % Green Steep Line
        \draw[darkgreen, ultra thick] (0.66, 1) -- (1,0);
        
        % Dashed Vertical Line
        \draw[dashed, thick, black] (0.66, 0) -- (0.66, 1);
    
        % --- Axes and Box ---
        \draw[thick] (0,0) rectangle (1,1);
        
        % Ticks
        \foreach \x in {0.33, 0.66}
            \draw[thick] (\x, 0) -- (\x, 0.02);
        \foreach \y in {0.33, 0.66}
            \draw[thick] (0, \y) -- (0.02, \y);
    
        % --- Labels ---
        % --- Region Labels (Added) ---
        
        % Label "1" in Green Region (approximate centroid of upper right triangle)
        \node at (0.9, 0.75) {  \textbf{1}};
        
        % Label "2" in Red Region (approximate centroid of lower left triangle)
        \node at (0.33, 0.33) {  \textbf{2}};
        % Y-Axis Labels
        \node[left] at (0, 1) {  $T$};
        % Vertical Label
        \node[rotate=90] at (-0.2, 0.5) {\yaxislabel};
        
        % X-Axis Labels
        \node[below] at (0,0) {  $1$};
        \node[below] at (0.66, 0) {  $T^{2/3}$};
        \node[below] at (1, 0) { $T$};
        % Moved down slightly to -0.25 so it doesn't hit the numbers
        \node[below] at (0.5, -0.15) {\xaxislabel};
    
    \end{tikzpicture}
    
    % 2. Add Caption
    \caption{Tradeoffs between misclassification error and abstention error for either (1) oblivious adversaries or (2) adaptive adversaries and function classes with finite reduction dimension (see Definition~\ref{def:reduction_dimension}), including linear classifiers. The green region \textbf{1} is achievable (\cref{thm:main_oblivious_upper_bound,thm:main_adaptive_upper_bound}), while the red region \textbf{2} is not (\cref{thm:lowerbound}). 
    The plot is displayed in log-scale.} 
    \label{fig:tradeoff results}
\end{figure}

We propose a family of algorithms \algoname for $\alpha\in[0,1]$ which guarantee $\Ocal(T^{3\alpha})$ misclassification error and $\tilde\Ocal(d^2 T^{1-\alpha})$ abstention error for any function class with VC dimension $d$, against \emph{oblivious adversaries}.

Additionally, we show that \algoname also achieves similar learning guarantees against \emph{adaptive adversaries} when the function class has certain additional structure (finite so-called reduction-dimension, see \cref{def:reduction_dimension}). For instance, for linear classifiers in $\Rbb^p$, \algoname achieves $\Ocal(T^{3\alpha})$ misclassification error and $\tilde \Ocal(p^{4.67} T^{1-\alpha})$ abstention error\footnote{For the special case $p=2$, \cite{edelman2026reliableabstention} show that misclassification and abstention errors $\Ocal(T^{2/3})$ is achievable. In comparison, our bounds achieve $\Ocal(T^{3/4})$ combined error, however, this holds for all dimensions including $p>2$.}
This also recovers learnability against adaptive adversaries for VC-1 classes and the class of axis-aligned rectangles, provided by \cite{goel2023adversarial} with tailored algorithms, albeit with worse learning guarantees.

Our learning guarantees for oblivious and adaptive adversaries can be generalized to the \emph{censored} model, where the learner only observes the label $y_t=f^* (x_t)$ if it made a prediction at time $t$. The corresponding algorithm, \textsc{C-AbstainBoost}, enjoys nearly the same sublinear error guarantees as \algoname for both oblivious and adaptive adversaries. 

Last, we complement these results with an oblivious lower bound tradeoff between misclassification and abstention. Specifically, we show that for any $\alpha\in[0,1]$, learning algorithms must make either $\Omega(T^\alpha)$ misclassification error or $\Omega(T^{1-\alpha})$ abstention error against some oblivious adversary. As an important remark, the lower bound for $\alpha=1/2$ was known in a previous version of the concurrent work \citep{edelman2026reliableabstention}.

\comment{
Our algorithm is based on a reduction to the setting in which the learner would have access to few samples from $\mu$ before the learning procedure. Specifically, we consider weak learners which use instances observed at a few pre-specified times, then combine their predictions within a boosting procedure, of potentially independent interest. We note that usual learning-with-expert techniques do note readily apply since the boosting procedure needs to \emph{simultaneously} ensure low abstention excess error and low excess misclassification error compared to any good weak learner. Given a few samples from $\mu$, we use U-statistic estimators to achieve sufficient (exponential in $d$) precision for our purposes.
}

\paragraph{Organization.}
After describing the formal setup in \cref{sec:framework}, we detail our main results in \cref{sec:main results}. Next, in \cref{alg:weak_learner} we describe the weak learner strategy based on empirical estimates of shattering probabilities, and present the weak learner aggregation procedure in \cref{alg:main_exp_version}.Last, we extend our results to a censored model in which the learner does not receive feedback if it abstains in \cref{sec:censored_model} and conclude in \cref{sec:conclusion}.

\section{Preliminaries}\label{sec:framework}

\paragraph{Abstention learning setup.} Throughout this paper, we denote by $\Xcal$ the instance domain and let $\Fcal \subseteq \{0,1\}^{\Xcal}$ be a class of measurable functions. We consider the sequential online learning problem with adversarial corruptions,  where the learner may additionally abstain as introduced in \cite{goel2023adversarial}. 
Formally, the learning protocol proceeds as follows. Initially, an adversary fixes a distribution $\mu$ over $\Xcal$ together with labeling function $f^\star \in \Fcal$. 
The learner does not observe $f^\star$ nor $\mu$.
Then, at each iteration $t\in[T]$:

\begin{enumerate}
    \item The adversary first decides whether or not to corrupt this round by selecting $c_t\in\{0,1\}$.
    \item If $c_t=1$, the adversary selects an arbitrary instance $x_t\in\Xcal$. Otherwise, if $c_t=0$, the new instance is sampled $x_t\sim\mu$ independently from the past. 
    \item The learner observes $x_t$ and outputs $\hat y_t\in\{0,1,\perp\}$, where $\bot$ denotes an abstention.
    \item Finally, the learner observes the (clean) label $y_t = f^\star(x_t)$.
\end{enumerate}

We emphasize that this setup corresponds to a realizable setting: the values $y_t$ are always consistent with the fixed function $f^\star\in\Fcal$ even on corrupted rounds. 
In words, we consider \emph{mild} corruptions which only affect the instance $x_t$ but not the labelings.

\begin{remark}
    As discussed within \cite{goel2023adversarial}, other models of feedback are possible. For instance, we will also consider a \emph{censored} variant of this setup in \cref{sec:censored_model}, where the learner only observes label $y_t$ in step 4 when it has made a prediction i.e. $\hat{y}_t\neq \perp$.
\end{remark}

If the learner makes a prediction $\hat y_t\in\{0,1\}$, it incurs a misclassification cost if it makes a mistake: $ \hat y_t\neq y_t$. Conversely, if the learner abstains $\hat y_t=\perp$, it incurs an abstention cost if the round was not corrupted: $c_t= 0$. We therefore focus on the misclassification error and abstention error on all $T$ iterations, defined by
\begin{equation*}
    \textsc{MisErr}\coloneqq \sum_{t=1}^T \1\!\left[\hat y_t\notin \{y_t,\perp\}\right] \quad\text{and}\quad \textsc{AbsErr} \coloneqq \sum_{t=1}^T \1\!\left[c_t=0 \wedge \hat{y}_t=\perp\right].
\end{equation*}
In words, the learner aims to make few mistakes on the rounds where it predicts to minimize its misclassification error, while not abstaining too often on non-adversarial rounds to minimize its abstention error. Importantly, by construction, the learner is allowed to abstain on adversarially injected inputs for free. This allows for an arbitrary number of corruption rounds (e.g., $T/2$ corruptions) without incurring a linear penalty in the objective, as opposed to the abstention model \citep{Chow1957}, where there is a small but fixed cost on every abstention.

\comment{
\begin{algorithm}[htb]
\caption{Distribution-Free Abstention Learning}
\label{alg:protocol_abstention}
\DontPrintSemicolon
Adversary fixes an unknown distribution $\mu$ over $\Xcal$ and a labeling function $f^\star \in \Fcal$

\For{$t = 1,\dots,T$}{
  Adversary decides whether to inject an adversarial instance ($c_t=1$) or not ($c_t=0$)
  
  \lIf{$c_t=1$}{
    Adversary selects any instance $x_t \in \Xcal$
  }
  \lElse{
    $x_t \sim \mu$ is sampled independently from the history
  }
  
  Learner observes $x_t$ and outputs $\hat{y}_t \in \{0,1,\perp\}$ (where $\perp$ denotes abstention)
  
  Learner observes the label $y_t = f^\star(\hat{x}_t)$.\;
}
\end{algorithm}
}

\paragraph{Oblivious and adaptive adversaries.} 
We consider two different adversary models: \emph{oblivious} adversaries which fix in advance their corrupted instances, and \emph{adaptive} adversaries which may select corruptions adaptively during the learning procedure.

\begin{definition}[Oblivious and adaptive adversaries]
    An adversary is \emph{oblivious} if it fixes in advance corruption times $\{t: c_t=1\}$ together with their corrupted instances $(x_t)_{t:c_t=1}$, without adapting to the learner's output or clean instance samples.
    
    An adversary is \emph{adaptive} if its decision at time $t\in[T]$ may depend on all previous learner's outputs and clean instance samples.
\end{definition}

\paragraph{Complexity notion for the function class.}
In the sequential problem without abstentions---when the instances are i.i.d., known as the PAC learning setting---learnable function classes exactly correspond to those with finite VC dimension \citep{vapnik1971uniform,vapnik1974theory,valiant1984theory}.

\begin{definition}[VC dimension] Let $\Fcal:\Xcal\to\{0,1\}$ be a function class on $\Xcal$. We say that $\Fcal$ \emph{shatters} a set of points $x_1,...,x_m\in\Xcal$ if
for any choice of labels $\epsilon\in\{0,1\}^m$ there exists $f\in\Fcal$ such that $f(x_i)=\epsilon_i$ for all $i\in[m]$. The VC dimension of $\Fcal$ is the size of the
largest shattered set.
\end{definition}

Accordingly, we aim to achieve learning guarantees in the abstention learning setting when the function class $\Fcal$ has finite VC dimension.

\paragraph{Further notation.} 
Next, we introduce the notion of \emph{shattering probability}, which is a central quantity appearing in the algorithm developed by \cite{goel2023adversarial} in the case when the distribution $\mu$ of uncorrupted samples is known. Formally, the \emph{$k$-shattering probability} of a given function class $\Fcal:\Xcal\to\{0,1\}$ corresponds to the probability of $k$ i.i.d.\ points sampled from $\mu$ being shattered:
\begin{equation*}
    \rho_k(\Fcal,\mu) := \Pbb_{x_1,\ldots,x_k\overset{iid}{\sim}\mu} [\{x_1,\ldots,x_k\} \text{ is shattered by }\Fcal].
\end{equation*}
For instance, $\rho_1 (\Fcal,\mu)$ is the probability of an uncorrupted sample $x\sim \mu$ that can be labeled by both $0$ and $1$ by functions in $\Fcal$. Equivalently, $\rho_1 (\Fcal,\mu)$ is the probability of this sample falling in the \emph{disagreement region} $D(\Fcal) :=
    \set{x\in\Xcal:\exists f,g\in\Fcal, f(x)\neq g(x)}$.
Our algorithm uses observed data points to learn and reduce the hypothesis class. Given a dataset $A=\{(x_i,y_i)\}_i$, define the \emph{reduction} of $\Fcal$ to $A$ as
\begin{equation*}
    \Fcal\cap A := \{ f \in \Fcal : \forall i,\; f(x_i)=y_i\}.
\end{equation*}
When $A$ consists of a single labeled example $(x,y)$, we write $\Fcal^{x\to y}$ as shorthand for $\Fcal\cap{\{(x,y)\}}$. 
Last, by convention $\min\emptyset=+\infty$.

\section{Main Results}\label{sec:main results}

Our main result for oblivious adversaries is that sublinear misclassification and abstention errors can be achieved by an algorithm \algoname, which combines weak learners using a boosting abstention strategy. Its description is deferred to \cref{subsec:overview_algorithm}.

\begin{theorem}\label{thm:main_oblivious_upper_bound}
    Let $\Fcal$ be a function class with VC dimension $d$ and a horizon $T\geq 1$. Then, for any $\alpha\in[0,1/3]$, there is a choice of parameters for \algoname that achieves the following learning guarantee against any oblivious adversary:
    \begin{equation*}
        \textsc{MisErr} \lesssim T^{3\alpha} \quad \text{and} \quad \Ebb[\textsc{AbsErr}] \lesssim d^2 \log^{5/3}(T) \cdot T^{1-\alpha}.
    \end{equation*}
\end{theorem}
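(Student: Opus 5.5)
We will prove the statement by building \algoname in two layers: weak learners that each would work if they had a few clean samples from $\mu$, and a boosting-style aggregator that runs over them. The guiding template is the known-$\mu$ algorithm of \cite{goel2023adversarial}. That algorithm maintains the version space $\Fcal_t=\Fcal\cap\{(x_s,y_s)\}_{s<t}$ and, at each level $k\le d$, compares the shattering probabilities $\rho_k(\Fcal_t^{x_t\to 0},\mu)$ and $\rho_k(\Fcal_t^{x_t\to 1},\mu)$ with $\rho_k(\Fcal_t,\mu)$. It predicts the label whose restriction keeps most of the $k$-shattering mass, and abstains only when both restrictions keep a large fraction. The potential $\rho_k(\Fcal_t,\mu)$ then shrinks by a constant factor at each mistake, which charges mistakes to $\log(1/\rho_k)$. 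Clean instances on which the algorithm abstains are rare, because $\rho_{k+1}$ controls how often both restrictions keep large $\rho_k$ mass.

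\emph{Step 1 (weak learner).} Since $\mu$ is unknown, we replace $\rho_k(\cdot,\mu)$ by U-statistic estimates built from instances observed at a small set of pre-specified times $S\subset[T]$ with $|S|=m$. If all of these times are clean, the instances are i.i.d.\ from $\mu$. We use such a weak learner $W_S$ only when its estimates are accurate to a relative precision of order $2^{-\Ocal(d)}$, or to an additive $T^{-\alpha}$ scale. At coarser scales we treat the class as effectively shattered and fall back to abstaining. We then redo the potential argument of \cite{goel2023adversarial} with these approximate shattering probabilities. The goal is that, if $S$ contains only clean rounds, $W_S$ makes $\lesssim T^{\alpha}$-type mistakes per level and abstains on at most $\tilde\Ocal(d\,T^{1-\alpha})$ clean rounds in expectation. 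The oblivious adversary matters here: the corruption pattern is fixed in advance, so a uniformly random choice of $S$, or a family of disjoint candidate blocks, contains a clean block with constant probability whenever a constant fraction of rounds are clean. If few rounds are clean, abstaining everywhere already costs little abstention error.

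\emph{Step 2 (aggregation).} We run $N\approx T^{\alpha}$ weak learners on candidate sets $S_1,\dots,S_N$ and combine them. The aggregator predicts a label only when all currently active weak learners that predict agree, and abstains otherwise. After each revealed label it deactivates, or down-weights exponentially, every learner that erred. We must bound two things. Misclassification: each aggregate mistake is a mistake of all the learners that predicted, and each learner is charged a bounded number $\lesssim T^{2\alpha}$ of errors before elimination, so summing over $N$ learners gives $T^{3\alpha}$. Abstention: an aggregate abstention on a clean round is either an abstention of the good learner $W_{S^\star}$, or a disagreement in which some learner errs and is then charged. This gives $\Ebb[\textsc{AbsErr}]\lesssim \Ebb[\textsc{AbsErr}(W_{S^\star})]+T^{3\alpha}$, which is at most $d^2\log^{5/3}(T)\,T^{1-\alpha}$ for $\alpha\le 1/3$ after tuning $m$ and the precision.

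\emph{Main obstacle.} The delicate step is Step 1's robustness analysis. The U-statistic estimators must be accurate \emph{simultaneously} for all data-dependent restricted classes $\Fcal_t^{x_t\to y}$. These classes depend on later instances, which are independent of the samples at $S$ only when the adversary is oblivious; this is exactly where obliviousness is used. A uniform-convergence bound over the $\Ocal(T^{d})$ reachable version spaces, which is finite by Sauer's lemma applied to observed points, together with a union bound, should supply the $\log T$ factors. Tracking how the multiplicative errors compound over the $d$ levels of the potential argument is what produces the $d^2$ and $\log^{5/3}T$ terms, and this is where we expect most of the work.
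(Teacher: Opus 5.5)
\textbf{There is a genuine gap in how you obtain a clean sample set.} Resolving shattering probabilities at scale $\epsilon\approx T^{-\alpha}$ needs $|S|\gtrsim d^2/\epsilon$, i.e.\ roughly $T^{\alpha}$ samples (the paper uses $mN$ with $N=\ceil{2000d^2/\epsilon}$).

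Suppose an oblivious adversary corrupts a constant fraction $p$ of the rounds. A uniformly random $S$ of that size is then clean with probability $(1-p)^{|S|}=e^{-\Omega(T^{\alpha})}$. So $T^{\alpha}$ random candidates almost surely contain no clean one, and you would need $e^{\Omega(T^{\alpha})}$ of them. Fixed disjoint blocks do no better. The adversary can corrupt a constant fraction of every block using only $\Ocal(T^{2\alpha})=o(T)$ rounds. Almost all rounds then stay clean, so your fallback does not apply, yet no weak learner is good. There is also no accuracy guarantee for U-statistics computed on a set containing a constant fraction of adversarial points.

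The missing idea is to use obliviousness differently. The set $\Tcal^\star$ of the first $mN$ clean rounds is a \emph{fixed}, unknown subset of $[T]$. The paper therefore enumerates \emph{all} $\Tcal\subseteq[T]$ with $|\Tcal|\le mN$, together with labelings $z$. This gives $L=\sum_{k\le mN}2^k\binom{T}{k}=\exp(\tilde\Theta(T^{\alpha}))$ experts. One of them is $\textsc{WL}(\Tcal^\star,(y_t)_{t\in\Tcal^\star})$, which makes no error of either kind before $\max\Tcal^\star$.

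Your aggregator cannot handle such a family. Unanimity plus elimination after $M$ errors only yields bounds of order $L\cdot M$. Concretely, at every round $t$ the experts $\textsc{WL}(\{t\},z)$ with $z_t=0$ and with $z_t=1$ both predict at $t$ with a clean record. Your rule then abstains on every round, and no rule based only on past mistakes can break such ties.

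What is needed is an aggregator with only $\log L$ overhead in \emph{both} errors, which is \cref{thm:combining_weak_lerners}. It ignores each expert's first $s$ predictions, in $\ceil{\log L}$ layers, with $s\le s_{\max}$ chosen by weighted majority. This gives $\textsc{MisErr}\lesssim (M+\log T)\,T\log L/s_{\max}$ and $\textsc{AbsErr}\le s_{\max}\ceil{\log L}+\min_{i:M_i<M}A_i$. The exponent $3\alpha$ comes from $\log L\approx T^{\alpha}$ times $T/s_{\max}\approx T^{2\alpha}$, not from $T^{\alpha}$ learners times $T^{2\alpha}$ errors each.

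\textbf{Step~1 also lacks its key ingredient.} The levels the weak learner must use involve $\rho_k$ as small as $\epsilon^{k}$. A generic U-statistic variance bound ($\mathrm{Var}\le \frac{k}{|S|}\rho_k$) gives relative accuracy only when $\rho_k\gtrsim k/|S|$. So ``additive $T^{-\alpha}$ accuracy, otherwise abstain'' cannot certify $\rho_{k+1}/\rho_k\lesssim\epsilon$, which \cref{lemma:goel_main_lemma} needs to make the per-round abstention probability $\Ocal(\epsilon)$.

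The paper's fix is \cref{lemma:calculation_variance}. The variance of the estimator of $\rho_k$ is controlled by the lower-order shattering probabilities. When these decay geometrically at rate $\eta=k^2/N$ below level $k$, $\rho_k$ is estimated to relative accuracy as soon as $\rho_k\gtrsim\eta\,\rho_{k-1}$. This is combined with three further pieces:
\begin{itemize}
\item the ratio-based level $k_t=\max\{k':\rho^{\Scal}_l(\Fcal_t)>\epsilon\,\rho^{\Scal}_{l-1}(\Fcal_t)\ \forall l\le k'\}$;
\item medians over $m\approx 8\log(dT/\epsilon)$ groups;
\item an induction over levels.
\end{itemize}
Together these give accurate estimates exactly where they are used and $\rho_{k_t+1}(\Fcal_t,\mu)\le 1.2\,\epsilon\,\rho^{\Scal}_{k_t}(\Fcal_t)$. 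They also give fewer than $5d^2\log(1/\epsilon)$ mistakes and $\Ebb[\textsc{AbsErr}]\le 18\epsilon T$ (\cref{thm:iid_examples_v2}).

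You do not need a Sauer-type union over $T^d$ version spaces to use obliviousness. With $\texttt{update}=\texttt{always}$, the at most $3T$ classes encountered depend only on data after $\max\Tcal$. They are therefore independent of the estimation samples, and a plain union bound suffices.

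Finally, even granting your Step~2, $\Ebb[\textsc{AbsErr}(W_{S^\star})]+T^{3\alpha}$ is not $\lesssim T^{1-\alpha}$ once $\alpha>1/4$.
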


In particular, the family of algorithms \algoname provide a polynomial tradeoff between the abstention and misclassification errors, as depicted in the green region in \cref{fig:tradeoff results}. 

Naturally, one may wonder if such a polynomial tradeoff is necessary---for instance, can we achieve $\text{poly}(d\log T)$ misclassification error with $T^{1-\Omega(1)}$ abstention error, or inversely? \cite{edelman2026reliableabstention} showed that any algorithm must incur either $\Omega(\sqrt T)$ misclassification or $\Omega(\sqrt T)$ abstention error. We complete this result with the following lower bound tradeoff between abstention and misclassification errors, which shows that polynomial tradeoffs are necessary even for learning function classes with VC dimension one.

\begin{restatable}{theorem}{lowerbound}\label{thm:lowerbound}
    There is a VC-1 function class $\Fcal$ such that the following holds. For any $T\geq 1$ and any learner, for any $\alpha\in[0,1]$, there exists an oblivious adversary on $\Fcal$ for which either
    \begin{equation*}
        \Ebb[\textsc{MisErr}]\geq T^\alpha/32 \quad\text{or} \quad \Ebb[\textsc{AbsErr}]\geq T^{1-\alpha}/2.
    \end{equation*}
\end{restatable}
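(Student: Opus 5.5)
The plan is to use Yao's principle. I will build a randomized family of oblivious adversaries over a single VC-1 class. Under this family, every learner has the same observation distribution in two worlds. In one world the learner's abstentions are free, and in the other they are charged. If the learner predicts often it then pays in misclassifications in the first world, and if it abstains often it pays in abstentions in the second. I will first try $\Fcal$ the class of thresholds $x\mapsto\1[x\ge a]$ on a continuum (VC dimension $1$). The parameters will be $m\approx T^{\alpha}$, the number of "informative" rounds, and $p\approx T^{\alpha-1}$, the $\mu$-mass of the region on which $\Fcal$ still disagrees.

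\textbf{Construction.} Let $\mu=(1-p)\delta_{z_0}+p\,\nu$, where $z_0$ lies far to the left, so every threshold in play labels it $0$ and these rounds are uninformative. Here $\nu$ is continuous on an interval $I$. The target $f^\star$ is a random threshold inside $I$.
\begin{itemize}
\item \emph{World $\mathsf{A}$:} there are no corruptions. Clean points land in $I$ about $pT\approx T^{\alpha}$ times.
\item \emph{World $\mathsf{B}$:} the adversary corrupts $T^{\alpha}$ random rounds with points in $I$, and the clean distribution puts (almost) no mass on $I$.
\end{itemize}
The positions and labels of the corrupted points must be generated obliviously so that their joint law matches the clean process of world $\mathsf{A}$. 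In particular, the new point must fall in the current disagreement region with probability $1/2$ and receive a label that is a fair coin given the past. In world $\mathsf{B}$ every point in $I$ is corrupted, so abstaining there is free. I will choose the corrupted points in world $\mathsf{B}$ as a random binary-search path, that is, successive midpoints of the current version space. This makes a constant fraction of those $T^{\alpha}$ points lie in the disagreement region, each with a uniformly random label.

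\textbf{Counting.} Let $N$ be the expected number of rounds on which the learner predicts on a point of $I$ that lies in the current disagreement region. In world $\mathsf{B}$ each such prediction errs with probability $1/2$, because the label is a fresh fair coin independent of everything the learner has seen. Hence $\Ebb[\textsc{MisErr}]\ge N/2$. In world $\mathsf{A}$ the learner's behaviour has the same law, since the observations are identically distributed. Every abstention on a point of $I$ is then on a clean round, so $\Ebb[\textsc{AbsErr}]\gtrsim(\#\text{informative rounds})-N$. Choosing $\alpha$-dependent sizes so that world $\mathsf{B}$ has at least $T^{\alpha}/16$ informative rounds and world $\mathsf{A}$ at least $T^{1-\alpha}$, then rescaling, gives the dichotomy with the constants $1/32$ and $1/2$.

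\textbf{Main obstacle.} The hard step is making the two worlds genuinely indistinguishable while keeping polynomially many clean points in the disagreement region in world $\mathsf{A}$. With i.i.d.\ continuous $\nu$ and thresholds, the version space shrinks so quickly that only about $\log T$ clean points fall in the disagreement region, whereas the binary-search corruptions keep hitting it. If this mismatch breaks the plan, my first fallback is to replace thresholds by a VC-1 class with many fresh atoms, for instance a star or tree-structured class. There each clean sample from a uniform $\mu$ over $n\approx T^{1-\alpha}$ atoms is a new point in the disagreement region, and the adversary plants $T^{\alpha}$ positively labelled atoms at uniformly random times. I will also allow the two worlds to differ in $\mu$, keeping only the \emph{distribution of the observed sequence} identical. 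The asymmetric exponents $T^{\alpha}$ and $T^{1-\alpha}$ should come from balancing the number of atoms against the number of planted positives.
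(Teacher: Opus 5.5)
\textbf{There is a genuine gap: the proposal does not yet prove the theorem.}

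\emph{The threshold route cannot work.} Consider the learner that abstains exactly on $D(\Fcal_t)=(u_t,v_t)$, the interval between the largest $0$-labelled and the smallest $1$-labelled point seen so far, and predicts the forced label elsewhere. It never errs. Corruptions only shrink $D(\Fcal_t)$, so a clean point can land in $D(\Fcal_t)$ only if it is a strict record among the clean $0$-labelled points or among the clean $1$-labelled points. Hence $\Ebb[\textsc{AbsErr}]=O(\log T)$ against any adversary, without knowing $\mu$. No polynomial lower bound holds for thresholds, so the ``mismatch'' you noticed is fatal rather than technical.

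\emph{The two-world counting is structurally symmetric.} Whether a round is informative (its point lies in $D(\Fcal_t)$) is determined by the observed history. So if the observation laws in $\mathsf{A}$ and $\mathsf{B}$ coincide, the expected number $K$ of informative rounds and the conditional law of their labels are the same in both worlds. You then get only $\Ebb[\textsc{MisErr}]\ge K/4$ in $\mathsf{B}$ or $\Ebb[\textsc{AbsErr}]\ge K/2$ in $\mathsf{A}$. Having ``$T^{\alpha}/16$ informative rounds in $\mathsf{B}$ and $T^{1-\alpha}$ in $\mathsf{A}$'' contradicts your own indistinguishability requirement.

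The fallback does not supply the missing mechanism either. In a star only one leaf can be labelled $1$. In the tree class your planted positives must form a single chain. A single fixed $\mu$ cannot keep producing fresh clean points in the disagreement region as that chain is revealed, since every point outside the subtree of the deepest revealed positive becomes determined.

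\emph{The missing idea} is to turn the abstention budget into a deadline, and to build the adversary against the given learner by contrapositive. Assume $\Ebb[\textsc{AbsErr}]\le A$ against every oblivious adversary, and use the path class of an $N$-ary tree with $N=4T$.
\begin{enumerate}
\item Start from a fixed corrupted prefix ending at node $x^\star_{i-1}$. Apply the hypothesis to the adversary that continues with clean samples uniform over the children of $x^\star_{i-1}$.
\item Some round $k_i$ within the next $2A$ rounds has $\Pbb[\hat y_{k_i}=\perp]\le 1/2$, and its point is a fresh child with probability at least $3/4$.
\item Freeze a realization of these rounds, now treated as corruptions, on which the learner predicts at $k_i$ with probability at least $1/4$.
\item Set $x^\star_i$ to be either $x_{k_i}$ or an unseen child, whichever makes $y_{k_i}$ contradict the learner with probability at least $1/8$. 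All earlier labels stay consistent.
\end{enumerate}
Each forced mistake costs only $2A$ rounds, so there are $\lfloor T/(2A)\rfloor$ layers and $\Ebb[\textsc{MisErr}]\ge T/(32A)$ on the final, fully corrupted sequence. Taking $A=T^{1-\alpha}/2$ gives the dichotomy. This is where the asymmetry between $T^{\alpha}$ and $T^{1-\alpha}$ comes from.

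Each intermediate adversary has its own $\mu_i$, and the final adversary has no clean rounds, so no single $\mu$ has to serve many layers. A Yao-style version is also possible. Within each layer, draw the commit time uniformly in a window of length $4A$, and flip a fair coin for whether the committed fresh child lies on the target path. But it needs exactly this layer mechanism, and it relies only on indistinguishability up to the learner's decision at the commit time, not on equality of full observation laws.
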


This lower bound tradeoff is depicted in the red region within \cref{fig:tradeoff results}.
While this leaves some polynomial gap between the guarantees of \algoname from \cref{thm:main_oblivious_upper_bound}, for VC-1 classes, this gives a complete characterization of the misclassification/abstention error landscape:

\begin{remark}
    For the special case of VC-1 function classes, \cite{goel2023adversarial} provided a family of algorithms which tuned for some parameter $\alpha\in[0,1]$\footnote{using their Algorithm 2 with parameter $T^{1-\alpha}$} achieves $\Ebb[\textsc{MisErr}]\leq 2T^\alpha$ and $\Ebb[\textsc{AbsErr}]\leq T^{1-\alpha}\log T$. Together with \cref{thm:lowerbound}, this gives a tight characterization of achievable misclassification and abstention error for axis-aligned rectangles (see \cref{sec:reduction_dimension_examples} for a definition) or VC-1 classes, up to logarithmic factors, depicted in \cref{fig:VC_1_tradeoffs}.
\end{remark}

\begin{figure}[t] 
    \centering % 1. Centers the figure on the page
    
    \begin{tikzpicture}[scale=3.2, >=latex, line cap=round]
        
        % --- Definitions ---
        \def\xaxislabel{Abstention Error}
        \def\yaxislabel{Misclassification Error}
    
        % --- Regions ---
        
        % 1. Red Hatched Region 
        % \fill[pattern=north east lines, pattern color=red!70] 
        %     (0,0) -- (1,0) -- (0,1) -- cycle;
        \fill[red!65] (0,0) -- (1,0) -- (0,1) -- cycle;
            
        % 2. Green Hatched Region 
        % \fill[pattern=north east lines, pattern color=green] 
        %     (0, 1) -- (1, 1) -- (1, 0) -- cycle;
        \fill[forestgreen] (0, 1) -- (1, 1) -- (1, 0) -- cycle;
    
        % --- Lines ---
        
        % Red Diagonal Line
        \draw[red, ultra thick] (0,1) -- (1,0);
        
        % Green Steep Line
        % \draw[green, ultra thick] (0.75, 1) -- (1,0);
        
        % Dashed Vertical Line
        % \draw[dashed, thick, black] (0.75, 0) -- (0.75, 1);
    
        % --- Axes and Box ---
        \draw[thick] (0,0) rectangle (1,1);
        
        % Ticks
        \foreach \x in {0.25, 0.5, 0.75}
            \draw[thick] (\x, 0) -- (\x, 0.02);
        \foreach \y in {0.25, 0.5, 0.75}
            \draw[thick] (0, \y) -- (0.02, \y);
    
        % --- Labels ---
        % --- Region Labels (Added) ---
        
        % Label "1" in Green Region (approximate centroid of upper right triangle)
        \node at (0.66, 0.66) { \textbf{1}};
        
        % Label "2" in Red Region (approximate centroid of lower left triangle)
        \node at (0.33, 0.33) { \textbf{2}};
        % Y-Axis Labels
        \node[left] at (0, 1) { $T$};
        % Vertical Label
        \node[rotate=90] at (-0.2, 0.5) {\yaxislabel};
        
        % X-Axis Labels
        \node[below] at (0,0) { $1$};
        \node[below] at (0.5, 0) { $T^{1/2}$};
        % \node[below] at (0.75, 0) { $T^{3/4}$};
        \node[below] at (1, 0) { $T$};
        % Moved down slightly to -0.25 so it doesn't hit the numbers
        \node[below] at (0.5, -0.15) {\xaxislabel};
    
    \end{tikzpicture}
    
    % 2. Add Caption
    \caption{Tradeoffs between abstention error and misclassification error for VC-1 function classes and axis-aligned rectangles. The upper bound (region \textbf{1}) from \cite{goel2023adversarial} applies to general adaptive adversaries, and the lower bound (region \textbf{2}) shares the same lower bound result (\cref{thm:lowerbound}) with \cref{fig:tradeoff results}. The plot is displayed in log-scale.} 
    \label{fig:VC_1_tradeoffs}
\end{figure}
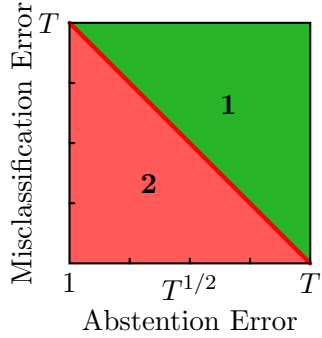

Last, we turn to the case of adaptive adversaries. At the high-level, this setting is significantly more challenging since the adversary may interfere with statistical estimations carried by the learner. Nevertheless, we show that \algoname can still achieve learning guarantees for useful function classes with additional structure. Specifically, in addition to having finite VC dimension, we will focus on function classes with finite so-called \emph{reduction dimension}, defined as follows:
\begin{definition}
\label{def:reduction_dimension}
    Let $l\geq 1$. We say that a function class $\Fcal:\Xcal\to\{0,1\}$ has $l$-reduction dimension $D_l$ if for any $n\geq D_l$, and any $x_1,\ldots,x_n\in\Xcal$:
    \begin{equation*}
        \abs{ \set{\Fcal\cap A|_{\{x_1,\ldots,x_n\}} : A\subseteq \Xcal\times\{0,1\}, |A|\leq l }} \leq n^{D_l}.
    \end{equation*}
\end{definition}
In words, a function class $\Fcal$ has finite $l$-reduction dimension if the number of possible function classes that can be obtained by reducing $\Fcal$ to functions consistent with $l$ labeled datapoints grows polynomially with the number $n$ of test points. This polynomial growth is reminiscent of the polynomial growth $\Ocal(n^d)$ of the number of different labelings induced by a function class with VC dimension $d$ on $n$ test points, known as Sauer-Shelah's lemma \citep{sauer1972density,shelah1972combinatorial}.

Of particular interest, linear classifiers in $\Rbb^d$---$\Fcal^d _{\text{lin}}:=\{x\in\Rbb^d \mapsto \1[a^\top x\geq b]: a\in\Rbb^d, b\in\Rbb\}$---have finite reduction dimension: we show in \cref{lemma:reduction_dim_linear_sep} that they have $l$-reduction dimension $D_l=\Ocal(d^2 l)$. This result uses links between oriented matroids and algebraic geometry together with Warren's theorem \citep{warren1968lower}, which bounds the number of topological components defined by polynomial equations. We refer to \cref{sec:reduction_dimension_examples} for further examples.

For function classes which have finite $\Ocal(d\log T)$-reduction dimension, we show that \algoname essentially achieves the same learning guarantee as in \cref{thm:main_oblivious_upper_bound} for adaptive adversaries.

\begin{theorem}\label{thm:main_adaptive_upper_bound}
    Let $\Fcal$ be a function class with VC dimension $d$ and $\ceil{5d^2\log T}$-reduction dimension $D$, and a horizon $T\geq 1$. Then, for any $\alpha\in[0,1/3]$, there is a choice of parameters for \algoname that achieves the following learning guarantees against any adaptive adversary:
    \begin{equation*}
        \textsc{MisErr} \lesssim T^{3\alpha} \quad \text{and} \quad  \Ebb[\textsc{AbsErr}] \lesssim d^{2} (D\log D+\log T)^{2/3} \log(T) \cdot T^{1-\alpha}.
    \end{equation*}
\end{theorem}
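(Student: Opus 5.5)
We plan to reuse the architecture behind \cref{thm:main_oblivious_upper_bound} and change only the concentration step where obliviousness was used. In \algoname, weak learners are indexed by short collections of pre-specified ``sample times''. Each weak learner uses the instances observed at those times, together with labeled reductions $\Fcal\cap A$ built from past data with $|A|\le l:=\lceil 5d^2\log T\rceil$. From these it forms U-statistic estimates of the shattering probabilities $\rho_k(\Fcal\cap A,\mu)$, and it abstains on $x_t$ according to the estimated drop $\rho_{k}(\Fcal^{x_t\to y},\mu)$ versus $\rho_k(\Fcal,\mu)$, as in the known-$\mu$ algorithm of \cite{goel2023adversarial}. A boosting layer then aggregates the weak learners.

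The misclassification bound $\textsc{MisErr}\lesssim T^{3\alpha}$ should carry over verbatim. The argument there is deterministic: every mistake shrinks a potential, namely the estimated shattering mass, by a constant factor, and the number of levels/epochs is controlled by the parameter choice. It uses only realizability and the boosting rule, not the adversary model.

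The abstention bound is where adaptivity enters. In the oblivious proof, at least one weak learner has clean sample times whose instances are i.i.d.\ from $\mu$ and independent of the reductions being evaluated. Concentration of the U-statistics at the relevant classes $\Fcal\cap A$ then gives estimates accurate to within the needed (exponential in $d$) precision, and a union bound over $T$ rounds suffices. Against an adaptive adversary, the sets $A$ (hence the classes being evaluated) may depend on the clean samples themselves, which breaks this independence.

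Our fix is to make the estimation uniform over all classes of the form $\Fcal\cap A$ with $|A|\le l$. Conditioned on the $n$ clean sample points $x_1,\dots,x_n$ used by a weak learner, the U-statistic for a class $\Fcal\cap A$ depends only on the restriction $\Fcal\cap A|_{\{x_1,\dots,x_n\}}$. By \cref{def:reduction_dimension} with $l$-reduction dimension $D$, there are at most $n^{D}$ such restrictions. The union bound therefore pays a factor $n^D$ rather than one per adaptively chosen class.

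Concretely, we would use a symmetrization/ghost-sample argument. We draw the clean points, consider the $\le (2n)^D$ distinct restricted classes on the doubled sample, and apply a Hoeffding-type bound for U-statistics (via Hoeffding's decomposition into averages of independent blocks of size $k\le d+1$) to each. This yields uniform deviation
\[
\sup_{|A|\le l}\bigl|\hat\rho_k(\Fcal\cap A)-\rho_k(\Fcal\cap A,\mu)\bigr|\lesssim \sqrt{\tfrac{k\,(D\log n+\log T)}{n}}
\]
with probability $1-1/T$. Before the symmetrization we must check that the adversary's adaptive choice of which times to corrupt does not bias which samples are clean at the pre-specified times. Times are fixed in advance and each clean draw is fresh, so conditioning on $c_t=0$ keeps $x_t\sim\mu$; a martingale/optional-stopping remark handles the filtration.

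Finally, we redo the parameter tuning of \cref{thm:main_oblivious_upper_bound}, inflating the sample size per weak learner by the factor $(D\log D+\log T)$ instead of $\log T$. The $\log D$ comes from solving $n\gtrsim D\log n$. We expect this to turn the $\log^{5/3}T$ factor into $(D\log D+\log T)^{2/3}\log T$, giving the claimed abstention bound. We must also ensure that $|A|\le 5d^2\log T$ covers every reduction the algorithm forms, since reductions arise from at most $O(d\log T)$ levels each adding $O(d)$ points.

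The main obstacle is this uniform U-statistic concentration under adaptivity, in particular justifying that the relevant classes are determined by restrictions to the sample, so that the reduction-dimension count applies rather than an uncountable union.
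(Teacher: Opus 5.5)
There is a genuine gap, in fact two.

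First, your concentration bound is additive, $\sup_{|A|\le l}|\hat\rho_k(\Fcal\cap A)-\rho_k(\Fcal\cap A,\mu)|\lesssim\sqrt{k(D\log n+\log T)/n}$. The weak learner, however, needs \emph{relative} $(1\pm0.2)$ accuracy for $\rho_k(\Fcal_t,\mu)$ and $\rho_k(\Fcal_t^{x_t\to y},\mu)$ at levels as small as $\epsilon^{k}$, with $k$ up to $d$. This is what lets the rule defining $k_t$ certify $\rho_{k_t+1}(\Fcal_t,\mu)\lesssim\epsilon\,\rho_{k_t}(\Fcal_t,\mu)$, so that \cref{lemma:goel_main_lemma} bounds the per-round probability of a bad abstention by $O(\epsilon)$.

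An additive $n^{-1/2}$ error forces $n\gtrsim\epsilon^{-2k}$. Even Bernstein through Hoeffding's block decomposition only gives a variance proxy $\asymp k\rho_k/n$, hence $n\gtrsim\epsilon^{-k}$. With $\epsilon\approx T^{-\alpha}$ this is $T^{\Omega(\alpha d)}$ clean samples per weak learner. Since $\log L\asymp n\log T$ enters the \textsc{Boosting} misclassification bound, no retuning gives the stated $d$-independent exponents.

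The missing idea is \cref{lemma:calculation_variance}. Exploiting the overlap structure of the U-statistic, the variance of $\hat\rho_k^S$ with $|S|=N$ is of order $\frac{k^2}{N}\rho_{k-1}\rho_k$ rather than $\frac{k}{N}\rho_k$, once lower-order shattering probabilities decay geometrically. The choice of $k_t$ enforces that decay inductively (\cref{lemma:inductive_concentration_rho_k}).

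Uniformity over adaptively chosen $A$ is then obtained by median-of-means (\cref{lemma:uniform_convergence_median}), not by symmetrizing the U-statistic. With $m$ independent blocks of $N$ samples, Chebyshev puts each block estimate in the window $\rho_k\pm2\sigma_k^N$ with probability at least $3/4$. The lifted class $S\mapsto\1_I[\hat\rho_k^S(\Fcal\cap A)]$ on $N$-point sets has VC dimension $O(D\log(ND))$, by exactly your restriction count. VC uniform convergence at constant accuracy $1/8$ for this class then shows that a majority of blocks land in their windows for all $|A|\le l$ simultaneously. The reduction dimension thus only inflates the number of blocks $m$, never the required per-block precision.

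Second, your justification that every queried class has $|A|\le\lceil 5d^2\log T\rceil$ (``levels each adding $O(d)$ points'') does not match the algorithm. Under the oblivious policy $\texttt{update}=\texttt{always}$, $\Fcal_t=\Fcal\cap\{(x_s,y_s):\max\Tcal<s<t\}$, so $|A|$ grows linearly in $t$ and the reduction-dimension count gives nothing.

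You must switch to $\texttt{update}=\texttt{restricted}$, shrinking $\Fcal_t$ only at misclassifications. The weak learner's mistake bound is deterministic: each mistake cuts the non-increasing quantity $\rho_{k_t}^\Scal(\Fcal_t)>\epsilon^{k_t}$ by a factor $0.9$. So there are fewer than $5d^2\log(1/\epsilon)$ updates. Every class $\Fcal_t,\Fcal_t^{x_t\to0},\Fcal_t^{x_t\to1}$ is then of the form $\Fcal\cap A$ with $|A|\le\lceil5d^2\log(1/\epsilon)\rceil$. The abstention analysis is unaffected, since it only involves the current $\Fcal_t$.

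Two of your points are correct: the misclassification bound carries over unchanged, and the estimators depend only on restrictions of the classes to the sample.
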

This shows that abstention learning is still possible for adaptive adversaries for structured VC function classes, with the same algorithmic ideas as for the oblivious case. For instance, for linear classifiers in $\Rbb^d$, \algoname achieves $\Ocal(T^{3\alpha})$ misclassification error and $\tilde \Ocal(d^{4.67} T^{1-\alpha})$ abstention error, and in particular, a $\tilde \Ocal(d^{4.67} T^{3/4})$ combined error for $\alpha=1/4$. In comparison, \cite{edelman2026reliableabstention} achieve $\tilde \Ocal(T^{2/3})$ combined error but only for the special case $d=2$. We believe the algorithmic ideas within \algoname can provide a useful starting point towards adaptively learning all VC classes:

\paragraph{Open question.} \textit{Can we achieve $\text{poly}(d)T^{1-\Omega(1)}$ misclassification and abstention error for any function class with VC dimension $d$, against adaptive adversaries?}

\vspace{5pt}

Finally, our results also naturally extend to the variant censored model in which the learner only observes the value $y_t$ at times when it makes a prediction. This require minor modifications of the algorithm to ensure that it can be run in the censored setting. We defer the description of this variant \textsc{C-AbstainBoost} to \cref{sec:censored_model}. It enjoys the same guarantee as \algoname for adaptive algorithms and slightly worse guarantees for the oblivious case for technical reasons.

\begin{theorem}\label{thm:censored_main}
    Let $\Fcal$ be a function class with VC dimension $d$ and a horizon $T\geq 1$. Then, in the censored model, for any $\alpha\in[0,1/3]$, there is a choice of parameters for \textsc{C-AbstainBoost} that achieves the following learning guarantee against any oblivious adversary:
    \begin{equation*}
        \textsc{MisErr} \lesssim T^{3\alpha} \quad \text{and} \quad \Ebb[\textsc{AbsErr}] \lesssim d^{10/3} \log^{7/3}(T) \cdot T^{1-\alpha},
    \end{equation*}
    and for another choice of parameters, the following guarantee against any adaptive adversary:
    \begin{equation*}
        \textsc{MisErr} \lesssim T^{3\alpha} \quad \text{and} \quad  \Ebb[\textsc{AbsErr}] \lesssim d^{2} (D\log D+\log T)^{2/3} \log(T) \cdot T^{1-\alpha},
    \end{equation*}
    where $D$ is the $\ceil{5d^2\log T}$-reduction dimension of $\Fcal$.
\end{theorem}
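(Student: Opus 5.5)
The plan is to reduce \cref{thm:censored_main} to the uncensored guarantees of \cref{thm:main_oblivious_upper_bound,thm:main_adaptive_upper_bound}. The key step is to locate exactly where \algoname uses labels on rounds where it abstains, and to replace those uses by labels that \textsc{C-AbstainBoost} can afford to buy. The uncensored algorithm uses labels in two ways. First, the weak learners reduce the class to $\Fcal\cap A$, where $A$ is a small set of labeled examples collected at a few pre-specified times. Second, the aggregation layer updates weights from observed outcomes.

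For the weak learners, \textsc{C-AbstainBoost} will simply predict, say $\hat y_t=0$, at the designated sampling times. This reveals $y_t$ at the cost of at most one misclassification per such time. There are only $\Ocal(d^2\log T)$-type many such times per weak learner, which matches the $\ceil{5d^2\log T}$ labels in the reduction-dimension hypothesis. Summed over the weak learners and epochs, this should stay within the $T^{3\alpha}$ misclassification budget after a mild retuning of the parameters.

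For the aggregation, I would make its feedback rely only on rounds where a prediction is made, plus randomized exploration. On a small random fraction $\eta$ of rounds, the learner forces a prediction to see the label. It then updates with importance-weighted estimates, as in label-efficient prediction, scaled by $1/\eta$. The weights track, for each weak learner, both its misclassifications and its abstentions. Both quantities are computable on exploration rounds, and misclassifications are also observed whenever the aggregate predicts. Exploration costs $\eta T$ misclassifications at most, which forces $\eta\lesssim T^{3\alpha-1}$. The importance weighting inflates the variance of the regret against the best weak learner by roughly a factor of $\eta^{-1/3}$ in the exponent. Re-optimizing the epoch length, the number of weak learners, and $\eta$ should then give the worse $d^{10/3}\log^{7/3}T$ dependence in the oblivious case.

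For the adaptive case, the uncensored analysis already pays via a union bound over all reductions $\Fcal\cap A$, using the reduction dimension $D$. Consequently the estimator need not be independent of the corruptions, and only the labels of the $\ceil{5d^2\log T}$ sampled points are needed. Those labels are obtained by forced predictions as above, so the same bound carries over unchanged.

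The main obstacle is the oblivious aggregation step. I must verify that the abstention-error bound for the boosting procedure survives importance-weighted feedback. The difficulty is that abstention error depends on the unobserved $c_t$. The original analysis likely compares abstentions against a good weak learner using its agreement with true labels, and that comparison now requires unbiased estimates built only from explored rounds. My first attempt would be a martingale (Freedman) concentration argument on the estimated losses, with range $1/\eta$. I would then check that the resulting extra $\sqrt{T/\eta}$ term, balanced against the epoch parameters, yields exactly the stated $d^{10/3}\log^{7/3}(T)T^{1-\alpha}$ rate.
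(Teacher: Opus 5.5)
There is a genuine gap at the weak-learner step, and this step is the crux of the censored extension. \algoname boosts over $L=T^{\Ocal(mN)}$ weak learners, one per guess $(\Tcal,z)$, because it cannot tell which weak learner is the good one. The labels a weak learner consumes are neither few nor at pre-specified times. Under $\texttt{update}=\texttt{always}$ it needs every label after $\max\Tcal$ until it stops updating. Under $\texttt{update}=\texttt{restricted}$ it must detect its own mistakes, i.e.\ see $y_t$ at every round where it predicts. Buying these labels by forced predictions for all weak learners amounts in general to predicting on most rounds, which is a linear cost. You cannot restrict to the good learner, since it is unknown. (Labels at the sampling times $\Tcal$ are not needed at all: $\rho_k^\Scal$ uses only instances, and $z$ is already a parameter.)

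The missing idea is to \emph{guess} the few labels that matter rather than buy them. The paper makes the update times and labels $(\Ucal,u)$, with $|\Ucal|\le M$, additional parameters. Then $\textsc{WL}(\Tcal,\Ucal,z,u)$ updates $\Fcal_{t+1}=\Fcal_t^{x_t\to u_t}$ iff $t\in\Ucal$ and never reads a label. This adds only $\Ocal(M\log T)$ to $\log L$, and boosting needs a single good expert. That expert has $\Tcal$ equal to the first $mN$ clean rounds and $\Ucal$ equal to the actual mistake times of $\textsc{WL}(\Tcal,z)$ under restricted updates, which it reproduces exactly (\cref{cor:iid_examples_censored}).

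This, and not aggregation variance, is also the source of the worse oblivious rate. The oblivious weak learner is now forced onto restricted updates, so the classes $\Fcal_t$ depend on $\Scal$ through the mistake times. The independence argument behind $m\ge 8\log(dT/\epsilon)$ therefore fails. The paper repairs it with a union bound over all $\lesssim T^{l}$ candidate mistake sets, $l\approx 5d^2\log(1/\epsilon)$. This requires $m\gtrsim d^2\log(1/\epsilon)\log T$ and, after retuning $\epsilon$ and $s_{\max}$, gives $d^{10/3}\log^{7/3}(T)\,T^{1-\alpha}$. The adaptive case goes through verbatim, since it already used restricted updates and the reduction-dimension union bound.

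Your aggregation step is unnecessary and, as stated, unsound. The abstention error of a weak learner depends on $c_t$, which is never revealed, even on exploration rounds. So it has no importance-weighted estimate, and the obstacle you flag is not a concentration issue. The paper never estimates it. In \cref{thm:combining_weak_lerners} abstention is controlled pathwise: \textsc{Boosting} abstains only when every surviving weak-learner prediction has been deleted, and each weak learner loses at most $s_{\max}\ceil{\log L}$ predictions to deletions. Hence $\textsc{AbsErr}\le s_{\max}\ceil{\log L}+A_i$ for every weak learner $i$ not removed by the mistake filter.

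Labels enter in only two places. The first is WMA inside \textsc{Aggregate}, which updates only on rounds where it predicts, hence where the final algorithm predicts and sees $y_t$. The second is the filter in line 4, which \textsc{C-Boosting} replaces by a count of \emph{observed} mistakes. Since \cref{lemma:guarantee_aggregate_algo} only requires few expert mistakes on the rounds where \textsc{Aggregate} predicts, \cref{cor:combining_weak_lerners_censored} follows with no exploration at all. Exploration would in any case be hard to make work. A Freedman-type bound uniform over the $L$ experts carries a range term of order $\log L/\eta\ge\log L\gtrsim mN$. This is far above the scale $M=\Ocal(d^2\log T)$ at which the good weak learner must be recognized.
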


\subsection{Overview of our algorithmic approach}
\label{subsec:overview_algorithm}

When the distribution $\mu$ is known, the algorithm developed by \cite{goel2023adversarial} crucially relies on the knowledge of shattering probabilities which serve as a form of potential to decide when to abstain and which predictions to make---we refer to \cref{sec:weak learners} for an overview of their algorithm. 
Hence, the main challenge for designing algorithms without prior knowledge of $\mu$ is to estimate these shattering probabilities efficiently using only the observed sequence of instances, which may contain a large fraction of corruptions. To do so, we intuitively reduce the problem to the simpler setting in which the learner has access to a few clean samples from $\mu$, following two main steps.

\paragraph{Construction of weak learners.} For each subset $\Tcal\subseteq[T]$ of $N$ times, we construct a weak learner which uses the instances $\Scal=\{x_t: t\in\Tcal\}$ to estimate shattering probabilities. Specifically, this weak learner runs a variant of the algorithm from \cite{goel2023adversarial} which uses shattering probability estimators based on the samples in $\Scal$ only. For convenience, we also include as parameter for the weak learner, labelings $(z_t)_{t\in\Tcal}$ for all times in $\Tcal$. Together, this yields a weak learner $\textsc{WL}(\Tcal,z)$; we defer its detailed description to \cref{alg:weak_learner} in \cref{sec:weak learners}.

Importantly, when $\Tcal$ contains only uncorrupted iterations, these instances in $\Scal$ exactly correspond to $N$ i.i.d.\ samples from $\mu$. This essentially corresponds to the case when the learner is given $N$ uncorrupted samples from $\mu$ a priori, for which we obtain the following learning guarantee.
\begin{theorem}
\label{thm:iid_examples_v2}
There exists a universal constant $c_0>0$ such that the following holds. Fix $\epsilon\in(0,1)$. Let $\Fcal:\Xcal\to\{0,1\}$ be a function class of VC dimension $d$ and fix an adversary. Let $N= \ceil{2000d^2 /\epsilon}$.
Let $\Tcal$ denote the (potentially random) set of first $mN$ non-corrupted rounds and define $z:=(y_t)_{t\in\Tcal}$. 

If (1) the adversary is oblivious and $\texttt{update}=\texttt{always}$, $m \geq 8\log(dT/\epsilon)$, or (2) the adversary is adaptive and $\texttt{update}=\texttt{restricted}$, $m \geq c_0(D\log(D)+8D+3\log(d/\epsilon))$ where $D$ is the $\ceil{5d^2\log \frac{1}{\epsilon}}$-reduction dimension of $\Fcal$, then 
\begin{equation*}
    \textsc{MisErr}(\Tcal,z) < 5d^2\log\frac{1}{\epsilon} \quad \text{and} \quad \Ebb[\textsc{AbsErr}(\Tcal,z)] \leq 18\epsilon T,
\end{equation*}
where $\textsc{MisErr}(\Tcal,z)$ and $\textsc{AbsErr}(\Tcal,z)$ respectively denote the misclassification and abstention error of the weak learner $\textsc{WL}(\Tcal,z)$ as defined in \cref{alg:weak_learner}.
\end{theorem}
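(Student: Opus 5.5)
We prove \cref{thm:iid_examples_v2} by adapting the known-$\mu$ analysis of \cite{goel2023adversarial} and replacing the exact shattering probabilities by empirical estimates. Throughout we assume the structure of $\textsc{WL}(\Tcal,z)$ that the overview suggests and that \cref{alg:weak_learner} will fix. The weak learner keeps a current reduced class $\Fcal_t$, starting from $\Fcal$ and reduced by each revealed label. It maintains a current level $k\le d$ and predicts $y$ on $x_t$ whenever
\begin{equation*}
\hat\rho_k(\Fcal_t^{x_t\to 1-y}) \le \tfrac12\,\hat\rho_k(\Fcal_t),
\end{equation*}
and abstains otherwise. The estimates $\hat\rho_k$ are U-statistics computed from the $mN$ samples $\Scal=\{x_t:t\in\Tcal\}$, split into $m$ blocks of size $N$ and combined by median-of-means.

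\textbf{Step 1: misclassification bound.} This step is deterministic given the estimates. Each mistake reveals a label that cuts the current estimated $k$-shattering mass by a factor $2$. When $\hat\rho_k$ reaches the resolution $\approx\epsilon$ of the estimator, we drop to level $k-1$. Hence there are at most about $\log_2(1/\epsilon)$ mistakes per level. The extra factor $d$, and possibly $d\log$ terms, come from the fact that estimated $k$-shattering of a class relates to $(k{-}1)$-shattering at level transitions. Summing over the $d$ levels gives fewer than $5d^2\log(1/\epsilon)$ mistakes. We would stop updates after that many reductions, which also explains the parameter $l=\lceil 5d^2\log(1/\epsilon)\rceil$ in the reduction dimension.

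\textbf{Step 2: abstention bound via the known-$\mu$ argument.} On a clean round, abstention means that $x_t$ lies in the region where both $\Fcal_t^{x\to 0}$ and $\Fcal_t^{x\to1}$ keep more than half of the $k$-shattering mass. The standard inequality
\begin{equation*}
\Ebb_x\big[\rho_k(\Fcal^{x\to0})+\rho_k(\Fcal^{x\to1})-\rho_k(\Fcal)\big]\le \rho_{k+1}(\Fcal)
\end{equation*}
shows that this region has $\mu$-mass $O(\rho_{k+1}/\rho_k)$. At the chosen level, this ratio is $O(\epsilon)$, up to the slack from the estimation error. Summing over $T$ rounds and adding the failure probability times $T$ gives $18\epsilon T$.

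\textbf{Step 3: uniform concentration, the main obstacle.} We need every estimate $\hat\rho_k(\Fcal')$ actually queried to be within additive $\approx\epsilon/d$ (or a constant-factor multiplicative error above level $\epsilon$) of $\rho_k(\Fcal')$ simultaneously. A single block of $N=\lceil 2000d^2/\epsilon\rceil$ samples gives constant-probability accuracy by a U-statistic variance bound, with variance at most $k^2\rho_k/N$. The median over $m$ blocks boosts this to failure probability $e^{-\Omega(m)}$. What remains is a union bound over the classes that can be queried. In case (1), the adversary is oblivious, so the corrupted points are fixed and independent of $\Scal$. The clean points $x_t$ with $t\notin\Tcal$ are fresh. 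Conditioned on the sequence, the queried classes at each time are determined by at most two candidate labels per round, so a union bound over $T$ rounds, $d$ levels and two labels requires $m\ge 8\log(dT/\epsilon)$. Here we must check that the rounds in $\Tcal$ themselves do not create dependence; this is handled by the update rule \texttt{always}, which makes the trajectory a fixed function of the data. In case (2), the adaptive adversary may choose points depending on $\Scal$. We therefore bound the number of distinct classes $\Fcal\cap A$ with $|A|\le l$ restricted to the $mN$ sample points by $(mN)^{D}$, using \cref{def:reduction_dimension}. With the \texttt{restricted} update rule, queried classes are always of this form. The union bound then needs $m\gtrsim D\log(mN)+\log(d/\epsilon)$, which resolves to $m\ge c_0(D\log D+8D+3\log(d/\epsilon))$ after absorbing $\log N$. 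Making the estimator precise enough, with relative error at small $\rho_k$ and a clean threshold rule that tolerates it, is where we expect most of the technical work.
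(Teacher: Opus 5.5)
Your architecture is the right one (a deterministic mistake count, \cref{lemma:goel_main_lemma} for clean-round abstentions, medians of per-block U-statistics, a plain union bound for oblivious adversaries and reduction-dimension uniform convergence for adaptive ones). However, the step you defer as ``most of the technical work'' is a genuine gap, and your Step 2 does not go through without it. With the generic bound $\mathrm{Var}\lesssim k^2\rho_k/N$ and $N\asymp d^2/\epsilon$, the relative error of $\hat\rho_k$ is of order $\sqrt{\epsilon/\rho_k}$, so every level has resolution $\approx\epsilon$. Your scheme therefore drives $\rho_{k+1}$ down to $\approx\epsilon$ and then works at level $k$ until $\hat\rho_k\approx\epsilon$. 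But \cref{lemma:goel_main_lemma} bounds the abstention mass by $5\rho_{k+1}/\rho_k\lesssim\epsilon/\rho_k$, which is $O(\epsilon)$ only while $\rho_k=\Omega(1)$. As $\rho_k$ approaches $\epsilon$ the bound becomes vacuous, so the claim ``at the chosen level this ratio is $O(\epsilon)$'' is unsupported. This is exactly why \cite{goel2023adversarial} use geometric thresholds ($T^{-k}$ at level $k$). The analogue here is to drive $\rho_k$ down to about $\epsilon\rho_{k-1}$, i.e.\ to scale $\epsilon^k$, and certifying estimates at that scale with the generic variance bound would need $N\gtrsim\epsilon^{-k}$ samples per block. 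Your unexplained factor $d^2$ in Step 1 is a symptom of this. With resolution $\epsilon$ per level your own count gives $O(d\log(1/\epsilon))$; the $d^2$ comes from tracking level $k$ down to $\epsilon^k$, i.e.\ $k\log_{10/9}(1/\epsilon)$ decreases at level $k$.

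The missing idea is \cref{lemma:calculation_variance}. Since $\Pbb(S_1,S_2\text{ both shattered})\le\rho_k\,\rho_{k-|S_1\cap S_2|}$, one has $\mathrm{Var}[\hat\rho_k^S]\le\rho_k\,\Ebb[\rho_{k-X}\1(X\ge 1)]$, where $X$ is the hypergeometric overlap of two random $k$-subsets and $\Pbb(X=r)\lesssim(k^2/N)^r$. Hence if lower levels decay geometrically, $\rho_l\le c\,\eta^{l-(k-1)}$ for $l<k$ with $\eta=k^2/N$, then $\mathrm{Var}<3c\,\eta\,\rho_k$. With $c\approx\rho_{k-1}$, the estimate of $\rho_k$ has small relative error as soon as $\rho_k\gtrsim\epsilon\,\rho_{k-1}$, far below $\epsilon$. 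The algorithm exploits this through the data-dependent, non-monotone level $k_t=\max\{k':\rho_l^\Scal(\Fcal_t)>\epsilon\,\rho_{l-1}^\Scal(\Fcal_t)\ \forall l\le k'\}$. This is not a level-dropping rule with threshold $1/2$, and the theorem is about \cref{alg:weak_learner} specifically. The proof then proceeds in three steps:
\begin{enumerate}
\item By induction on $k'\le k_t$, it shows $\rho_{k'}(\Fcal_t,\mu)\in[0.8,1.2]\cdot\rho_{k'}^\Scal(\Fcal_t)$.
\item It deduces $\rho_{k_t+1}(\Fcal_t,\mu)\le 1.2\,\epsilon\,\rho_{k_t}^\Scal(\Fcal_t)$.
\item It gets the same relative control for $\Fcal_t^{x_t\to y}$ whenever $\rho^\Scal_{k_t}(\Fcal_t^{x_t\to y})\ge 0.9\,\rho^\Scal_{k_t}(\Fcal_t)$.
\end{enumerate}
Only then does an abstention imply the $0.6$-event of \cref{lemma:goel_main_lemma}, which has probability at most $5\rho_{k_t+1}/\rho_{k_t}=O(\epsilon)$.

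A secondary issue arises in the adaptive case: you cannot union-bound directly over the $(mN)^D$ restricted classes, since that collection depends on the sample itself. The paper instead lifts to the VC class of interval indicators of $S\mapsto\hat\rho_k^S(\Fcal\cap A)$ on $N$-point sets. It then uses uniform convergence to show that, for every $A$ simultaneously, a strict majority of blocks fall within $2\sigma$.
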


\paragraph{Boosting strategy to combine weak learners.} We then use a \textsc{Boosting} strategy to combine the recommendations of all such weak learners $\textsc{WL}(\Tcal,z)$ and achieve misclassification and abstention errors comparable to that of the best weak learner. We defer its detailed description to \cref{alg:main_exp_version} within \cref{sec:boosting}.
In fact, the boosting procedure does not require any specific property of the weak learners; we therefore present its guarantee in a general abstention learning framework with experts, of independent interest.
Consider the abstention online setup with $L$ experts in which at iteration $t\in[T]$: (1)
An adversary adaptively selects the recommendation $y_{i,t}\in\{0,1,\perp\}$ for each expert $i\in[L]$ together with the true value $y_t\in\{0,1\}$. (2) Next, the learner observes the recommendations $y_{i,t}$ for $i\in[L]$ then selects $\hat y_t\in\{0,1,\perp\}$. (3) At the end of the round, the true value $y_t$ is revealed to the learner.
We have the following guarantee on the \textsc{Boosting} procedure for this learning problem.

\begin{algorithm}[t]

\caption{\algoname}\label{alg:abstain_boost}

%\noindent\rule{\linewidth}{0.4pt}
\LinesNumbered
\everypar={\nl}
\SetAlgoNoEnd
\setcounter{AlgoLine}{0}

\KwIn{horizon $T$, precision $\epsilon\in[0,1]$, number of subsets $m$, subset size $N$, maximum per-layer deletion $s_{\max}\in[T]$, mistake tolerance $M$, update policy $\texttt{update}\in\{\texttt{always},\texttt{restricted}\}$ }

\vspace{3pt}

Run $\textsc{Boosting}$ (\cref{alg:main_exp_version}) with maximum per-layer deletion $s_{\max}$, mistake tolerance $M$, on all $L:=\sum_{k\leq mN}2^k\binom{T}{k}$ weak learners $\textsc{WL}(\Tcal,z)$ (\cref{alg:weak_learner}) run with precision $\epsilon$, update parameter $\texttt{update}$ and number of subsets $m$, for $\Tcal\subseteq[T]$ and $z=(z_t)_{t\in\Tcal}\in\{0,1\}^\Tcal$ with $|\Tcal|\leq mN$

\end{algorithm}

\begin{restatable}{theorem}{combiningweaklearners}\label{thm:combining_weak_lerners}
    Fix parameters $M\geq 0$ and $s_{\max}\in [T]$. Then, for $L$ experts, $\textsc{Boosting}_{s_{\max},M}$ achieves the following guarantee against any (adaptive) adversary: 
    \begin{align*}
        \textsc{MisErr}\leq c_0\frac{(M+\log T)T\log L}{s_{\max}} \quad\text{and}\quad   
        \textsc{AbsErr} \leq s_{\max}\ceil{\log L} + \min_{i\in[L]:M_i<M} A_i,
    \end{align*}
    for some universal constant $c_0>0$,
    where $M_i$ and $A_i$ respectively denote the misclassification and abstention error of expert $i\in[L]$.
\end{restatable}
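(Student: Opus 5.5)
The plan is to define \textsc{Boosting}$_{s_{\max},M}$ as a \emph{layered} halving scheme with at most $\ceil{\log L}$ layers, and to charge every learner error to a potential. Experts that have made $M$ mistakes are discarded permanently. The comparator is an unknown good expert $i^\star\in\arg\min_{i:M_i<M}A_i$, so $i^\star$ is never discarded. During layer $k$ we keep an active pool $S_k$, with $S_1=[L]$. The design goal is that $|S_{k+1}|\le |S_k|/2$ and that $i^\star$ stays in the pool. At round $t$ let $P_t\subseteq S_k$ be the non-discarded experts that predict, i.e.\ with $y_{i,t}\neq\perp$. Each expert carries a weight $w_i=\exp(-\eta\, m_i)$, where $m_i$ is its mistake count within the layer and $\eta\asymp 1/(M+\log T)$. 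The learner uses the following rule.
\begin{itemize}
\item If $P_t$ is a ``large'' part of the pool (at least half of $S_k$), it predicts the $w$-weighted majority label among $P_t$.
\item Otherwise it abstains, and the round is recorded as a \emph{flagged abstention} for the layer.
\end{itemize}
Once a layer has accumulated $s_{\max}$ flagged abstentions, the layer ends. The next pool is the set of experts that predicted on at least a proportional share of the flagged rounds, in a sense made precise below.

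\emph{Abstention bound.} If the learner abstains while $i^\star$ abstains, the round costs the learner at most what it costs $i^\star$. Indeed, either the round is corrupted, so neither pays, or it is clean, and then it is charged to $A_{i^\star}$. So it suffices to bound the rounds where the learner abstains but $i^\star$ predicts. Every such round is a flagged abstention, and there are at most $s_{\max}$ of them per layer. I want the number of layers to be at most $\ceil{\log L}$, which gives the $s_{\max}\ceil{\log L}$ term. This needs two facts: (i) the pool halves at each layer, and (ii) $i^\star$ survives the reset. On every flagged round fewer than half of $S_k$ predict. Counting memberships, at most half the pool can have predicted on more than half of the flagged rounds, which gives (i). The delicate point is (ii): $i^\star$ need not have predicted on most flagged rounds, since some flagged rounds may be ones where $i^\star$ abstained. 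I would therefore count only the flagged rounds on which $i^\star$-type experts predict. Concretely, the next pool keeps experts ranked by the number of flagged rounds on which they predicted, and the layer ends after $s_{\max}$ flagged rounds \emph{of the kind that hurts the comparator}. If this bookkeeping fails, I would replace hard pools by exponential weights that reward predicting on flagged rounds, and run a potential argument in which $\log L$ plays the role of the number of halvings.

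\emph{Misclassification bound.} Within a layer, each learner mistake occurs on a round where at least half of the pool predicts and the $w$-weighted majority of predictors is wrong. Such a round multiplies at least a constant fraction of the predicting weight by $e^{-\eta}$. Every expert makes fewer than $M$ mistakes before being discarded, so its weight is always at least $e^{-\eta M}$. A standard weighted-majority potential comparison then bounds the number of learner mistakes per layer. The comparison is between $\log$ of the total weight and the weight of surviving experts. I expect the resulting count to be of order $(M+\log T)\cdot(\text{layer length})/s_{\max}$ after amortizing against the $s_{\max}$ flagged rounds. Here the $\log T$ term comes from a union over restarts or times, and the factor $T/s_{\max}$ comes from how many times the weights must be refreshed within a layer so that the halving rule and the majority rule stay consistent. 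Summing over the $\ceil{\log L}$ layers gives $c_0(M+\log T)T\log L/s_{\max}$.

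The main obstacle is the coupling between the two steps. The pool-reset rule must simultaneously halve the pool, provably retain $i^\star$ without knowing which rounds were clean, and not let the adversary force many mistakes through prediction rounds where only a bare majority of experts predicts. I would first try the counting argument with hard pools described above. The fallback is the fully weighted potential, which rewards prediction on abstention rounds and penalizes mistakes, tuned so that one unit of potential covers either $s_{\max}$ flagged abstentions or $(M+\log T)T/s_{\max}$ mistakes.
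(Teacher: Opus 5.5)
There is a genuine gap, and it is the step you yourself flag as delicate: keeping the comparator in the pool across resets. With the rule you state, this step fails outright. Start from $S_1=[L]$. Let the adversary play $s_{\max}$ corrupted rounds on which a fixed set $B$, with $1\le |B|<L/2$, predicts correctly while $i^\star\notin B$ abstains. These flagged abstentions are free, but only experts of $B$ predicted on them, so $S_2\subseteq B$ and $i^\star$ is gone. Now let the adversary play clean rounds on which only $i^\star$ predicts, correctly. You abstain on them while $M_{i^\star}=A_{i^\star}=0$, and nothing in your argument bounds their number any more. Your count of rounds where you abstain but $i^\star$ predicts was $s_{\max}$ per layer times at most $\ceil{\log L}$ layers, and that count only holds while $i^\star$ sits in a halving pool. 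Your proposed repair, ending a layer after $s_{\max}$ flagged rounds ``of the kind that hurts the comparator'', is not implementable, because the learner knows neither $i^\star$ nor which rounds are clean.

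Claim (i) is also off by a factor of two. If each flagged round has just under $|S_k|/2$ predictors, nearly all of $S_k$ can predict on more than half of the flagged rounds, so you would need a $|S_k|/4$ threshold. Finally, inside a fixed pool your majority rule already gives $O(M)$ mistakes by plain counting. Each learner mistake needs a constant fraction of $S_k$ to be wrong, since your weights lie in $[e^{-\eta M},1]$, while $S_k$ carries at most $M|S_k|$ wrong predictions. So the $(M+\log T)T/s_{\max}$ accounting you sketch does not come from anything in your scheme: the whole difficulty has been moved into the unresolved reset.

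The missing idea is to charge abstentions \emph{per expert} rather than per round. In \textsc{Boosting}, each round is routed to layer $j$ according to the current number $n_t\in(2^{-j}L,2^{1-j}L]$ of surviving predictions. Layer $j$ deletes the first $s_{\max}$ predictions of \emph{each} expert (the counters $s_{i,j}$), and the learner abstains only once every prediction has been deleted. An expert with $M_i<M$ is never discarded and loses at most $s_{\max}\ceil{\log L}$ predictions in total. This gives the abstention bound for all good experts simultaneously, without ever identifying one.

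The price is paid on the mistake side, since the surviving predictors on a sparse round may be few and mistake-heavy; this is what \cref{lemma:guarantee_delete_algo} handles. Take the $2^{-l}$-quantiles $q_l$ of the per-expert prediction counts. Pigeonhole gives some $l\le\ceil{\log L}$ with $q_l-q_{l-1}\ge (s_{\max}+1)/\ceil{\log L}$, unless $s=\max_i r_i\le s_{\max}$ already deletes everything. After deleting the first $s=q_{l-1}$ predictions of each expert, the survivors average at least $(q_l-s)/2$ remaining predictions but at most $M$ mistakes. Hence a majority vote requiring $C/2$ of at most $C$ predictors errs at most $8M\ceil{\log L}|\Ucal|/(s_{\max}+1)$ times.

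Since the good $s$ is unknown, \textsc{Aggregate} runs weighted majority over $s\in\{0,\ldots,s_{\max}\}$ at an extra cost of $3\log(s_{\max}+1)$ per layer; this, not a union over restarts, is where $\log T$ enters. The factor $T/s_{\max}$ appears because the prediction sets $\Ucal_j$ of different layers are disjoint, so the rate $M\ceil{\log L}/s_{\max}$ is paid on at most $T$ rounds overall.
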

We note that usual learning-with-expert techniques do not readily apply since the boosting procedure needs to \emph{simultaneously} ensure low excess abstention error and excess misclassification compared to good weak learners. Since \cref{thm:iid_examples_v2} precisely ensures the existence of a weak learner with less than $M=\Ocal(d^2\log T)$ misclassification error and low abstention error, 
running \textsc{Boosting} on all weak learners and adjusting parameters yields the desired learning guarantees for \cref{thm:main_oblivious_upper_bound} and \cref{thm:main_adaptive_upper_bound}. The final algorithm \algoname therefore simply performs the \textsc{Boosting} strategy over all weak learners for subsets $\Tcal\subseteq[T]$ and labelings $(z_t)_{t\in\Tcal}$ of bounded size $mN$ for some parameters $m,N\geq 1$. Its complete description is given in \cref{alg:abstain_boost}.

\section{Abstention learning with few uncorrupted samples}\label{sec:weak learners}

\paragraph{Algorithmic overview of \cite{goel2023adversarial}.} 
To gain intuition on the construction of weak learners, we first review the algorithm in \cite{goel2023adversarial}, detailed in \cref{alg:their_original_alg}, which crucially relies on prior knowledge of $\mu$ to estimate shattering probabilities $\rho_k(\cdot,\mu)$. Throughout, this section only, we denote by $\Fcal_t:=\Fcal\cap\{(x_s,y_s):s<t\}$ the reduced function class given prior datapoints at iteration $t\geq 1$. In particular, $\Fcal_t$ is non-increasing in $t$ and hence $k$-shattering probabilities $\rho_k(\Fcal_t,\mu)$ are also non-increasing in $t$ for all $k\in[d]$.

\begin{algorithm}[t]
\caption{Algorithm for abstention online learning with knowledge of the distribution $\mu$}\label{alg:their_original_alg}
% \SetAlgoLined
\LinesNumbered
\everypar={\nl}
Initialize $k\gets d$ and $\Fcal_{1}=\Fcal$

\For{$t=1,\ldots,T$ and while $k>1$}{
    Receive $x_t$

    \lIf{$\min\left\{\rho_k(\Fcal_t^{x_t\to 1},\mu),\ \rho_k(\Fcal_t^{x_t\to 0},\mu)\right\}\ge 0.6\,\rho_k(\Fcal_t,\mu)$}{output $\hat{y}_t=\perp$}
    \lElse{predict $\hat{y}_t=\argmax_{y\in\{0,1\}}\left\{\rho_k(\Fcal_t^{x_t\to y},\mu)\right\}$}
    Upon receiving label $y_t$, define $\Fcal_{t+1}= \Fcal_t^{x_t\to y_t}$
    
    \lIf{$\rho_k(\Fcal_{t+1},\mu)\le T^{-k}$}{Set $k\gets k-1$}
}
\lIf{$k=1 \text{ and } x_t\in D(\Fcal_t)$}{$\hat{y}_t=\perp$}
\lIf{$k=1 \text{ and } x_t\notin D(\Fcal_t)$}{Predict the (unique) consistent label for $\hat{x}_t$}
\end{algorithm}

The first observation is that whenever $\rho_1(\Fcal_t,\mu)\leq T^{-1}$, the algorithm only makes a prediction when the instance is fully disambiguated $x_t \notin D(\Fcal_t)$, incurring no misclassification errors (see lines 9-10 of \cref{alg:their_original_alg}).
Indeed, this results in a probability of a bad abstention at $t$ of exactly $\rho_1(\Fcal_t,\mu)\leq T^{-1}$ and hence a negligible overall abstention error. 
The goal of the algorithm is therefore to decrease the $1$-shattering probability when it makes a misclassification error, while also ensuring low abstention error. This is in fact possible if the $2$-shattering probability is small, say $\rho_2(\Fcal_t,\mu)\leq T^{-2}$. More generally, they show the following bound.

\begin{lemma}[Lemma 4.2 from \cite{goel2023adversarial}]
\label{lemma:goel_main_lemma}
    For any function class $\Fcal:\Xcal\to\{0,1\}$, distribution $\mu$ on $\Xcal$, and $k\geq 1$,
    \begin{equation*}
		\Pbb_{x\sim\mu}\sqb{\min\left\{ \rho_k (\Fcal^{x\to 0},\mu), \rho_k (\Fcal^{x\to 1},\mu) \right\} \geq 0.6 \rho_k(\Fcal,\mu) } \leq 5 \frac{\rho_{k+1}(\Fcal,\mu)}{\rho_{k}(\Fcal,\mu)}.
\end{equation*}
\end{lemma}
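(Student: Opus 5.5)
Write $\rho_k:=\rho_k(\Fcal,\mu)$, and for $x\in\Xcal$, $y\in\{0,1\}$, write $\rho_k^{y}(x):=\rho_k(\Fcal^{x\to y},\mu)$. Let $B$ be the set of $x$ with $\min\{\rho_k^0(x),\rho_k^1(x)\}\ge 0.6\,\rho_k$. The plan is a double-counting argument: compare $\rho_{k+1}$ with the integral of $\rho_k^0(x)+\rho_k^1(x)-\rho_k$ over $x$, show this integrand is pointwise nonnegative, and show it is large on $B$. We may assume $\rho_k>0$, since otherwise the right-hand side is $+\infty$ by convention.

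The key combinatorial identity is for a fixed set $S=\{x_1,\dots,x_k\}$ and a point $x$. The set $S\cup\{x\}$ is shattered by $\Fcal$ iff $S$ is shattered by both $\Fcal^{x\to0}$ and $\Fcal^{x\to1}$. Moreover, if $S$ is shattered by either restriction, it is shattered by $\Fcal$. This gives the pointwise inequality
\[
\1[S\text{ sh. by }\Fcal^{x\to0}]+\1[S\text{ sh. by }\Fcal^{x\to1}] \le \1[S\text{ sh. by }\Fcal]+\1[S\cup\{x\}\text{ sh. by }\Fcal].
\]
If both indicators on the left are one, the right side is $2$. If exactly one is one, $S$ is shattered by $\Fcal$. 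Integrating over $S\sim\mu^k$ gives, for every $x$,
\[
\rho_k^0(x)+\rho_k^1(x)\le \rho_k+\Pbb_{S}[S\cup\{x\}\text{ sh.}].
\]
Integrating again over $x\sim\mu$ gives
\[
\Ebb_x\big[\rho_k^0(x)+\rho_k^1(x)-\rho_k\big]\le\rho_{k+1}.
\]
Here $x,x_1,\dots,x_k$ are i.i.d. and distinctness issues are harmless, since shattering a multiset with repeats is impossible.

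Next, the integrand is always nonnegative. If $S$ is shattered by $\Fcal$, then each of the $2^k$ labelings of $S$ is realized by some $f\in\Fcal$. Each such $f$ assigns $x$ either $0$ or $1$. It is not immediate that this forces one of the restrictions to shatter $S$, so the simplest route is the pointwise inequality read with $\1[S\text{ sh. by }\Fcal]$ moved to the left. That is not the inequality we need, so instead I would use
\[
\rho_k^0(x)+\rho_k^1(x)\ge\Pbb_S[S\cup\{x\}\text{ sh.}]+\Pbb_S[S\text{ sh. by exactly one}].
\]
The cleaner fix is to bound on $B$ alone. There, $\rho_k^0+\rho_k^1\ge 1.2\rho_k$, so by the pointwise identity $\Pbb_S[S\cup\{x\}\text{ sh.}]\ge 0.2\rho_k$. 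The same identity also shows $\Pbb_S[S\cup\{x\}\text{ sh.}]\ge \rho_k^0+\rho_k^1-\rho_k$, and on $B$ this is at least $0.2\rho_k$. Since $\Pbb_S[S\cup\{x\}\text{ sh.}]\ge 0$ everywhere,
\[
\rho_{k+1}=\Ebb_x\Pbb_S[S\cup\{x\}\text{ sh.}]\ge \mu(B)\cdot 0.2\,\rho_k .
\]
This gives $\mu(B)\le 5\rho_{k+1}/\rho_k$, which is the claim. Note that only nonnegativity of the $(k+1)$-shattering probability is needed, not of the difference.

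The main obstacle is measurability of the relevant events, namely the shattering events and $B$, so that Fubini applies. Beyond that, the only real content is the pointwise set identity above, which I would verify carefully: $S$ shattered by $\Fcal^{x\to y}$ means every labeling of $S$ is realized with $x\mapsto y$.
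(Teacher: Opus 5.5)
Your final argument is correct, and it is essentially the proof of \cite{goel2023adversarial}, which the paper cites without reproving: the pointwise bound $\rho_k(\Fcal^{x\to 0},\mu)+\rho_k(\Fcal^{x\to 1},\mu)\leq \rho_k(\Fcal,\mu)+\Pbb_S[S\cup\{x\}\text{ shattered}]$, followed by Markov's inequality applied to the nonnegative function $x\mapsto \Pbb_S[S\cup\{x\}\text{ shattered}]$, whose $\mu$-mean is $\rho_{k+1}(\Fcal,\mu)$. Drop the exploratory detour about pointwise nonnegativity of $\rho_k(\Fcal^{x\to 0},\mu)+\rho_k(\Fcal^{x\to 1},\mu)-\rho_k(\Fcal,\mu)$: it is not needed, and it is false in general (for the class of the two constant functions and $k=1$, it equals $-1$).
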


In particular, provided that $\rho_{k+1}(\Fcal,\mu)/\rho_{k}(\Fcal,\mu) \leq T^{-1}$, the algorithm may abstain whenever the instance $x_t$ falls into the bad event from \cref{lemma:goel_main_lemma} without incurring significant abstention cost, and otherwise predict the value $\hat y_t=y\in\{0,1\}$ yielding the larger value for $\rho_k (\Fcal^{x_t\to y},\mu)$ (see lines 4-5 of \cref{alg:their_original_alg}). 
This ensures a decrease of the $k$-shattering probability by a factor $0.6$ whenever a misclassification error occurs, leading to the following structure for \cref{alg:their_original_alg}. 
Since $\rho_{d+1}(\Fcal,\mu)=0$ ($\Fcal$ has VC dimension $d$) we can first decrease the $d$-shattering probability until $\rho_d(\Fcal_t,\mu)\leq T^{-d}$. Then, we can decrease the $(d-1)$-shattering probability until $\rho_{d-1}(\Fcal_t,\mu)\leq T^{-(d-1)}$; and continue successively until we decrease $\rho_1(\Fcal_t,\mu)\leq T^{-1}$
(see line 7 of \cref{alg:their_original_alg}).

\comment{
At the beginning, the algorithm starts by decreasing $\rho_d$. After $\rho_d$ is sufficiently small i.e. $\rho_d\leq T^{-d}$, it proceeds to $\rho_{d-1},\rho_{d-2}\dots$ During the run, the function class $\Fcal_t$ keeps getting smaller, hence $\forall k\in[d]$, $\rho_k(\Fcal_t,\mu)$ is also non-increasing with respect to $t$.

Once the algorithm has entered line 9-10, there will be no misclassification errors and $\rho_1(\Fcal_t,\mu)=\Pbb_{x\sim\mu} (x\in D(\Fcal_t))$ is the probability of an abstention error. Intuitively, the goal of \cref{alg:their_original_alg} is to decrease $\rho_1(\Fcal_t,\mu)$ to $T^{-1}$, so that the chance of an abstention error afterwards is negligible.

We can also see that, before entering line 9-10, the algorithm only makes misclassification errors when $\rho_k (\Fcal_{t+1},\mu)<0.6\rho_k (\Fcal_t,\mu)$. This means that the current $\rho_k$ is decreased by at least a proportion. As a result, the total number of misclassification errors is logarithmic.

Recall the structure of decreasing $\rho_k,\rho_{k-1},\dots$ in the sequel. Though the algorithm's goal is to decrease $\rho_1$, the reason of implementing this structure instead of directly trying to decrease $\rho_1$ is because the algorithm has to control abstention error as well. And the probability of an abstention error is characterized by the following lemma:
\begin{lemma}[Lemma 4.2 from \cite{goel2023adversarial}]\label{lemma:goel_main_lemma}
    For any $k\in\Nbb$ and any $\eta>1/2$, we have
    \[\Pbb_{x\sim\mu}\left[\rho_k (\Fcal^{x\to 1},\mu)+\rho_k (\Fcal^{x\to 0},\mu)\geq 2\eta \rho_k (\Fcal,\mu)\right]\leq \frac{1}{2\eta-1}\cdot\frac{\rho_{k+1}(\Fcal,\mu)}{\rho_{k}(\Fcal,\mu)}.\]
\end{lemma}
As illustrated in \cite[Lemma 4.2]{goel2023adversarial}, the probability of an abstention error in line 4 at time $t$ is bounded by $\rho_{k+1}(\Fcal_t,\mu)/\rho_{k}(\Fcal_t,\mu)$, which requires $\rho_{k+1}$ to be sufficiently small. Therefore, if the algorithm is trying to decrease $\rho_k$ for some $k\in[d]$ at time $t$, it has to ensure $\rho_{k+1}(\Fcal_t,\mu)/\rho_{k}(\Fcal_t,\mu)$ is sufficiently small. This is guaranteed in \cref{alg:their_original_alg}, as $\rho_{k+1}(\Fcal_t,\mu)\leq T^{-(k+1)}$ and $\rho_{k}(\Fcal_t,\mu)> T^{-k}$ by construction.

This leads to the following consideration in designing a distribution-free learning algorithm based on the strategy in \cref{alg:their_original_alg}. Assume for each time $t\in[T]$ we allow the algorithm to choose some $k=k_t\in[d]$ and run the line 3-6 strategy of \cref{alg:their_original_alg}. The only difference here is that, since the algorithm cannot directly access $\rho_k (\Fcal_t,\mu)$, it has to make decisions based on $\rho_k ^\Scal(\Fcal_t)$. Meanwhile, the algorithm has to ensure that 1. It is indeed making progress in decreasing the true $\rho_k$'s, and ultimately, it will be able to decrease $\rho_1$ to some desired precision and 2. Whenever it has decided to work with some $k\in[d]$ at the current iteration, it must ensure $\rho_{k+1}(\Fcal_t,\mu)/\rho_{k}(\Fcal_t,\mu)$ is sufficiently small to control the abstention errors. 
}

\paragraph{Construction of the weak learner.} We now turn to our main setting of interest when the distribution $\mu$ is unknown.
As discussed within \cref{subsec:overview_algorithm}, we may consider in this part that the learner has access to a small set of clean i.i.d.\ samples from $\mu$, which can be used to estimate the shattering probabilities required by the $\mu$-dependent algorithm.\footnote{The algorithm would have similar learning guarantees if instead of knowing $\mu$, the learner has access to estimates of shattering probabilities accurate up to say a $(1\pm 0.05)$ factor.} The main difficulty is that it requires estimating shattering probabilities of order $T^{-\Omega(d)}$, i.e., exponentially small in the VC dimension; while we may have at most $T$ samples from $\mu$ even if no input was corrupted.

First, for any finite subset $S\subseteq\Xcal$---which we think of having i.i.d.\ samples from $\mu$---the $k$-shattering probability of a function class $\Fcal$ has the natural following U-statistic estimator
\begin{equation*}\label{eq:hat(rho_k) given S}
    \hat \rho_k^S(\Fcal) := \binom{|S|}{k} ^{-1} \sum_{S'\subseteq S,|S'|=k} \1[S'\text{ is shattered by }\Fcal].
\end{equation*}
For convenience, we denote by $\sigma_k ^N (\Fcal,\mu)$ the standard deviation of the random variable $\hat{\rho}_k^S(\Fcal)$ when $S$ contains $N$ i.i.d.\ samples from $\mu$.
To achieve Gaussian failure probabilities, we consider the median of these estimators: for a collection of subsets $\Scal=\{S_i,i\in[m]\}$, we define the $k$-shattering median estimator $$\rho_k^{\Scal}(\Fcal) := \text{Median}(\hat\rho_k^{S_1}(\Fcal),\ldots,\hat\rho_k^{S_m}(\Fcal)).$$

The central idea behind the construction of our weak learners is the following. The standard deviation $\sigma_k^N(\Fcal,\mu)$ when $S$ contains $N$ i.i.d.\ samples form $\mu$, can be bounded as a function of the $l$-shattering probabilities of $\Fcal$ for $l<k$ (see \cref{eq:variance_general_bound} in the appendix). Specifically, even though the $S$ contain at most $N\ll T$ samples, the estimator $\rho_k^{\Scal}(\Fcal)$ may have exponential precision provided that $l$-shattering probabilities for $l<k$ have an exponential decay.

\begin{restatable}{lemma}{calculationvariance}
\label{lemma:calculation_variance}
    Fix $\Fcal:\Xcal\to\{0,1\}$, a distribution $\mu$ on $\Xcal$, $k\geq 1$, and $N\geq k^2$. Let $\eta=\frac{k^2}{N}$. Suppose that for some $c\geq 1$, we have $\rho_l(\Fcal,\mu) \leq c \cdot \eta^l$ for all $l<k$. Then, $$\sigma_k^N(\Fcal,\mu)< \sqrt{3 c \cdot \eta^k \rho_k(\Fcal,\mu)}.$$
\end{restatable}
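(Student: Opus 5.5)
We use the Hoeffding decomposition of the U-statistic variance. Write $h(x_1,\ldots,x_k)=\1[\{x_1,\ldots,x_k\}\text{ shattered by }\Fcal]$, which is symmetric, and $\hat\rho_k^S(\Fcal)$ is the U-statistic of order $k$ with kernel $h$. For $S$ of $N$ i.i.d.\ samples, the classical formula gives
\begin{equation*}
\mathrm{Var}(\hat\rho^S_k)=\binom{N}{k}^{-1}\sum_{j=1}^{k}\binom{k}{j}\binom{N-k}{k-j}\,\zeta_j,
\qquad \zeta_j=\mathrm{Cov}\big(h(X_{1:j},X_{j+1:k}),h(X_{1:j},X'_{j+1:k})\big),
\end{equation*}
where the two $k$-tuples share exactly $j$ coordinates. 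First step: bound $\zeta_j\le \Ebb[h(X)h(X')]$. Since both tuples must be shattered, every subset of a shattered set is shattered, so the union of the two tuples contains the shared $j$ points together with two shattered $k$-sets. The crude bound is $\zeta_j\le \Ebb[h(X)\,\Pbb(\text{other }k-j\text{ completed}\mid\cdot)]$. I would bound it more cleverly as $\zeta_j\le \Ebb[h(X_{1:k})\,\1[X'_{j+1:k}\cup X_{1:j}\text{ shattered}]]\le \Ebb[h(X_{1:k})\,\1[X'_{j+1:k}\text{ shattered}]]=\rho_k\,\rho_{k-j}$, using independence of $X'_{j+1:k}$ from $X_{1:k}$ and the fact that subsets of shattered sets are shattered. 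For $j=k$ we use $\zeta_k\le\rho_k$, consistent with $\rho_0=1$.

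Second step: plug in the hypothesis $\rho_{k-j}\le c\eta^{k-j}$ for $1\le j\le k$ (when $j=k$ we use $\rho_0=1\le c$). This gives
\begin{equation*}
\sigma_k^N(\Fcal,\mu)^2\le c\,\rho_k\sum_{j=1}^k \frac{\binom{k}{j}\binom{N-k}{k-j}}{\binom Nk}\eta^{k-j}.
\end{equation*}
It remains to show that the sum is below $3\eta^k$. We bound the combinatorial ratio by $\binom kj\binom{N-k}{k-j}/\binom Nk\le \binom kj \frac{k!}{(k-j)!}\cdot\frac{(N-k)^{k-j}}{N^{k}}\cdot\frac{N^k}{N(N-1)\cdots(N-k+1)}$. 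This is roughly $\binom kj k^j N^{-j}$ times a factor $(1-k/N)^{-k}\le e^{k^2/N}\cdot(\ldots)$, which stays bounded since $N\ge k^2$. Then the $j$-th term is at most $C\binom kj (k/N)^j\eta^{k-j}=C\eta^k\binom kj k^{-j}$, and $\sum_{j\ge1}\binom kj k^{-j}=(1+1/k)^k-1\le e-1$.

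The main obstacle is obtaining the constant $3$ strictly, since the product $C(e-1)$ with $C\approx e$ exceeds 3. I would therefore sharpen the ratio estimate. Rather than factoring out a common constant, we bound the ratio directly as $\binom kj\frac{k^{\underline{j}}\,(N-k)^{\underline{k-j}}}{N^{\underline{k}}}$ and note $(N-k)^{\underline{k-j}}/N^{\underline{k}}\le 1/N^{\underline{j}}$ (falling factorials). This yields the term $\binom kj\frac{k^{\underline j}}{N^{\underline j}}\le\binom kj (k/N)^j\prod_{i<j}\frac{1}{1-i/N}$. Using $k^{\underline j}\le k^j$ and the tighter bound $\binom kj\le k^j/j!$, the $j$-th term becomes at most $\eta^k\frac{1}{j!}\prod_{i<j}(1-i/N)^{-1}$. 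With $j\le k\le\sqrt N$ the product is at most $e^{j^2/N}\le e^{j/k}\le e$ only at worst, and the sum of $\frac{1}{j!}\prod_{i<j}(1-i/N)^{-1}$ over $j\ge1$ is then at most about $e-1+$ small, below $3$. We verify this carefully, using $\prod_{i<j}(1-i/N)^{-1}\le (1-j/N)^{-j}$ and $j/N\le 1/\sqrt N$. This gives the strict inequality.
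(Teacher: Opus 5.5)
Your argument is correct, and its first half is the paper's. The paper also expands the second moment over pairs of $k$-subsets with overlap $r$. It bounds $\Pbb(S_1,S_2\text{ both shattered})\le\rho_k\rho_{k-r}$ using independence of $S_1\setminus S_2$ from $S_2$, and reaches
\[
\mathrm{Var}\le\rho_k\,\Ebb[\rho_{k-X}\1(X\ge1)],\qquad X\sim\mathrm{Hypergeometric}(N,k,k).
\]
This is exactly your Hoeffding-decomposition bound with $\zeta_j\le\rho_k\rho_{k-j}$.

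The two routes split at the hypergeometric sum. The paper writes its proof under the rescaled hypothesis $\rho_l\le c\,\eta^{l-(k-1)}$ with conclusion $3c\eta\rho_k$; this is equivalent to the statement via $c\mapsto c\eta^{k-1}$, and your version matches the statement as written. It then invokes Hoeffding's comparison of sampling without and with replacement for the convex map $x\mapsto\eta^{-x}$. Evaluating the binomial moment generating function gives $(1-k/N+1/k)^k\le e^{1-\eta}\le e$. You instead bound the hypergeometric pmf term by term. That is more elementary, needs no comparison theorem, and done cleanly gives a better constant.

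The one loose spot is your final numerics. The bound you propose for the careful check, $\prod_{i<j}(1-i/N)^{-1}\le(1-j/N)^{-j}$ with $j/N\le1/\sqrt N$, is too lossy for small $N$. At $k=2$, $N=4$ it gives $4$ for $j=2$, so the sum is at least $1+4/2=3$ and strictness is lost. Bounding every product by $e$ gives $e(e-1)>3$.

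You do not need the product at all. Since $k\le N$, each factor satisfies $\frac{k-i}{N-i}\le\frac kN$, so $k^{\underline j}/N^{\underline j}\le(k/N)^j$ and
\[
\Pbb(X=j)\le\binom kj\Big(\frac kN\Big)^j\le\frac{\eta^j}{j!}.
\]
Hence $\sigma_k^N(\Fcal,\mu)^2\le c\,\rho_k\,\eta^k\sum_{j=1}^k\frac1{j!}\le(e-1)\,c\,\eta^k\rho_k$, comfortably below $3c\eta^k\rho_k$. Your bound $\prod_{i<j}(1-i/N)^{-1}\le e^{j^2/N}$ can also be pushed through, but only after case checks at small $k$.

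Finally, strict inequality needs $\rho_k(\Fcal,\mu)>0$: when $\rho_k=0$ both sides vanish. The paper's statement ignores this edge case too.
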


For instance, taking $c=1$ within \cref{lemma:calculation_variance}, if $\rho_l(\Fcal,\mu)\leq \eta^l$ for all $l<k$, then with high probability the estimator $\rho_k^\Scal(\Fcal)$, where $\Scal$ is a collection of $\Ocal(\log T)$ subsets of $N$ i.i.d.\ samples from $\mu$, accurately estimates the $k$-shattering probability up to a $(1\pm 0.05)$ factor whenever $\rho_l(\Fcal,\mu)\gtrsim \eta^k$.

This motivates the following strategy. At iteration $t$, instead of aiming to decrease the $k_t$-shattering probability according to a decreasing sequence for $k_t$ as in \cite{goel2023adversarial}, we select
\begin{equation}\label{eq:selection_rule_k_t}
    k_t:=\max\{k'\in[d]:\forall l\in [k'], \rho_{l}^\Scal(\Fcal_t) > \epsilon \cdot \rho_{l-1}^\Scal(\Fcal_t) \}.
\end{equation}
Intuitively, this ensures that the conditions required for \cref{lemma:calculation_variance} hold for $k=k_t-1$ and $c=\rho_{k_t-1} (\Fcal_t,\mu)$. This essentially shows that the estimated $k_t$-shattering probabilities $\rho_{k_t}(\tilde \Fcal,\mu)$ for $\tilde\Fcal\in\{\Fcal_t,\Fcal_t^{x_t\to 0},\Fcal_t^{x_t\to 1}\}$ are accurate as long as $\rho_{k_t} (\Fcal_t,\mu)\gtrsim \frac{k^2}{N}\cdot \rho_{k_t-1} (\Fcal_t,\mu)$, providing the desired guarantee for shattering probability estimates. In addition, by construction of $k_t$, we have $\frac{\rho_{k_t+1}^\Scal(\Fcal_t)}{\rho_{k_t}^\Scal(\Fcal_t)}\leq \epsilon$. Provided that these estimates are accurate, this shows that the probability of a bad abstention at time $t$ is bounded by $\Ocal(\epsilon)$ from \cref{lemma:goel_main_lemma}, yielding an overall abstention error $\Ocal(\epsilon T)$.

This yields our final weak learner $\textsc{WL}(\Tcal,z)$. Before iteration $\max\Tcal$, it considers that only rounds in $\Tcal$ are uncorrupted. After then, it uses the samples $\{x_{t'}:t'\in\Tcal\}$ to estimate $k_t$-shattering probabilities where $k_t$ is defined in \cref{eq:selection_rule_k_t}. The weak learner is summarized in \cref{alg:weak_learner}. As a remark, \cref{alg:weak_learner} accommodates for two types of updates for the restricted function classes $\Fcal_t$: setting the parameter $\texttt{update}=\texttt{always}$ corresponds to the update policy discussed here. The other alternative $\texttt{update}=\texttt{restricted}$ will only be used for adaptive adversaries.

\begin{algorithm}[t]

\caption{Weak Learner $\textsc{WL}(\Tcal,z)$}\label{alg:weak_learner}

\LinesNumbered
\everypar={\nl}
\setcounter{AlgoLine}{0}
%\SetAlgoNoEnd

\KwIn{precision $\epsilon\in[0,1]$, subset of times $\Tcal$ and labels $z=(z_t)_{t\in\Tcal}$, number of subsets $m$, update policy $\texttt{update}\in\{\texttt{always},\texttt{restricted}\}$ }

\vspace{3pt}

\lFor{$t\leq \max \Tcal$}{
    If $t\in\Tcal$, predict $\hat y_t=z_t$, otherwise abstain $\hat y_t=\perp$
}
% Let $\mathcal{S}= \{S_i\}_{i\in[m]}$ be a (nearly even) partition of $\{x_{t'}:t'\in\Tcal\}$ and initialize $\Fcal_t\gets \Fcal$
Let $\mathcal{S} = \{S_i\}_{i=1}^m$ be a partition of $\{x_{t'} : t' \in \Tcal\}$ such that $|S_1| = |S_2| = \cdots = |S_m| = \left\lfloor |\Tcal|/ m \right\rfloor$, discarding any remaining samples if necessary. Initialize $\Fcal_t\gets \Fcal$.

\For{$t=1+\max \Tcal,\ldots,T$\text{ and }$\rho_1^\Scal(\Fcal_t)>\epsilon$}{
    Observe $x_t$
    
    Let $k=\max\{k'\in[d]:\forall l\in [k'], \rho_{l}^\Scal(\Fcal_t) > \epsilon \cdot \rho_{l-1}^\Scal(\Fcal_t) \}$. Note that $\rho_0 ^{\Scal}(\mathcal{F}_t)=1$.
        
        \lIf{$\min_{y\in\{0,1\}} \rho_k^\Scal(\Fcal_t ^{x_t\to y}) \geq 0.9\rho_k^\Scal(\Fcal_t)$}{Abstain $\hat y_t=\perp$}
        \lElse{Predict $\hat y_t = \argmax_{y\in\{0,1\}} \rho_k^\Scal(\Fcal_t ^{x_t\to y})$}
        Observe response $y_t$

        \lIf{$\texttt{update}=\texttt{always}$}{Update $\Fcal_{t+1}= \Fcal_t^{x_t\to y_t}$}
        \lElseIf{$\texttt{update}=\texttt{restricted}$}{If $\hat{y}_t\notin \{y_t,\perp\}$, update $\Fcal_{t+1}= \Fcal_t^{x_t\to y_t}$. Otherwise, keep $\Fcal_{t+1}=\Fcal_t$
}
    }
    
\lIf{$\rho_1^\Scal(\Fcal_t)\leq \epsilon$\text{ and }$x_t\in D(\Fcal_t)$}{Abstain $\hat y_t=\perp$}
\lIf{$\rho_1^\Scal(\Fcal_t)\leq \epsilon$\text{ and }$x_t\notin D(\Fcal_t)$}{Predict consistent label $\hat y_t=f(x_t)$ for any $f\in\Fcal_t$}

\end{algorithm}

\paragraph{Oblivious vs. adaptive adversaries.} In the above exposition, we omitted a major technicality: the guarantee from \cref{lemma:calculation_variance} on shattering probability estimates for a function class $\tilde\Fcal$ applies to i.i.d.\ samples independent from $\tilde \Fcal$. However, \cref{alg:weak_learner} requires estimating shattering probabilities of various function classes $\{\Fcal_t,\Fcal_t^{x_t\to 0},\Fcal_t^{x_t\to 1}:t\in(\max\Tcal,T]\}$. 
For \emph{oblivious} adversaries, we can check that estimates are always accurate by applying a simple union bound: the function classes considered are independent from the i.i.d.\ samples $\Scal$ of $\mu$ used by the learner.

On the other hand, \emph{adaptive} adversaries may select instances $x_t$ so that the class $\Fcal_t^{x_t\to 0}$ is badly correlated with the samples $\Scal$ used by the weak learner. To resolve this issue, we note that the current strategy would work if the reduced function classes $\Fcal_t$ were only updated when mistakes are made (see line 10 of \cref{alg:weak_learner}). Further, the total number of mistakes is very small: at most $l-1:=d^2\log(1/\epsilon)$\footnote{Recall that shattering probabilities are reduced by a factor at each mistake}. Hence, all function classes that can potentially be encountered by the learner when running the weak learner with the update policy $\texttt{update}=\texttt{restricted}$,\footnote{We use a different update policy $\texttt{update}=\texttt{always}$ for oblivious adversaries to achieve stronger misclassification and abstention dependency in the VC dimension $d$ of the function class.} are of the form 
\begin{equation}\label{eq:form_function_classes}
    \Fcal\cap A,\quad A\subseteq\Xcal\times\{0,1\},\, |A|\leq l.
\end{equation}

Fortunately, we can show a uniform concentration result on shattering probability estimates for all these function classes if the $l$-reduction dimension of $\Fcal$ is finite. As a reminder, the $l$-reduction dimension precisely bounds the number of different function classes of the form \cref{eq:form_function_classes} when projected onto test samples.

\begin{lemma}\label{lemma:uniform_convergence_median}
    There exists a universal constant $c_0\geq 1$ such that the following holds. Fix $k,N\geq 1$. Let $l\geq 1$ and $\Fcal:\Xcal\to\{0,1\}$ be a function class with $l$-reduction dimension $D_l$. Let $\Scal:=\{S_i,i\in[m]\}$ be a collection of $m\geq c_0(D_l\log (N D_l) +\log \frac{1}{\delta})$ independent sets of $N$ i.i.d.\ samples from $\mu$. Then, with probability at least $1-\delta$,
    \begin{equation*}
        |\rho_k^\Scal(\Fcal\cap A) - \rho_k(\Fcal\cap A,\mu)| \leq 2 \sigma_k^N (\Fcal\cap A,\mu) ,\quad A\subseteq\Xcal\times\{0,1\}, \,|A|\leq l.
    \end{equation*}
\end{lemma}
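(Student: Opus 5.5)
For a single fixed class $\tilde\Fcal=\Fcal\cap A$, the claim is a median-of-means statement. By Chebyshev, each $\hat\rho_k^{S_i}(\tilde\Fcal)$ deviates from its mean $\rho_k(\tilde\Fcal,\mu)$ by more than $2\sigma_k^N(\tilde\Fcal,\mu)$ with probability at most $1/4$. The U-statistic is unbiased because the $S_i$ consist of i.i.d.\ samples independent of the fixed class. The median can fail only if at least $m/2$ of the $m$ independent estimators fail. By Hoeffding, this happens with probability at most $e^{-m/8}$. The whole proof is therefore a union bound over classes, and the difficulty is that there are infinitely many sets $A$.

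To handle this, I would condition on the realized samples. Let $X=\bigcup_i S_i$, a set of at most $n:=mN$ points. The estimator $\rho_k^\Scal(\Fcal\cap A)$ depends on $A$ only through the projection $(\Fcal\cap A)|_X$, since shattering of subsets of $X$ is determined by the restriction. By \cref{def:reduction_dimension}, there are at most $n^{D_l}$ such projections (taking $n\ge D_l$). However, the target quantities $\rho_k(\Fcal\cap A,\mu)$ and $\sigma_k^N$ are not functions of the projection, and the projections themselves depend on the sample. So a naive union bound over the realized projections is not valid. I would fix this by symmetrization or ghost sampling, in the style of the classical VC proof.

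The symmetrization plan is as follows. Draw an independent ghost collection $\Scal'=\{S'_i\}$ of the same structure. For each class and each $i$, call index $i$ good if $|\hat\rho^{S_i}-\rho|\le 2\sigma$. The median bound holds whenever more than $m/2$ indices are good. I would strengthen the per-index analysis slightly: by Chebyshev with constant $4$ instead of $2$, a single estimator deviates by more than $\sigma$ with probability at most $1/4$.
\begin{itemize}
\item[] \emph{Claim:} if some class has at least $m/2$ bad indices on $\Scal$ at level $2\sigma$, then with constant probability over $\Scal'$, that same class has at least $0.6m$ indices that are good at level $\sigma$ on $\Scal'$. Then at least about $0.1m$ indices $i$ satisfy $|\hat\rho^{S_i}-\hat\rho^{S'_i}|>\sigma$.
\end{itemize}
This event depends only on the projections onto the $2mN$ points of $X\cup X'$, and there are at most $(2mN)^{D_l}$ of them. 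The threshold $\sigma$ is the one nonprojective quantity. I would handle it by letting $\sigma$ be attached to the class and noting that for a fixed class the threshold is deterministic. In the swap argument we exchange $S_i\leftrightarrow S'_i$ independently by Rademacher signs, conditioned on the multiset of pairs. For a fixed projected class, one possible wrinkle is that distinct $A$ sharing a projection have different $\sigma$. I would use the smallest $\sigma$ among them, which only makes the event larger, and the argument still goes through because it holds for each fixed threshold.

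For a fixed projected class and fixed threshold, the indices where the two estimates differ by more than $\sigma$ must see a sign pattern putting the bad side mostly on $\Scal$. Under random swaps, this has probability $e^{-cm}$ by Hoeffding over independent swaps. A union bound over $(2mN)^{D_l}$ projections then gives failure probability $(2mN)^{D_l}e^{-cm}\le\delta$. This holds once $m\gtrsim D_l\log(mN)+\log(1/\delta)$, which in turn follows from $m\ge c_0(D_l\log(ND_l)+\log(1/\delta))$ by the usual self-referential $m\ge a\log m$ argument. The main obstacle is making the symmetrization step rigorous given the class-dependent thresholds $\sigma_k^N(\Fcal\cap A,\mu)$ and medians; constants such as $2$ versus $1$ would be tuned there.
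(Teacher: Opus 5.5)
Your ingredients (per-subset Chebyshev, a ghost sample, counting projections onto the $2mN$ points via the reduction dimension, Hoeffding over random swaps) are the right ones. However, the step you flag as the main obstacle is where the argument, as written, fails.

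The event you reduce to, $\#\{i:|\hat\rho_k^{S_i}-\hat\rho_k^{S'_i}|>\sigma\}\ge 0.1m$, is invariant under every swap $S_i\leftrightarrow S'_i$. Your threshold is also swap-invariant, since it is the minimum $\sigma$ over sets $A$ sharing a projection onto $X\cup X'$. So, conditionally on the unordered pairs, the event has probability $0$ or $1$, and Hoeffding over the signs yields nothing. Your phrase ``bad side mostly on $\Scal$'' points at the needed asymmetry: an \emph{imbalance} between bad indices on $\Scal$ and on $\Scal'$. But ``bad'' means lying outside $I_A:=\rho_k(\Fcal\cap A,\mu)+2\sigma_k^N(\Fcal\cap A,\mu)\cdot[-1,1]$. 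This interval's center, not only its radius, varies among the sets $A$ with the same projection, and taking the smallest $\sigma$ does not address that. Separately, Chebyshev bounds the deviation probability by $1/4$ only at level $2\sigma$, not $\sigma$. Your overlap step would need a ghost level $t\sigma$ with $t\in(\sqrt2,2)$.

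The missing idea, which is the paper's route, is to make the interval part of the class. Lift to sets of $N$ points and take $\Gcal=\{S\mapsto \1_I[\hat\rho_k^S(\Fcal\cap A)] : |A|\le l,\ I\subseteq\Rbb_+\text{ an interval}\}$. The estimate $\hat\rho_k^S(\Fcal\cap A)$ depends only on $(\Fcal\cap A)|_S$, and intervals cut out at most $\binom{m+1}{2}+1$ patterns on $m$ reals. Hence the restriction of $\Gcal$ to $S_1,\ldots,S_m$ has at most $(mN)^{D_l}m(m+1)$ elements, so $\Gcal$ has VC dimension $O(D_l\log(ND_l))$.

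A standard VC uniform convergence bound at accuracy $1/8$ then holds simultaneously for all pairs $(A,I)$. Choose $I=I_A$, where Chebyshev gives mass at least $3/4$. Then more than $m/2$ of the $S_i$ land in $I_A$, and hence so does the median.

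If you prefer your hands-on version, run the symmetrization on $\phi_A(S)=\1[\hat\rho_k^S(\Fcal\cap A)\notin I_A]$, using the \emph{same} interval on both samples. The bad event plus the ghost step gives $\frac1m\sum_i(\phi_A(S_i)-\phi_A(S'_i))\ge \frac14-\eta$. This event is swap-sensitive and has conditional probability $e^{-cm}$. The union bound then runs over the at most $(2mN)^{D_l}\cdot O(m^2)$ pairs of (projection, interval pattern) on the $2m$ lifted points, rather than over projections alone.
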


\cref{lemma:uniform_convergence_median} precisely shows that shattering probabilities of all function classes that may be encountered can be uniformly accurately estimated, completing the last argument for \cref{thm:iid_examples_v2}.

\section{Boosting procedure}
\label{sec:boosting}

In this section, we describe our boosting procedure, which allows to combine the $L$ weak learners into a single learner with the misclassification and abstention error guarantees from \cref{thm:combining_weak_lerners}. Importantly, this procedure will achieve the desired guarantee even for adversarial input.
That is, in this section, we consider the online setup with $L$ weak learners (experts) in which at iteration $t\geq 1$:
\begin{enumerate}
    \item An adversary adaptively selects the recommendation $y_{i,t}\in\{0,1,\perp\}$ for each expert $i\in[L]$ together with the true value $y_t\in\{0,1\}$.
    \item The learner observes the recommendations $y_{i,t}$ for $i\in[L]$ then selects $\hat y_t\in\{0,1,\perp\}$.
    \item At the end of the round, the true value $y_t$ is revealed.
\end{enumerate}

The boosting strategy requires an error bound parameter $M$. Specifically, up to removing predictions of weak learners with large misclassification error, we may assume that all weak learners have at most $M$ misclassification error---for the abstention learning problem, we will use $M=\Ocal(d^2\log T)$ since \cref{thm:iid_examples_v2} guarantees the existence of a good weak learner with this misclassification error. With this property in mind, the high-level idea behind our procedure is as follows. Weak learners which do not abstain for at least $s\gg M$ rounds make at least $s-M$ correct predictions and as a result, have small mistake per prediction average rate $\leq M/(s-M)$. Therefore, assuming that we can focus on these weak learners with at least $s$ predictions, the fraction of incorrect predictions compared to correct predictions throughout all $T$ rounds is at most $\leq M/(s-M)\ll 1$. In turn, majority-vote-inspired algorithms will achieve low misclassification error without having to abstain frequently. Since we do not know a priori which weak learners make $s\gg M$ predictions, we instead use a deletion strategy: we ignore the first $s$ predictions of each algorithm for some adaptively chosen value of $s$, which precisely removes weak learners with at most $s$ predictions. 
In practice, to achieve low misclassifciation error, our complete boosting procedure needs to perform $\log L$ rounds of such deletions.

Before presenting the full multi-layer algorithm, we focus on a single layer. 
For any parameter $s\geq 0$ and minimum number of predictions $C$ per round, we consider the algorithm \textsc{Delete} which deletes the first $s$ predictions of each weak learner, then follows the majority vote $\hat y_t$ if at least $C$ weak learners still make a prediction within $\{0,1\}$, and otherwise abstains $\hat y_t=\perp$. The procedure is formally described in \cref{alg:1_layer_expert}.

\begin{algorithm}[t]

\caption{Algorithm for one layer $\textsc{Delete}_{s,C}$}\label{alg:1_layer_expert}
\LinesNumbered
\everypar={\nl}

\KwIn{number of experts $L$, minimum number of predictions $C$, deletion number $s\geq 0$}

\vspace{3mm}

\For{$t\geq 1$}{
    
    Observe $y_{i,t}$ for all experts $i\in[L]$

    For each $i\in[L]$, if $|\{t'<t: y_{i,t'}\neq\perp\}\}|< s$, delete prediction $\tilde y_{i,t}:= \perp$, otherwise $\tilde y_{i,t}:=y_{i,t}$

    \lIf{$|\{i\in[L]: \tilde y_{i,t}\neq \perp\}| \geq C$}{
        Predict $\hat y_t:=\argmax_{y\in\{0,1\}}|\{i\in[L]: \tilde y_{i,t}=y\}|$
    }
    \lElse{Abstain $\hat y_t=\perp$}

    %Return $(\tilde y_{i,t})_{t\in[L]}$
}
\end{algorithm}

This first algorithm is tailored for structured online adversaries of the following form.

\begin{definition}[Structured adversaries]
\label{def:adversary_structure}
    We say that an adversary includes at most $C$ predictions per round if for any defined round $t\geq 1$, at most $C$ weak learners make a prediction, i.e., $|\set{i\in[L]: y_{i,t}\neq \perp}|\leq C$.

    For any subset of times $\Ucal\subseteq [T]$, we say that an adversary includes at most $M$ mistakes on $\Ucal$ if all weak learners $i\in[L]$ have misclassification error at most $M$ on times in $\Ucal$, i.e., for all rounds $t\geq 1$, $|\set{t'\in[t]\cap \Ucal:y_{i,t'} \notin\{y_{t'},\perp\} }|\leq M$.

\end{definition}

Following arguments inspired from the informal overview above, we obtain the following guarantee for the \textsc{Delete} algorithms.

\begin{lemma}\label{lemma:guarantee_delete_algo}
    Fix $C\geq 0$ and $M\geq 1$. Consider an adversary for $L$ weak learners which includes at most $C$ predictions per round and at most $M$ mistakes on some (potentially adaptive) set of times $\Ucal$. Denote by $\hat y_t(s)\in\{0,1,\perp\}$ the value selected by $\textsc{Delete}_{s,\frac{C}{2}}$ at iteration $t$. Then, for any $s_{\max} \geq 1$ and defined iteration $n\geq 1$, there exists $s\in\{0,\ldots,s_{\max}\}$ such that
    \begin{equation*}
        \abs{ \set{t\in[n]\cap\Ucal:  \hat y_t(s) \notin \{y_t,\perp\} }}  \leq \frac{8M \ceil{\log L}}{s_{\max}+1}|\Ucal|.
    \end{equation*}
\end{lemma}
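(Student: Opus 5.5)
The plan is an averaging (pigeonhole) argument over the deletion parameter $s\in\{0,\ldots,s_{\max}\}$, combined with a charging scheme that turns each mistake of $\textsc{Delete}_{s,C/2}$ into many mistakes of individual weak learners.

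\textbf{Step 1 (local charging).} Fix $s$ and a time $t\in[n]\cap\Ucal$ at which $\hat y_t(s)\notin\{y_t,\perp\}$. Since $\textsc{Delete}_{s,C/2}$ predicted, at least $C/2$ experts survive deletion and predict at $t$. Since it followed the majority and erred, at least $C/4$ of these surviving experts predicted the wrong label. Each of them was making at least its $(s+1)$-th prediction. Conversely, by the structural assumption at most $C$ experts predict at $t$. So each mistake of $\textsc{Delete}_{s,C/2}$ on $\Ucal$ is charged to at least $C/4$ pairs $(i,t)$. Each such pair is a wrong prediction of expert $i$ on $\Ucal$ whose index $j_{i,t}$ among expert $i$'s predictions satisfies $j_{i,t}>s$. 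This gives
\begin{equation*}
\textsc{Err}(s)\leq \frac{4}{C}\,W_s,\qquad W_s:=\abs{\set{(i,t): t\in[n]\cap\Ucal,\ y_{i,t}\notin\{y_t,\perp\},\ j_{i,t}>s}}.
\end{equation*}

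\textbf{Step 2 (normalizing by $C|\Ucal|$ rather than by $L$).} A naive bound $W_s\leq ML$ is useless, so $W_s$ must be tied to the prediction budget $C|\Ucal|$ on $\Ucal$. The key observation is about an expert that contributes a wrong pair with index $j>s$: among its first $j$ predictions at most $M$ are mistakes on $\Ucal$. I would group the experts active at each erroneous time by the dyadic scale of their number of predictions made so far. There are $\ceil{\log L}$ relevant scales, because the number of experts surviving a threshold can halve at most $\log L$ times before vanishing. I would then split $\{0,\ldots,s_{\max}\}$ into $\ceil{\log L}$ consecutive blocks of length about $(s_{\max}+1)/\ceil{\log L}$. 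Within a block where the surviving population of experts does not halve, every surviving expert that errs has a long run of predictions. Combined with the per-round cap of $C$ predictions, this lets each of its $\leq M$ mistakes be paid for by its many predictions, and summed over rounds these predictions contribute at most about $C|\Ucal|$.

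\textbf{Step 3 (averaging).} Summing Step 1 over $s$ in a good block, each wrong pair $(i,t)$ is counted at most $\min(j_{i,t},s_{\max}+1)$ times. Using Step 2, I aim to show
\begin{equation*}
\sum_{s}\textsc{Err}(s)\lesssim M\ceil{\log L}\,|\Ucal|.
\end{equation*}
Dividing by the $s_{\max}+1$ choices of $s$ then yields some $s$ with $\textsc{Err}(s)\leq 8M\ceil{\log L}|\Ucal|/(s_{\max}+1)$. The constant $8$ comes from the factor $4/C$ in Step 1 times a factor $2$ from the halving argument.

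The main obstacle is Step 2. Deletion counts predictions at all times, whereas the bound must scale with $|\Ucal|$ only, and the adversary, together with $\Ucal$, may be adaptive. My first attempt would be a potential argument on the number $K_s$ of experts with at least $s$ predictions. I would then pick the block of $s$-values on which $K_s$ is roughly constant, which must exist among $\ceil{\log L}$ dyadic levels. If this fails, the fallback is to redefine the charging so that a wrong pair is charged only to the predictions made on $\Ucal$ at erroneous rounds.
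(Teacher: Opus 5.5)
Your Step 1 is correct and is exactly step (iii) of the paper's proof. The problem is that Steps 2--3 never establish the key inequality, and the one concrete claim in Step 2 is false. Take a block of $s$-values on which $K_s=\abs{\set{i:r_i>s}}$ does not halve. It is not true that every surviving expert that errs has a long run of predictions: an expert with $r_i=s+1$ survives under $\textsc{Delete}_{s,C/2}$ and may err on that last prediction.

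The paper's argument is an averaging one. Let $r_i$ be the number of predictions of expert $i$ on $[n]\cap\Ucal$, and let $q_l$ be the $2^{-l}$-quantiles of $(r_i)_{i:r_i>0}$, with $q_0=0$.
\begin{itemize}
\item For $s\in[q_{l-1},q_l)$, at least half of the $K_s$ survivors have $r_i\ge q_l$.
\item Hence surviving predictions number at least $\frac{q_l-s}{2}K_s$, while surviving mistakes number at most $MK_s$.
\item Capping the former by $C|\Ucal|$ and applying Step 1 gives $\textsc{Err}(s)\le 8M|\Ucal|/(q_l-s)$.
\item The paper then takes the single value $s=q_{l-1}$ at the left end of a gap of length at least $(s_{\max}+1)/\ceil{\log L}$. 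If instead $r_{\max}\le s_{\max}$, it takes $s=r_{\max}$, where no errors occur.
\end{itemize}
Averaging this per-$s$ bound over a block, as your Step 3 suggests, produces a harmonic sum and loses a logarithmic factor. Also, after summing over one block you may only divide by its length, not by $s_{\max}+1$.

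In fact, the counting observation at the start of your Step 3 closes the proof by itself, with no dyadic decomposition. Each expert has at most $M$ wrong pairs on $\Ucal$, each with index $j_{i,t}\le r_i$, so
\begin{equation*}
\sum_{s=0}^{s_{\max}} W_s=\sum_{(i,t)\text{ wrong}}\min(j_{i,t},s_{\max}+1)\le M\sum_{i} r_i\le MC|\Ucal| .
\end{equation*}
Hence $\sum_s\textsc{Err}(s)\le 4M|\Ucal|$, so some $s$ has $\textsc{Err}(s)\le 4M|\Ucal|/(s_{\max}+1)$. This is stronger than the lemma for $L\ge 2$.

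This argument, like the paper's, needs the deletion index $j_{i,t}$ to count only predictions in $\Ucal$. The paper uses this silently in the identity
\begin{equation*}
\sum_{t\in[n]\cap\Ucal,i}\1[\tilde y_{i,t}\neq\perp]=\Ebb[r-s\mid r>s]\cdot\abs{\set{i:\exists t\in[n]\cap\Ucal,\ \tilde y_{i,t}\neq\perp}} .
\end{equation*}
The obstacle you flag is genuine, and no re-charging can remove it. Suppose deletion counts all prior predictions, as in $\textsc{Delete}$, and $\Ucal\subsetneq[n]$. Then the statement fails:
\begin{itemize}
\item Take $C=2$, $M=1$, $L=2K$.
\item First, outside $\Ucal$, experts predict correctly, two per round, until each has made $s_{\max}$ predictions.
\item Then, in $K$ rounds forming $\Ucal$, experts $2k-1,2k$ both predict the wrong label in the $k$-th round.
\item Every $\textsc{Delete}_{s,1}$ with $s\le s_{\max}$ errs on all of $\Ucal$. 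This contradicts the bound once $s_{\max}+1>8\ceil{\log L}$.
\end{itemize}
In this example $\Ucal$ is exactly the set of prediction times of $\textsc{Delete}_{s_{\max},1}$. So you must adopt the convention that deletions are counted within $\Ucal$, or take $\Ucal=[n]$.
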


In words, \cref{lemma:guarantee_delete_algo} shows that there is always a choice of the deletion number $s\in\{0,\ldots,s_{\max}\}$ that yields good misclassification error for $\textsc{Delete}_{s,C/2}$. We can therefore aggregate these algorithms using a standard learning-with-experts to essentially achieve the best misclassification error among these. For simplicity, we can abstain whenever any of these $\textsc{Delete}_{s,C/2}$ abstains, and otherwise we use the Weighted Majority Algorithm (WMA), which conveniently is deterministic. We include the full description of the resulting algorithm $\textsc{Aggregate}_{s_{\max},C/2}$ in \cref{alg:aggregate_1_layer}. Importantly, all considered algorithms $\textsc{Delete}_{s,C/2}$ delete at most $s_{\max}$ predictions for each weak learner, which will be crucial to bound the overall abstention error. Specifically, up to the $s_{\max}$ predictions deleted for each weak learner, these algorithms always make a prediction when at least $C/2$ weak learners made a prediction.

\begin{algorithm}[t]

\caption{Aggregate algorithm for one layer $\textsc{Aggregate}_{s_{\max},C}$}\label{alg:aggregate_1_layer}
\LinesNumbered
\everypar={\nl}

\KwIn{number of experts $L$, minimum number of predictions $C$, maximum deletion $s_{\max}\geq 0$}

\vspace{3mm}

Initialize weights $w_s=1$ for all $s\in\{0,\ldots,s_{\max}\}$

\For{$t\geq 1$}{
    Let $\hat y_t(s)$ denote the selection of $\textsc{Delete}_{s,C}$ for $s\in\{0,\ldots,s_{\max}\}$

    \lIf{$\exists s\leq s_{\max}: \hat y_t(s)=\perp$}{Abstain $\hat y_t=\perp$} % and keep $w_{s,t+1}=w_{s,t}$ for $s\in\{0,\ldots,s_{\max}\}$}
    \Else{
    Select $\hat y_t = \argmax_{y\in\{0,1\}} \sum_{s\leq s_{\max}: \hat y_t(s)=y} w_s$ %w_{s,t}$

    For $s\in\{0,\ldots,s_{\max}\}$, if $\hat y_t(s)\neq y_t$, update 
    $w_s\gets w_s/2$
    %$w_{s,t+1}=\frac{w_{s,t}}{2}$; otherwise keep $w_{s,t+1}=w_{s,t}$
    }
}
\end{algorithm}

Using standard misclassification guarantees for WMA \cite{littlestone1994weighted}, we obtain the following bound on the misclassification error of $\textsc{Aggregate}_{s_{\max},C}$.

\begin{lemma}\label{lemma:guarantee_aggregate_algo}
    Fix $C,s_{\max}\geq 0$ and $M\geq 1$. Denote by $\Ucal$ the set of times when $\textsc{Delete}_{s_{\max},C/2}$ makes a prediction $\hat y_t(s_{\max})\in\{0,1\}$, i.e. $\Ucal=\set{t:\abs{\set{i\in[L]: y_{i,t}\neq \perp \text{ and } \abs{\set{t'<t:y_{i,t'}\neq \perp}} \geq s_{\max} }} \geq C/2}$. Note that $\textsc{Aggregate}_{s_{\max},C/2}$ exactly makes predictions on times in $\Ucal$.
    
    Consider an adversary for $L$ weak learners which includes at most $C$ predictions per round and at most $M$ mistakes on $\Ucal$. Denote by $\hat y_t$ the selected value of $\textsc{Aggregate}_{s_{\max},C/2}$ at iteration $t$. Letting $n$ be the final iteration, we have
    \begin{equation*}
         \abs{ \set{t\in[n]:  \hat y_t \notin \{y_t,\perp\} }} \leq \frac{24M \ceil{\log L}}{s_{\max}+1}|\Ucal| + 3\log (s_{\max}+1).
    \end{equation*}
\end{lemma}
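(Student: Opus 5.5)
The plan is to reduce to the standard mistake bound of the Weighted Majority Algorithm, applied only on the rounds where $\textsc{Aggregate}_{s_{\max},C/2}$ makes a prediction, and then to control the best expert through \cref{lemma:guarantee_delete_algo}.

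\textbf{Step 1: predictions happen exactly on $\Ucal$.} Deletion is monotone in $s$: if learner $i$ has at least $s_{\max}$ prior predictions, then it has at least $s$ prior predictions for every $s\leq s_{\max}$. Hence the set of surviving learners for $\textsc{Delete}_{s,C/2}$ contains the surviving set for $\textsc{Delete}_{s_{\max},C/2}$.

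It follows that $\textsc{Delete}_{s,C/2}$ predicts whenever $\textsc{Delete}_{s_{\max},C/2}$ predicts. So no $\textsc{Delete}_{s,C/2}$ abstains exactly on $\Ucal$, and therefore $\textsc{Aggregate}_{s_{\max},C/2}$ predicts exactly on $\Ucal$. On the remaining rounds it abstains and incurs no misclassification, and the weights are not updated there.

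\textbf{Step 2: WMA restricted to $\Ucal$.} On the times in $\Ucal$, the aggregate is just the deterministic WMA with halving, run over the $s_{\max}+1$ experts $\{\textsc{Delete}_{s,C/2}\}_{s\leq s_{\max}}$. Each of these experts outputs a label in $\{0,1\}$ at every round of $\Ucal$.

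I will invoke the classical bound of \cite{littlestone1994weighted}. Whenever the aggregate errs, the total weight $W$ shrinks by a factor at least $3/4$. Since $W\geq 2^{-m_s}$ for every $s$, where $m_s$ is the number of mistakes of expert $s$ on $\Ucal$, this gives
\[
\#\text{mistakes} \leq \frac{m_s+\log_2(s_{\max}+1)}{\log_2(4/3)} \leq 2.41\,\bigl(m_s+\log_2(s_{\max}+1)\bigr).
\]
This holds simultaneously for every $s$, so I take $s$ to minimize $m_s$.

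\textbf{Step 3: bound the best expert with \cref{lemma:guarantee_delete_algo}.} Apply \cref{lemma:guarantee_delete_algo} with the set $\Ucal$ and $n$ equal to the final iteration. Its hypotheses hold: at most $C$ predictions per round, and at most $M$ mistakes on $\Ucal$. The lemma gives some $s$ with
\[
m_s \leq \frac{8M\ceil{\log L}}{s_{\max}+1}|\Ucal|.
\]

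The set $\Ucal$ is adaptive, since it depends on the learners' prediction history. The lemma explicitly allows this.

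\textbf{Step 4: combine constants.} Substituting Step 3 into Step 2 and multiplying out gives $2.41\cdot 8\leq 24$ for the main term. For the additive term, I need $2.41\log_2(s_{\max}+1)\leq 3\log(s_{\max}+1)$. This holds if $\log$ denotes $\log_2$, or once the base conversion is absorbed into the constant $3$ if $\log$ is natural; I will state which convention is used.

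\textbf{Main obstacle.} The subtle step is Step 1. One must check that restricting WMA to $\Ucal$ is legitimate, which relies on two facts: weights are updated only on prediction rounds, and every expert predicts on all of $\Ucal$, which is what the monotonicity argument establishes. Once these are verified, the rest is bookkeeping.
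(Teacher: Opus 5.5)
Your proposal is correct and follows the paper's proof step for step. It uses the same three ingredients: monotonicity in $s$, so that every $\textsc{Delete}_{s,C/2}$ predicts on $\Ucal$ and $\textsc{Aggregate}$ predicts and updates weights exactly there; the WMA mistake bound restricted to $\Ucal$; and \cref{lemma:guarantee_delete_algo} to bound the best deletion level. Your explicit halving-WMA constant $1/\log_2(4/3)\approx 2.41$ is slightly sharper than the paper's factor $3$. However, the additive term fits under $3\log(s_{\max}+1)$ only when $\log=\log_2$, which is the paper's convention given its $\ceil{\log L}$ dyadic layers. With natural logarithms the constant would be about $3.48$, so it cannot be ``absorbed'' as you suggest.
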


\begin{remark}
    Note that the guarantee \cref{lemma:guarantee_aggregate_algo} holds whenever weak learner make few mistakes at times when \textsc{Aggregate} makes a prediction---even if weak learners make many mistakes outside of predictions times for \textsc{Aggregate}. This subtlety will not be important for the present abstention model where we receive the true label $y_t$ at each iteration $t$, but will be useful when we consider the censored abstention model in which we only receive labels at times of prediction.
\end{remark}

We are now ready to construct our full algorithm, which uses the subroutine \textsc{Aggregate} on multiple layers. To ensure that each weak learner $i\in[L]$ makes at most $M$ mistakes, we first delete all predictions $y_{i,t}\in\{0,1\}$ from weak learners $i\in[L]$ which made $M$ misclassification mistakes in previous rounds, that is, we replace $y_{i,t}\gets \perp$. We first start with $\textsc{Aggregate}_{s_{\max},L/2}$, forming the first layer of our final algorithm. From \cref{lemma:guarantee_aggregate_algo}, this achieves low misclassification error among all times when the number of weak learners making a prediction lied within $[L/2,L]$, after removing the first $s_{\max}$ predictions of all weak learners. On the other hand, it may abstain when less than $L/2$ weak learners make predictions. For these remaining times, we then use $\textsc{Aggregate}_{s_{\max},L/4}$, forming the second layer. This ensures low misclassification error whenever the number of weak learners making a prediction lied within $[L/4,L/2]$, after removal of $s_{\max}$ predictions per weak learner in this layer. We continue building process until the $\ceil{\log L}$-th layer, which takes care of all remaining times. The final algorithm $\textsc{Boosting}_{s_{\max}}$ is summarized in \cref{alg:main_exp_version}. Importantly, the subroutine $\textsc{Aggregate}_{s_{\max},2^{-j}L}$ is run on different subset of times for each layer $j$, which corresponds to when the number of weak learner predictions (after potential modifications) is at most $2^{1-j}L$. Within \cref{alg:main_exp_version}, we denote by $\Qcal_j$ the set of times on which this layer-$j$ subroutine is run, and write $\textsc{Aggregate}_{s_{\max},2^{-j}L}^{\Qcal_j}$ to emphasize this point.

\begin{algorithm}[t]

\caption{Final boosting algorithm $\textsc{Boosting}_{s_{\max}}$}\label{alg:main_exp_version}
\LinesNumbered
\everypar={\nl}

\KwIn{number of experts $L$, maximum per-layer deletion $s_{\max}$, mistake tolerance $M$}

\vspace{3mm}

For $j\in[\ceil{\log L}]$ initialize times $\Qcal_j\gets \emptyset$ and deletion counts $s_{i,j}\gets s_{\max}$ for all $i\in[L]$

\For{$t\geq 1$}{
    Observe recommendation $y_{i,t}$ for each expert $i\in[L]$

    For $i\in[L]$, if $|\{s<t: y_{i,s}\notin\{y_s,\perp\}\}|\geq M$, delete prediction: $z_{i,t}\gets \perp$; otherwise $z_{i,t}\gets y_{i,t}$

    Define $n_t\gets |\{i\in[L]: z_{i,t}\neq \perp\}|$
    
    \While{$n_t>0$}{
        Let $j$ such that $n_t\in(2^{-j}L,2^{1-j}L]$

        Let $\hat y_t^{(j)}$ be the value selected by $\textsc{Aggregate}_{s_{\max},2^{-j}L}^{\Qcal_j}$ for a new iteration with weak learner recommendations $(z_{i,t})_{i\in[L]}$. Add $\Qcal_j\gets \Qcal_j\cup\{t\}$

        \For{$i\in[L]$ with $z_{i,t}\neq \perp$ and $s_{i,j}>0$}{
            Delete prediction: $z_{i,t}\gets \perp$ and update $s_{i,j}\gets s_{i,j}-1$
        }

        \lIf{$\hat y_t^{(j)}\neq \perp$}{
            Predict $\hat y_t=\hat y_t^{(j)}$ and \textbf{break}
        }
        Update $n_t \gets |\{i\in[L]: z_{i,t}\neq \perp\}|$
    }

    \lIf{$n_t=0$}{Abstain $\hat y_t=\perp$}

}
\end{algorithm}
Combining together the learning guarantees for all \textsc{Aggregrate} subroutines from \cref{lemma:guarantee_aggregate_algo}, we obtain the desired guarantee for the \textsc{Boosting} algorithm.

\paragraph{Proof sketch for \cref{thm:combining_weak_lerners}.} First, to check that the algorithm is well-defined we can verify that the while loop on each iteration $t\geq 1$ terminates. Indeed, on each loop, the $\textsc{Aggregate}_{s_{\max},2^{-j}L}$ makes a prediction $\hat y_t^{(j)}\in\{0,1\}$ if at least $2^{-j}L$ learner predictions remain after deleting $s_{\max}$ predictions per weak learner (see line 11 of \cref{alg:main_exp_version}). Roughly speaking, at each loop at least half of the predictions are deleted, ensuring that $j$ strictly increases on each loop and hence terminates after $\ceil{\log L}$ rounds when no weak learner predictions remain.

Next, since the boosting procedure uses the predictions of the $\textsc{Aggregate}_{s_{\max},2^{-j}L}$ algorithm when there were at most $2^{1-j}L$ weak learner predictions (see definition of $j$ on line 7 of \cref{alg:main_exp_version}), \cref{lemma:guarantee_aggregate_algo} bounds the misclassification error for each layer $j\in[\ceil{\log L}]$. The final misclassification is essentially the sum of misclassification error for all layers, incurring an extra factor $\log L$.

For the abstention error, note that by construction each layer $j\in[\ceil{\log L}]$ deleted at most $s_{\max}$ predictions for each weak learner (see the initialization of deletion counts in line 1 of \cref{alg:main_exp_version}). In summary, throughout the procedure only deleted $\leq s_{\max} \ceil{\log L}$ predictions for weak learner with at most $M$ misclassification error. This gives the desired abstention error bound since the \textsc{Boosting} algorithm makes a prediction whenever at least one weak learner prediction remains (see line 15 of \cref{alg:main_exp_version}).\hfill $\blacksquare$

\section{Censored sequential learning with abstentions}
\label{sec:censored_model}

In this section, we turn our attention to the \emph{censored} model in which the only difference with the learning protocol described in \cref{sec:framework} is that in step 4, the learner only observes the label $y_t=f^\star(x_t)$ if it made a prediction $\hat y_t\in\{0,1\}$. This naturally models settings in which we only receive feedback if an action was taken. In this variant model, we show that all positive results described in \cref{sec:main results} for \algoname still hold, up to minor modifications of the algorithm, and slightly worse learning guarantees dependency in the VC dimension $d$.

The main difficulty in this censored setting is that the labels used to run both weak learners and the boosting procedure are not necessary available. To resolve this issue we make the following modifications to the algorithms.

\paragraph{Extending weak learners.} Note that weak learners require labels $y_t$ to update the function class $\Fcal_{t+1}\gets \Fcal_t^{x_t\to y_t}$, see lines 9-10 of \cref{alg:weak_learner}. However, recall that updates are only required infrequently---$\Ocal(d\log 1/\epsilon)$ times for the $\texttt{update}=\texttt{restricted}$ update policy, see \cref{subsec:overview_algorithm} for intuitions. Up to slightly increasing the number of weak learners, we may therefore add these labels to the weak learner parameters. For convenience, we also include times of updates. Altogether, weak learners $\textsc{WL}(\Tcal,\Ucal,z,u)$ now have four parameters: the subset of times $\Tcal$ used for shattering probability estimation, corresponding labels $z=(z_t)_{t\in\Tcal}$, a set of function-class updates times $\Ucal\subset [T]$ and corresponding labels $u=(u_t)_{t\in\Ucal}$. Formally, we replace the update rules in lines 9-10 of \cref{alg:weak_learner} by
\begin{quote}
    If $t\in\Ucal$, update $\Fcal_{t+1}=\Fcal_t^{x_t\to u_t}$; otherwise keep $\Fcal_{t+1}=\Fcal_t$.
\end{quote}

\begin{corollary}
\label{cor:iid_examples_censored}
There exists a universal constant $c_0>0$ such that the following holds. Fix $\Fcal:\Xcal\to\{0,1\}$ of VC dimension $d$, $\epsilon\in(0,1)$, and an adversary. Let $N= \ceil{2000d^2 /\epsilon}$.
Let $\Tcal$ denote the set of first $mN$ non-corrupted rounds. Define $z:=(y_t)_{t\in\Tcal}$. Let $\Ucal$ denote the (random) set of misclassifcation times for $\textsc{WL}(\Tcal,z)$ with $\texttt{update}=\texttt{restricted}$, and define $u=(y_t)_{t\in\Ucal}$.

If (1) the adversary is oblivious and $m \geq 80 d^2\log(1/\epsilon)\log T$, or (2) the adversary is adaptive and $m \geq c_0(D\log(D)+8D+3\log(d/\epsilon))$ where $D$ is the $\ceil{5d^2\log \frac{1}{\epsilon}}$-reduction dimension of $\Fcal$, then
\begin{equation*}
    \textsc{MisErr}(\Tcal,\Ucal,z,u) < 5d^2\log\frac{1}{\epsilon} \quad \text{and} \quad \Ebb[\textsc{AbsErr}(\Tcal,\Ucal,z,u)] \leq 18\epsilon T,
\end{equation*}
where $\textsc{MisErr}(\Tcal,\Ucal,z,u)$ and $\textsc{AbsErr}(\Tcal,\Ucal,z,u)$ respectively denote the misclassification and abstention error of the weak learner $\textsc{WL}(\Tcal,\Ucal,z,u)$ as defined in \cref{alg:weak_learner}.
\end{corollary}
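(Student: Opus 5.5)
The plan is to reduce \cref{cor:iid_examples_censored} to \cref{thm:iid_examples_v2} through a coupling argument. The first step is to observe that $\textsc{WL}(\Tcal,\Ucal,z,u)$, with $\Ucal$ the misclassification times of $\textsc{WL}(\Tcal,z)$ run with $\texttt{update}=\texttt{restricted}$ and $u=(y_t)_{t\in\Ucal}$, produces exactly the same outputs as $\textsc{WL}(\Tcal,z)$ with $\texttt{update}=\texttt{restricted}$. I would prove this by induction on $t$. Before $\max\Tcal$, both learners follow line 1 identically. Afterwards, suppose the two learners share the class $\Fcal_t$ at time $t$. Then they compute the same $k$ and the same output $\hat y_t$, since $\Scal$ is identical and $x_t$ is identical. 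The restricted learner updates to $\Fcal_t^{x_t\to y_t}$ exactly when $t\in\Ucal$, and the modified learner updates to $\Fcal_t^{x_t\to u_t}$ with $u_t=y_t$ under the same condition. The invariant is therefore preserved.

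This coupling needs the censoring to be harmless in one respect: the adversary's behaviour, including the realized $x_t$ and the set $\Tcal$, must be well defined independently of the learner being analysed. The weak learner is a fictitious process evaluated on the realized stream, so this holds. Given the coupling, $\textsc{MisErr}$ and $\textsc{AbsErr}$ of the two learners coincide pathwise.

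For case (2), adaptive adversaries, the hypothesis on $m$ is identical to that of \cref{thm:iid_examples_v2}(2). The conclusion then follows directly.

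Case (1), oblivious adversaries, is the main obstacle. \cref{thm:iid_examples_v2}(1) only covers $\texttt{update}=\texttt{always}$, whereas the coupling yields the restricted policy. Under an oblivious adversary, the classes $\Fcal_t$ produced by the restricted policy depend on the samples in $\Scal$ through the past decisions, so the simple union bound used in the oblivious analysis no longer applies. My plan is to rerun the argument behind \cref{thm:iid_examples_v2} with a union bound over update histories instead of over times:
\begin{enumerate}
\item The restricted learner performs fewer than $l:=\ceil{5d^2\log(1/\epsilon)}$ updates. This follows from the misclassification bound argument, in which each mistake shrinks some estimated shattering probability by a constant factor.
\item Conditionally on the oblivious sequence of instances, every class ever queried has the form $\Fcal\cap A$, where $A$ is a set of at most $l$ labeled points drawn from the fixed stream of length $T$, labeled by $f^\star$. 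The queried classes also include the one-step extensions $\Fcal_t^{x_t\to y}$. The number of such classes is at most $(2T)^{l+1}$.
\item The non-$\Tcal$ instances are fixed and the $\Tcal$ instances are i.i.d.\ from $\mu$. I then apply the median-of-means concentration of \cref{lemma:uniform_convergence_median} to each fixed candidate class, which has failure probability $e^{-\Omega(m)}$, and take a union bound over the candidates. This requires $m\gtrsim l\log T+\log(dT/\epsilon)$, which is guaranteed by $m\geq 80d^2\log(1/\epsilon)\log T$.
\end{enumerate}

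One technicality in step 2 is that instances in $\Tcal$ could themselves appear in $A$. Updates only occur after $\max\Tcal$, however, and at those times the adversary's instances are fixed while clean draws are independent of $\Scal$. So the candidate sets can be enumerated independently of $\Scal$, possibly after conditioning on the clean draws outside $\Tcal$.

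With uniform accuracy of the estimates established, the remainder of the proof of \cref{thm:iid_examples_v2} carries over unchanged. That part uses \cref{lemma:calculation_variance} and the choice of $k$ to control the estimation error, and \cref{lemma:goel_main_lemma} to obtain an $\Ocal(\epsilon)$ per-round probability of a bad abstention. This gives the bounds $\textsc{MisErr}<5d^2\log\frac1\epsilon$ and $\Ebb[\textsc{AbsErr}]\leq 18\epsilon T$.
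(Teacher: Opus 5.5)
Your proposal is correct and matches the paper's proof. Like the paper, you identify $\textsc{WL}(\Tcal,\Ucal,z,u)$ with the restricted-update learner, take case (2) directly from \cref{thm:iid_examples_v2}, and in case (1) union-bound median concentration over all candidate classes indexed by possible mistake sets of size at most $l$ after $\max\Tcal$, which are independent of $\Scal$. The per-class $e^{-\Omega(m)}$ tail you need is \cref{thm:median}, not \cref{lemma:uniform_convergence_median}, since the latter requires a reduction-dimension bound that is unavailable in case (1); your explicit inclusion of the one-step extensions $\Fcal_t^{x_t\to y}$ in the $(2T)^{l+1}$ count is, if anything, more careful than the paper's $T^l\cdot 3(l+1)$.
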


\paragraph{Adapting the boosting strategy.} The only place where the $\textsc{Boosting}$ algorithm uses labels is in line 4 of \cref{alg:main_exp_version}, where we delete predictions of weak learners which made at least $M$ mistakes in prior rounds. This cannot be implemented in the censored model since we do not have access to values $y_t$ when the final algorithm abstained $\hat y_t=\perp$. Naturally, we can relax this procedure by deleting predictions of weak learners for which we have \emph{observed} at least $M$ mistakes in prior rounds when the true value $y_t$ was observed. Formally, we replace line 4 of \cref{alg:main_exp_version} by
\begin{quote}
    For $i\in[L]$, if $|\{s<t: \hat y_t\neq \perp\text{ and }y_{i,s}\notin\{y_s,\perp\}\}|\geq M$, delete prediction: $z_{i,t}\gets \perp$; otherwise $z_{i,t}\gets y_{i,t}$.
\end{quote}
We denote by \textsc{C-Boosting} the corresponding boosting strategy for the censored setting.

This modification is sufficient to ensure that after this round of deletions, at each layer $j\in[\ceil{\log L}]$, each weak learner makes at most $M$ incorrect predictions at times when the corresponding $\textsc{Aggregate}_{s_{\max},2^{-j}L}$ algorithm makes predictions. While this does not ensure few mistakes for each weak learner, this is sufficient to apply the learning guarantee on \textsc{Aggregate} from \cref{lemma:guarantee_aggregate_algo}. Hence, the same proof immediately yields the following variant of \cref{thm:combining_weak_lerners}. 

\begin{corollary}
\label{cor:combining_weak_lerners_censored}
    Fix parameters $M\geq 0$ and $s_{\max}\in [T]$. Then, for $L$ experts, $\textsc{C-Boosting}_{s_{\max},M}$ achieves the following guarantee against any (adaptive) adversary: 
    \begin{align*}
        \textsc{MisErr}\leq c_0\frac{(M+\log T)T\log L}{s_{\max}} \quad\text{and}\quad   
        \textsc{AbsErr} \leq s_{\max}\ceil{\log L} + \min_{i\in[L]:M_i<M} A_i,
    \end{align*}
    % for some universal constant $c_0>0$,
    % where $M_i$ and $A_i$ respectively denote the misclassification and abstention error of expert $i\in [L]$.
    where $A_i$ denotes the abstention error of expert $i\in[L]$, and 
    $$M_i\coloneqq \abs{\set{t\in [T]:y_{i,t}\notin\{y_t, \perp\},\hat{y}_t\neq\perp}}$$ 
    denotes the misclassification error of expert $i$ during times in $\set{t\in [T]:\hat{y}_t\neq\perp}$ when $\textsc{C-Boosting}_{s_{\max},M}$ makes a prediction.
\end{corollary}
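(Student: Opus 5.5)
We follow the proof of \cref{thm:combining_weak_lerners} and check where labels of the final algorithm's abstention times were used. \textsc{C-Boosting} differs from \textsc{Boosting} only in line 4, where an expert is now discarded once it has accumulated $M$ \emph{observed} mistakes, i.e.\ mistakes at times $s$ with $\hat y_s\neq\perp$. The \textsc{Aggregate} subroutines themselves only update weights at times when they predict. A subroutine in layer $j$ predicts only if the overall algorithm predicts at that time, because the \textbf{break} in line 12 outputs $\hat y_t=\hat y_t^{(j)}$. So every label they use is observed, and the algorithm is implementable in the censored model.

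\textbf{Misclassification bound.} Fix a layer $j$. Let $\Ucal_j\subseteq\Qcal_j$ be the set of times at which $\textsc{Aggregate}^{\Qcal_j}_{s_{\max},2^{-j}L}$ predicts. These are exactly times when $\hat y_t\neq\perp$, so $y_t$ is observed there. By the new line 4, any recommendation $z_{i,t}\neq\perp$ passed to layer $j$ comes from an expert with fewer than $M$ observed mistakes before $t$. Hence, restricted to $\Ucal_j$, the adversary fed to layer $j$ includes at most $M$ mistakes on $\Ucal_j$ in the sense of \cref{def:adversary_structure}, up to one extra mistake at the crossing time, which is absorbed by using $M+1$. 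It also includes at most $2^{1-j}L$ predictions per round, by the choice of $j$ in line 7. These are precisely the hypotheses of \cref{lemma:guarantee_aggregate_algo} with $C=2^{1-j}L$. As the remark after that lemma notes, mistakes outside the prediction times are irrelevant. Summing the bound of \cref{lemma:guarantee_aggregate_algo} over $j\le\ceil{\log L}$, using $\sum_j|\Ucal_j|\le T$, and bounding $3\log(s_{\max}+1)\ceil{\log L}\lesssim \frac{T\log T\log L}{s_{\max}}$ via $s_{\max}\le T$, gives $c_0(M+\log T)T\log L/s_{\max}$.

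\textbf{Abstention bound.} Fix an expert $i$ with $M_i<M$, where $M_i$ is its mistake count at prediction times of \textsc{C-Boosting}. Its observed mistake count never reaches $M$, so line 4 never deletes its predictions. Each layer deletes at most $s_{\max}$ of its predictions, so outside at most $s_{\max}\ceil{\log L}$ times its recommendation survives all deletions. At any such time $n_t>0$ throughout the while loop, so the algorithm does not abstain via line 15. The termination argument of \cref{thm:combining_weak_lerners} shows the loop ends with a prediction whenever some recommendation survives. Hence the algorithm abstains at most $s_{\max}\ceil{\log L}$ times beyond the times when expert $i$ abstains. Counting only non-corrupted rounds gives $\textsc{AbsErr}\le s_{\max}\ceil{\log L}+A_i$, and we take the minimum over such $i$.

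\textbf{Main obstacle.} The delicate step is checking that the mistake-count hypothesis of \cref{lemma:guarantee_aggregate_algo} is stated relative to the subroutine's own prediction set $\Ucal$. This holds because of its formulation with the set $\Ucal$ of times when $\textsc{Delete}_{s_{\max},C/2}$ predicts. We must also confirm that layer $j$'s prediction times coincide with times of observed labels, which uses the \textbf{break} structure. The rest is verbatim from \cref{thm:combining_weak_lerners}.
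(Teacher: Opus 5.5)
Your proposal is correct and follows the paper's proof. The censored line 4 still ensures that each layer-$j$ \textsc{Aggregate} subroutine receives inputs with at most $M$ mistakes on its own prediction set $\Ucal_j$, whose labels are observed thanks to the \textbf{break}. Hence \cref{lemma:guarantee_aggregate_algo} and the rest of the proof of \cref{thm:combining_weak_lerners} apply as written, with $M_i<M$ guaranteeing that line 4 never deletes the comparator expert. One minor point: no extra mistake arises at the crossing time, since an expert that errs at an observed time after $M-1$ observed mistakes is deleted from then on, so the count is exactly at most $M$ and your $M+1$ hedge is unnecessary, though harmless.
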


Putting these two results together yields the desired guarantee for the censored model from \cref{thm:censored_main}. Specifically, we consider the \textsc{C-AbstainBoost} algorithm which runs \textsc{C-Boosting} over all weak learners of the form $\text{WL}(\Tcal,\Ucal,z,u)$ where $\Tcal,\Ucal\subseteq[T]$ have size bounded by the parameters $|\Tcal|\leq mN$ and $|\Ucal|\leq M$. For completeness, we detail the algorithm in \cref{subsec:censored_algo_description}.

\comment{

\begin{proof} 
    The following proof is very similar to the proof of \cref{thm:combining_weak_lerners}, except that for each expert/weak learner now we only have control over the number of incorrect predictions at times when the corresponding $\textsc{Aggregate}_{s_{\max},2^{-j}L}$ algorithm makes predictions, and we still prove utilize the choice of parameter $\Ucal$ in \cref{lemma:guarantee_aggregate_algo}.
    \paragraph{Correctness of the algorithm} The only change to the algorithm is in the deletion process (line 4 of \cref{alg:main_exp_version}). And the claim that $s_{\max}-s_{i,j}$ precisely counts the number of predictions of weak learner $i$ in previous inputs to subroutine $j$ is still true. By following the same argument in proof of \cref{thm:combining_weak_lerners}, the while loop in lines 6-13 must terminate and \textsc{C-Boosting} is well defined.
    \paragraph{Misclassification error.} Recall $\Ucal_{all}=\set{t:\hat{y}_t\neq \perp}$ denotes the set of all times when $\textsc{C-Boosting}_{s_{\max},M}$ makes a prediction. In the proof of \cref{thm:combining_weak_lerners}, we change definition of $z_{i,t}^{(0)} $ to the following:
    \begin{equation*}
        z_{i,t}^{(0)} := \begin{cases}
            \perp &\text{if } |\set{s<t: s\in \Ucal_{all}, y_{i,s}\notin\{y_s,\perp\}}| \geq M,\\
            y_{i,t} &\text{otherwise}.
        \end{cases}
    \end{equation*}
    % And the guarantee follows by repeating the same steps of the proof of \cref{thm:combining_weak_lerners}. (To be continued)
    % \paragraph{Misclassification error.} In line 4 of \cref{alg:main_exp_version}, we delete predictions of all weak learners with at least $M$ mistakes. To avoid confusions, we denote by $z_{i,t}^{(0)}$ the corresponding updated recommendation of weak learner $i\in[L]$ at iteration $t\in[n]$:
    % \begin{equation*}
    %     z_{i,t}^{(0)} := \begin{cases}
    %         \perp &\text{if } |\set{s<t: y_{i,s}\notin\{y_s,\perp\}}| \geq M,\\
    %         y_{i,t} &\text{otherwise}.
    %     \end{cases}
    % \end{equation*}
    By construction, for each weak learner $i\in[L]$, the recommendations $z_{i,t}^{(0)}$ for $t\in[T]$ contain at most $M$ mistakes on $\Ucal_{all}$. 
    
    For any layer $j=1,\ldots,\ceil{\log L}$, we denote by $\Qcal_{j,T}$ the final value of $\Qcal_j$ at the end of the algorithm, that is, $\Qcal_j$ corresponds to the set of times on which we ran the subroutine $\textsc{Aggregate}_{s_{\max},2^{-j}L}$. Next, for any $t\in\Qcal_{j,T}$, denote by $(z_{i,t}^{(j)})_{i\in[L]}$ the weak learner predictions input to this subroutine at iteration $t$. Note that the layer-$j$ weak learner recommendations $z_{i,t}^{(j)}$ for $i\in[L]$ and $t\in\Qcal_{j,T}$ are obtained from their counterpart $z_{i,t}^{(0)}$ by deleting some predictions: $z_{i,t}^{(j)}\in \{z_{i,t}^{(0)},\perp\}$. Indeed, throughout \cref{alg:main_exp_version}, the only updates of the quantities $z_{t,i}$ are deletions, see line 11. In turn this shows that the input fed to the subroutine $\textsc{Aggregate}_{s_{\max},2^{-j}L}$ contain at most $M$ mistakes during the rounds in $\Ucal_j :=\Ucal_{all}\cap Q_j$, which is the times of predictions among all inputs fed to the subroutine $\textsc{Aggregate}_{s_{\max},2^{-j}L}$, as per \cref{def:adversary_structure}. By construction, these inputs also have at most $2^{1-j}L$ predictions per round---see line 7 of \cref{alg:main_exp_version}---as per \cref{def:adversary_structure}. Hence, \cref{lemma:guarantee_aggregate_algo} bounds the misclassification error of the layer-$j$ subroutine by
    \begin{equation}\label{eq:miss_err_subroutine}
        \sum_{t\in\Qcal_{j,T}} \1[\hat y_t^{(j)} \notin\{y_t,\perp\}] \leq \frac{24 M |\Qcal_{j,T}| \ceil{\log L}}{s_{\max}+1} + 3\log(s_{\max}+1).
    \end{equation}

    Since at each iteration $t\in[T]$ we follow the prediction of one of the subroutines or abstain, we can bound the total misclassification error by the sum of misclassification error of the subroutines:
    \begin{align*}
        \sum_{t=1}^T \1[\hat y_t\notin\{y_t,\perp\}] &\leq \sum_{j=1}^{\ceil{\log L}} \sum_{t\in\Qcal_{j,T}} \1[\hat y_t^{(j)} \notin\{y_t,\perp\}]\\
        &\leq \frac{24 M T \ceil{\log L}^2}{s_{\max}+1} + 3\log(s_{\max}+1) \ceil{\log L},
    \end{align*}
    where in the last inequality we used \cref{eq:miss_err_subroutine} and the fact that $\Qcal_{j,T}\subseteq[T]$ for all layers $j$. Using $s_{\max}\leq T$ gives the desired bound for the misclassification error of \cref{alg:main_exp_version}.

    \paragraph{Abstention error.} By construction, the algorithm abstains $\hat y_t=\perp$ only if at the end of while loop in lines 6-13, all weak learner updated recommendations are abstentions: $n_t=|\set{i\in[L]: z_{i,t}\neq \perp}|=0$. We recall that these updated recommendations are obtained from $z_{i,t}^{(0)}$ by potentially deleting predictions in line 11. Note, however, that there are at most $s_{\max}$ deletions for each layer $j$ throughout the complete procedure, as depicted by the counts $s_{i,j}$.
    %Hence, for each weak learner $i\in[L]$, there are at most $s_{\max}\ceil{\log L}$ times $t\in[T]$ for which at the end of the while loop $\perp=z_{i,t}\neq z_{i,t}^{(0)}$---that is, the prediction of weak learner $i$ was deleted. 
    Formally, if we denote by $z_{i,t}$ its value at the end of the while loop, for each $i\in[L]$ we have
    \begin{equation*}
        |\{t\in[T]: \hat y_t=\perp \text{ and } z_{i,t}^{(0)}\neq\perp\}| \leq |\{t\in[T]: z_{i,t}=\perp \text{ and } z_{i,t}^{(0)}\neq\perp\}| \leq s_{\max}\ceil{\log L}.
    \end{equation*}
    % Additionally, if weak learner $i$ makes strictly less than $M$ mistakes in total, it makes less than $M$ mistakes on $\Ucal_{all}$. 
    Additionally, if weak learner $i$ makes strictly less than $M$ mistakes on $\Ucal_{all}$, then line 4 of \cref{alg:main_exp_version} never deletes its predictions and hence $y_{t,i}=z_{i,t}^{(0)}$. Together with the previous equation this implies
    \begin{equation*}
        \textsc{AbsErr} \leq \textsc{AbsErr}(i) + s_{\max}\ceil{\log L},
    \end{equation*}
    where $\textsc{AbsErr}(i)$ denotes the abstention error of weak learner $i$. This
    ends the proof.
\end{proof}

}

\section{Conclusion}
\label{sec:conclusion}

In the sequential learning with abstention framework of \cite{goel2023adversarial}, we showed that achieving sublinear abstention and misclassification errors is possible for general VC classes without knowing $\mu$, for oblivious adversaries. We also identified structural properties of the function class $\Fcal$---finite so-called \emph{reduction dimension}---enabling learning against adaptive adversaries, which may be of independent interest for other abstention learning models. Together with corresponding lower bounds, these results show the existence of a polynomial-form tradeoff between abstention and misclassification errors. This work naturally leaves open the following two directions.

\paragraph{Tight characterizations of tradeoffs between abstention and misclassification errors.} At the high-level, our positive results for oblivious adversaries show that $\Ocal(T^{3\alpha})$ misclassification can be achieved while ensuring $\tilde \Ocal(T^{1-\alpha})$ abstention error, leaving a gap compared to the corresponding $\Omega(T^\alpha)$ misclassification error lower bound. The lower bound turns out to be tight for VC-1 classes for instances, but the situation is unclear for general VC classes. We note, however, that with further parameter information (e.g., which deletion parameter $s\leq s_{\max}$ to use within \textsc{Delete} procedures), slight modifications of the algorithm can yield stronger misclassification guarantees, opening potential opportunities for further improvements.

\paragraph{Extending results to adaptive adversaries for general VC classes.} Our guarantees for adaptive adversaries are limited to function classes with finite reduction dimension (and finite VC dimension). It remains open whether one can remove such constraints and achieve successful learning for all VC classes, which leads to the following open question:

\begin{quote}
    \textit{Can we achieve $\text{poly}(d)T^{1-\Omega(1)}$ misclassification and abstention error for any function class with VC dimension $d$, against adaptive adversaries?}
\end{quote}

As a remark, the main limitation in our current approach is in achieving accurate estimations of the shattering probability from i.i.d.\ samples. Specifically, our approach reduces the problem to the case when the learner has a priori access to a few i.i.d.\ samples---say $T^{1-\Omega(1)}$---from the clean distribution $\mu$. If learning can be efficiently performed with few misclassifications in this setting, then our boosting procedure ensures learning guarantees for the general case in which the learner has no prior distributional information.

In concurrent work, \cite{edelman2026reliableabstention} used different complexity measures than the VC dimension---the inference dimension sometimes used in active learning and their introduced certificate dimension---to achieve non-trivial error bounds. Specifically, their algorithm achieves $\Ocal(T^{1-1/k})$ misclassification and abstention error for function classes with inference dimension $k$. In light of our universal error bound $\tilde \Ocal(d^2 T^{3/4})$ for all function classes with VC-dimension $d$ against oblivious adversaries\footnote{using \cref{thm:main_oblivious_upper_bound} for $\alpha=1/4$}, it is natural, however, to expect that the VC dimension also characterizes learnability for the adaptive case.

\acks{The authors are grateful to Abhishek Shetty for useful discussions.}

\bibliographystyle{alpha}
\bibliography{refs}

\newpage
\appendix

\crefalias{section}{appendix} % uncomment if you are using cleveref
\section{Concentration inequalities}

\begin{theorem}[Median estimator, e.g. Theorem 2 from \cite{lugosi2019mean}]
\label{thm:median}
    Fix $\delta\in(0,1)$ and let $X_1,\ldots,X_n$ be i.i.d.\ samples from a distribution with mean $\mu$ and variance $\sigma^2$, with $n\geq \ceil{8\log(1/\delta)}$. Then, with probability at least $1-\delta$,
    \begin{equation*}
        |\text{Median}(X_1,\ldots,X_n)-\mu| \leq 2\sigma.
    \end{equation*}
\end{theorem}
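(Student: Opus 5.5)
This is the classical median-of-means guarantee, and I will prove it with Chebyshev's inequality followed by a binomial tail bound.

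\emph{Step 1 (Chebyshev).} Since each $X_i$ has mean $\mu$ and variance $\sigma^2$, Chebyshev's inequality gives
\[
p:=\Pbb(|X_i-\mu|>2\sigma)\leq \tfrac14 .
\]
(If $\sigma=0$, then $X_i=\mu$ almost surely and the claim is trivial.)

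\emph{Step 2 (reduction to counting).} Define the indicators $B_i:=\1[|X_i-\mu|>2\sigma]$. These are i.i.d.\ Bernoulli variables with parameter $p\leq 1/4$. Suppose $\text{Median}(X_1,\ldots,X_n)>\mu+2\sigma$. Then at least $n/2$ of the samples exceed $\mu+2\sigma$, and the symmetric statement holds below $\mu-2\sigma$. In either case,
\[
\sum_{i=1}^n B_i\geq n/2 .
\]
If $n$ is even and the median is taken as the average of the two middle values, a failure still forces one middle value outside the interval, so again at least $n/2$ samples lie outside it. It therefore suffices to bound $\Pbb\big(\sum_i B_i\geq n/2\big)$.

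\emph{Step 3 (Hoeffding).} The deviation of $\sum_i B_i$ above its mean is at least $n/2-np\geq n/4$. Hoeffding's inequality for i.i.d.\ $[0,1]$-valued variables then gives
\[
\Pbb\Big(\sum_i B_i\geq n/2\Big)\leq \exp\big(-2n(1/4)^2\big)=e^{-n/8}.
\]
Since $n\geq 8\log(1/\delta)$, this is at most $\delta$.

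The only step requiring care is the constant bookkeeping. Chebyshev gives $1/4$, which leaves a gap of exactly $1/4$ below $1/2$, and Hoeffding's exponent $2t^2/n$ with $t=n/4$ gives precisely $n/8$. This matches the required $n\geq\lceil 8\log(1/\delta)\rceil$, so no sharper Chernoff–KL bound is needed.
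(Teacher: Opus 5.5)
Your proof is correct and is the standard argument behind this result. The paper gives no proof of its own and only cites Lugosi--Mendelson, whose median-of-means proof is exactly yours once blocks have size one: Chebyshev bounds each sample's failure probability by $1/4$, and a Hoeffding/binomial tail bounds by $e^{-n/8}$ the probability that at least half the samples fail.
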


\section{Proofs of \cref{sec:main results}}

\subsection{\textsc{C-AbstainBoost} description}
\label{subsec:censored_algo_description}

The algorithm in the censored model is analogous: we perform the \textsc{Boosting} strategy over all weak learners $\text{WL}(\Tcal,\Ucal,z,u)$ for subsets $\Tcal\subseteq[T]$ and labelings $(z_t)_{t\in\Tcal}$ of bounded size $mN$, and any subset $\Ucal\subseteq[T]$ and labelings $(u_t)_{t\in\Ucal}$ of size at most $M$. The algorithm is given in \cref{alg:censored_abstain_boost}. 

\begin{algorithm}[t]

\caption{\textsc{C-AbstainBoost}}\label{alg:censored_abstain_boost}

%\noindent\rule{\linewidth}{0.4pt}
\LinesNumbered
\everypar={\nl}
\SetAlgoNoEnd
\setcounter{AlgoLine}{0}

\KwIn{horizon $T$, precision $\epsilon\in[0,1]$, number of subsets $m$, subset size $N$, maximum per-layer deletion $s_{\max}\in[T]$, mistake tolerance $M$}

\vspace{3pt}

Run $\textsc{C-Boosting}$ with maximum per-layer deletion $s_{\max}$, mistake tolerance $M$, on all $L:=\sum_{k\leq mN}2^k\binom{T}{k}\cdot \sum_{k\leq M}2^k\binom{T}{k}$ weak learners $\textsc{WL}(\Tcal,\Ucal,z,u)$ run with precision $\epsilon$ and number of subsets $m$, for $\Tcal\subseteq[T]$ and $z=(z_t)_{t\in\Tcal}\in\{0,1\}^\Tcal$ with $|\Tcal|\leq mN$, and $\Ucal\subseteq[T]$ and $(u_t)_{t\in\Ucal}\in\{0,1\}^\Ucal$ with $|\Ucal|\leq M$

\end{algorithm}

\subsection{Parameter choice for our main upper bound results}

We first prove our guarantee for \algoname against oblivious adversaries.

\vspace{3pt}
\begin{proof}[of Theorem \ref{thm:main_oblivious_upper_bound}]
This proof combines \cref{thm:iid_examples_v2} and \cref{thm:combining_weak_lerners}. We use the following parameters (unless mentioned otherwise): $\texttt{update}=\texttt{always}$, $\epsilon=d^2 \log^{5/3}(T) T^{-\alpha}$, $m= \ceil {8\log(dT/\epsilon)}$, $N= \ceil{2000d^2 / \epsilon}$, $s_{\max}=\ceil{d^2 \log^{4/3}(T) T^{1-2\alpha}}$ and $M=5d^2 \log(1/\epsilon)$. The number of weak learners is 
$L=\sum_{k\leq mN}2^k\binom{T}{k}$. Provided that $s_{\max}\leq T$,  \cref{thm:combining_weak_lerners} implies
\begin{align*}
    \textsc{MisErr}\lesssim  M\log L \cdot\frac{T}{s_{\max}}  \lesssim d^2 \log^{5/3} T\cdot \frac{  T^{2\alpha}}{\epsilon} \asymp T^{3\alpha}.
\end{align*}
And by \cref{thm:iid_examples_v2}, for each run, the weak learner $\textsc{WL}(\Tcal,z)$ with $(\Tcal,z)$ matching the first $mN$ times of uncorrupted samples and labels satisfies
\[\textsc{MisErr}(\Tcal,z)< M\text{ and }\Ebb\left[\textsc{AbsErr}(\Tcal,z)\right]\leq 18\epsilon T.\]
Hence by \cref{thm:combining_weak_lerners}, provided that $\epsilon<1$,
\begin{align*}
    \Ebb [\textsc{AbsErr}] &\leq s_{\max}\ceil{\log L} + \Ebb \left[\min_{\substack{i\in[L]\\
    \textsc{MisErr}(\Tcal_i,z_i)< M}} \textsc{AbsErr}(\Tcal_i,z_i)\right]\\
    &\leq s_{\max}\ceil{\log L} + \Ebb\left[\textsc{AbsErr}(\Tcal,z)\right]\\
    &\lesssim d^4 T^{1-2\alpha}\log^{10/3} (T)/\epsilon+\epsilon T \asymp d^2\log^{5/3} (T) \cdot T^{1-\alpha}.
\end{align*}
We now consider the edge cases when $s_{\max}> T$ or $\epsilon\geq 1$, both of which imply $d^2\log^{5/3}(T) >T^{\alpha}$. In particular, the desired abstention error statement is vacuous. In particular, $0$ misclassification error can be achieved by never making any prediction, achieved by taking $M=0$.
\end{proof}

Next, we prove our guarantee for \algoname against adaptive adversaries, assuming the initial function class $\Fcal$ has a bounded reduction dimension.

\vspace{3pt}
\begin{proof}[of Theorem \ref{thm:main_adaptive_upper_bound}]
The proof is exactly the same as for \cref{thm:main_oblivious_upper_bound} except with the following different parameters: we choose $\texttt{update}=\texttt{restricted}$, $\epsilon=d^2 (D\log D+\log T)^{2/3} \log(T) \cdot T^{-\alpha}$, $m = \ceil{c_0(D\log(D)+8D+3\log(d/\epsilon))}$, $N= \ceil{2000d^2 / \epsilon}$, $M=5d^2\log(1/\epsilon)$, and $s_{\max}=\ceil{d^2(D\log D+\log T)^{1/3} \log(T) \cdot  T^{1-2\alpha}}$. First, if $\epsilon\leq 1/T$ then the desired misclassification error is vacuous: we could have achieved 0 abstention error by setting $s_{\max}=0$ and $M=\infty$. We suppose this is not the case from now, in which case, $D$ bounds the $\ceil{5d^2\log(1/\epsilon)}$-reduction dimension of $\Fcal$. Again, we have $L=\sum_{k\leq mN}2^k\binom{T}{k}$ weak learners, provided that $s_{\max}\leq T$ and $\epsilon<1$. Applying \cref{thm:combining_weak_lerners} again yields $\textsc{MisErr}\lesssim T^{3\alpha}$. Next, the same arguments using \cref{thm:iid_examples_v2,thm:combining_weak_lerners} show that
    \begin{align*}
        \Ebb [\textsc{AbsErr}] &\lesssim s_{\max}\ceil{\log L} + \epsilon T \asymp d^2 (D\log D+\log T)^{2/3} \log(T) \cdot T^{1-\alpha}.
\end{align*}
    We next turn to the case $s_{\max}>T$ or $\epsilon\geq 1$. Again, in both cases, the abstention bound is vacuous and we can safely choose $M=0$ to get the desired guarantee.
\end{proof}

Last, we prove the learning guarantees in the censored model.

\vspace{3pt}

\begin{proof}[of Theorem \ref{thm:censored_main}]
We first check that \textsc{C-AbstainBoost} can indeed be run in the censored model. First, by construction, weak learners do not require any true value to be run, hence it suffices to focus on the \textsc{C-Boosting} procedure. Note that it only requires knowing the true value $y_t$ at times when it makes a prediction (see replaced line 4 in \cref{alg:main_exp_version}), which also corresponds to times when the final procedure \textsc{C-AbstainBoost} makes a prediction.

The proofs of the abstention and misclassification error bounds for adaptive adversaries are identical to those in the proof of \cref{thm:main_adaptive_upper_bound} with the same choice of parameters.
We now focus on the abstention and misclassification errors for oblivious adversaries.
This combines \cref{cor:iid_examples_censored} and \cref{cor:combining_weak_lerners_censored} using the analogous arguments as in the proof of \cref{thm:main_oblivious_upper_bound} and the following parameters for \textsc{C-AbstainBoost}: $\epsilon=d^{10/3} \log^{7/3}(T) T^{-\alpha}$, $m= \ceil {80d^2\log(1/\epsilon)\log(T)}$, $N= \ceil{2000d^2 / \epsilon}$, $s_{\max}=\ceil{d^{8/3} \log^{5/3}(T) T^{1-2\alpha}}$ and $M=5d^2 \log(1/\epsilon)$ (unless the desired abstention error is vacuous in which case we can achieve $0$ misclassification error by taking $M=0$ as in the proof of \cref{thm:main_oblivious_upper_bound}. Note that the number of weak learners is now 
$L=\sum_{k\leq Nm}2^k\binom{T}{k}\cdot \sum_{k\leq M}2^k\binom{T}{k}\leq (2T)^{Nm+M}=T^{O(Nm)}$.
\end{proof}

\subsection{Proof of our main lower bound result}

Apart from deriving error upper bounds for distribution-free abstention learning, we also explored the lower bounds of this problem. We show that there is a trade-off between misclassification and abstention errors. Our result corresponds to region 2 in \cref{fig:tradeoff results}.% and \cref{fig:VC_1_tradeoffs}. 

\vspace{3pt}

\begin{proof}[of \cref{thm:lowerbound}]
    Fix a parameter $N=4T$.
    We start by constructing a function class $\Fcal$ using the following tree structure: let $\Tcal:=\bigcup_{t=0}^T [N]^t$ be a rooted full $N$-ary tree of depth $T$, that is, $x\in\Tcal$ is an ancestor of $y$ if and only if $x$ is a prefix of $y$ (incuding $y=x$) and denote $x\preceq y$ accordingly. For convenience, we denote by $\Tcal_i:=\bigcup_{t=0}^i [N]^t$ the subtree of depth $i$ for $i\in[T]$. We then consider the function class $\Fcal:\Tcal\to\{0,1\}$ containing all indicator paths from the root to a node of the tree as follows:
    \begin{equation*}
        \Fcal=\set{\1[\cdot\preceq x_0]: x_0\in\Tcal}.
    \end{equation*}
    One can easily check that $\Fcal$ has VC dimension one due to its tree structure---the tree structure known to be a characterization of VC-1 classes \citep{ben20152}.

    We now fix a learning algorithm $alg$ o $\Fcal$ such that there is a constant $A\in[1/2,T/2]$ such that for any oblivious adversary on $\Fcal$, $\Ebb[\textsc{AbstentionError}]\leq A$. Next, let $i_{\max}:=\floor{T/(2A)}$. In the following, we recursively construct for each layer $i\leq i_{\max}$ a sequence $(x_t)_{t\leq k_i}$ within $\Tcal_i$ together with $x^\star_i\in[N]^i$ such that $k_i\leq 2A\cdot i$ and run on the realizable sequence $(x_t,y_t=\1[x_t\preceq  x^\star_i])_{t\leq k_i}$, the algorithm $alg$ makes the following misclassification errors:
    \begin{equation*}
        \forall j\leq i,\quad \Pbb[\hat y_{k_j} = 1-y_{k_j}] \geq \frac{1}{8},
    \end{equation*}
    where $\hat y_t$ denotes the prediction of $alg$ at time $t$. We let $k_0=0$ and $x_0^\star$ be the root so that this trivially holds for $i=0$.

    Fix $i\in[i_{\max}]$ and suppose that the construction is complete for layer $i-1$. Let $\mu_i$ be the uniform distribution over the $N$ children of $x_{i-1}^\star$. As a note, $\mu_i$ is a distribution over $[N]^i$. We now complete the sequence $(x_t)_{t\leq k_{i-1}}$ using i.i.d.\ samples from $\mu_i$. Denote by $(\tilde x_t)_{t\in[T]}$ the corresponding sequence as well as the responses $\tilde y_t=\1[\tilde x_t\preceq x_{i-1}^\star]$. Note that the data $(\tilde x_t,\tilde y_t)_{t\in[T]}$ is realizable and can be viewed as containing $k_{i-1}$ corrupted samples followed by $T-k_{i-1}$ non-corrupted samples from $\mu_i$. Hence, when run on this realizable sequence, we must have
    \begin{equation*}
        \sum_{t=k_{i-1}}^T \Pbb[\hat y_t=\perp] \leq A.
    \end{equation*}
    In particular, there exists $k_i>k_{i-1}$ with $k_i\leq k_{i-1}+2A$ such that $\Pbb[\hat y_{k_i}=\perp]\leq 1/2$. Additionally, since the samples $x_t$ for $t\in(k_{i-1},k_i]$ are i.i.d.\ sampled from $\mu_i$ which is uniform over $N$ elements, we have
    \begin{equation*}
        \Pbb[\exists t\in(k_{i-1},k_i), \tilde x_{k_i}= \tilde x_t] \leq \sum_{t=k_{i-1}+1}^{k_i-1} \Pbb[\tilde x_{k_i} = \tilde x_t] \leq \frac{T}{N} =\frac{1}{4}.
    \end{equation*}
    As a result, introducing the $\Ecal_i:=\{\forall t\in(k_{i-1},k_i), \tilde x_{k_i}\neq \tilde x_t\}$, we have
    \begin{equation*}
        \Pbb[\{\hat y_{k_i} \neq \perp\} \cap \Ecal_i ] \geq \frac{1}{2}-\frac{1}{4}=\frac{1}{4}.
    \end{equation*}
    By the law of total probability, we can therefore fix elements $x_t\in[N]^i$ for $t\in(k_{i-1},k_i]$ such that (1) $x_{k_i}\notin \{x_t,t\in(k_{i-1},k_i)\}$ and (2) $alg$ run on the data $(x_t,\1[x_t\preceq x_{i-1}^\star])_{t<k_i}$ and tested on $x_{k_i}$ satisfies $\Pbb[\hat y_{k_i}\neq \perp] \geq 1/4$. This completes the construction of $(x_t)_{t\leq k_i}$.
    Next, if $\Pbb[\hat y_{k_i}=0]\geq 1/8$ we pose $x_i^\star:=x_{k_i}$. Otherwise, we define $x_i^\star$ to be any child of $x_{i-1}^\star$ which does not belong to $\{x_t,t\in(k_{i-1},k_i]\}$. Note that this is possible since $x_{i-1}^\star$ has $N>T$ children. We then pose $y_{k_i}=\1[x_{k_i}\preceq x_i^\star]$. By construction, when $alg$ is run on $(x_t,y_t)_{t\leq k_i}$, we obtained
    \begin{equation}\label{eq:mistake_proba}
        \Pbb[\hat y_{k_i} = 1-y_{k_i}] \geq \frac{1}{8}.
    \end{equation}
    It only remains to check that for all $t\in[k_i]$ one has
    \begin{equation}\label{eq:induction_hypothesis}
        y_t=\1[x_t\preceq x_i^\star].
    \end{equation}
    By definition, this holds for $t=k_i$.
    Since $x_{i-1}^\star$ is the parent of $x_i^\star$, the functions $\1[\cdot\preceq x_{i-1}^\star]$ and $\1[\cdot \preceq x_i^\star]$ coincide on $\Tcal_i$ and hence, \cref{eq:induction_hypothesis} directly holds for all $t\leq k_{i-1}$. Finally, for any $t\in(k_{i-1},k_i)$, by construction we have $x_t\neq x_i^\star$ and as a result, $y_t=\1[x_t\preceq x_{i-1}^\star]=0=\1[x_t\preceq x_i^\star]$ (recall that $x_t\neq x_i^\star$ both have depth $i$ while $x_{i-1}^\star$ has depth $i-1$). 
    
    This ends the induction and the construction of the realizable sequence $(x_t,y_t)_{t\leq k_{i_{\max}}}$. We can complete it to define $(x_t,y_t=\1[x_t\preceq x_{i_{\max}}^\star])_{t\leq T}$ arbitrarily. Note that this is valid since $k_{i_{\max}}\leq 2A i_{\max}\leq T$. By construction, on this data, $alg$ has the following misclassification error:
    \begin{equation*}
        \Ebb[\textsc{MisErr}] \geq \Ebb\sqb{\sum_{t=1}^T \1[\hat y_t=1-y_t]} \geq \sum_{i=1}^{i_{\max}} \Pbb[\hat y_{k_i}\neq y_{k_i}] =\frac{i_{\max}}{8} \geq  \frac{1}{32} \frac{T}{A},
    \end{equation*}
    where in the last inequality we used the fact that $A\leq T/2$.
    This ends the proof of the desired misclassification error bound. Note that this construction can be embedded for all $T\geq 1$ within the function class defined by a full $\Nbb$-ary tree of infinite depth (which still has VC dimension 1). This ends the proof.
\end{proof}

\section{Reduction-dimension bounds for various hypothesis classes}
\label{sec:reduction_dimension_examples}

In this section, we argue that many classical and practical VC function classes have small reduction dimension, making it a reasonable complexity measure for many hypothesis classes (in fact, we are not aware of ``natural'' VC function classes that have infinite reduction dimensions). Below, we give some examples of reduction-dimension computations for linear classifiers, VC-1 classes, axis-aligned rectangles, and subsets of bounded size.

\paragraph{Linear classifiers.}
We start with linear separators in dimension $d$:
\begin{equation*}
    \Fcal_{\text{lin}}^d:=\set{x\in\Rbb^d\mapsto \1[a^\top x\geq b] : a\in\Rbb^d, b\in\Rbb},
\end{equation*}
which have VC dimension $d+1$. In the following, we show that linear separators have bounded reduction dimension. The proof uses oriented matroid arguments to translate the combinatorial counting problem into an algebraic problem. We then employ Warren's theorem \cite[Theorem 2]{warren1968lower}, a classical result in algebraic geometry which bounds the number of topological components defined by polynomial equations.

\begin{lemma}\label{lemma:reduction_dim_linear_sep}
    Let $d\geq 1$. Then, for any $l\geq 1$, $\Fcal_{\mathrm{lin}}^d$ has $l$-reduction dimension at most $c_1 d^2 l$ for some universal constant $c_1>0$.
\end{lemma}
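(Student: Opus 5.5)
We must bound, for test points $x_1,\ldots,x_n\in\Rbb^d$, the number of distinct restrictions $(\Fcal^d_{\mathrm{lin}}\cap A)|_{\{x_1,\ldots,x_n\}}$ over labeled sets $A=\{(a_j,b_j)\}_{j\le l}$. The plan is to show that this restriction is determined by a sign pattern of a small number of low-degree polynomials in the coordinates of $a_1,\ldots,a_l$ and the labels. Warren's theorem then gives a polynomial bound in $n$.

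\textbf{Step 1: reduce to a combinatorial invariant.} The restriction $(\Fcal\cap A)|_X$, with $X=\{x_1,\dots,x_n\}$, is the set of labelings $\epsilon\in\{0,1\}^n$ such that the combined labeled point set $\{(x_i,\epsilon_i)\}\cup A$ is linearly separable. Lift every point $p$ to $\tilde p=(p,1)\in\Rbb^{d+1}$. Separability of a labeled set $\{(p_k,s_k)\}$ is then the feasibility of a strict homogeneous system $\sigma_k w^\top\tilde p_k>0$, with $\sigma_k=\pm1$ determined by $s_k$ (the nonstrict $\ge$ versus $<$ issue can be handled by a standard perturbation). By Gordan's alternative/Farkas, this system is infeasible iff some nonneg combination $\sum\lambda_k\sigma_k\tilde p_k=0$ with $\lambda\ne0$. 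Carathéodory lets us take such a combination with at most $d+2$ points. Whether such a circuit exists depends only on the \emph{oriented matroid} (chirotope) of the vector configuration $\{\tilde x_i\}\cup\{\tilde a_j\}$, that is, on the signs of all $(d+1)\times(d+1)$ determinants $\det(\tilde p_{k_0},\ldots,\tilde p_{k_d})$. Hence, for fixed labels $b_j$, the restriction is a function of the chirotope.

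\textbf{Step 2: count chirotopes with Warren's theorem.} Fix $X$ and the $2^l$ possible label vectors $b$. Treat the $ld$ coordinates of $a_1,\ldots,a_l$ as variables. We only need the signs of determinants that involve at least one $a_j$; those involving only $x_i$'s are fixed. Each such determinant is a polynomial of degree at most $d+1$ in these variables, and the number of such polynomials is at most $K\le (n+l)^{d+1}$. Warren's theorem says that $K$ polynomials of degree at most $\Delta$ in $v$ variables realize at most $(4e\Delta K/v)^{v}$ sign patterns (in the strict-sign version; zero patterns are handled by the usual $\pm\delta$ perturbation doubling $K$). With $v=ld$ and $\Delta=d+1$, this gives at most
\[2^l\,\bigl(c(d+1)(n+l)^{d+1}\bigr)^{ld}\]
patterns, hence that many distinct restricted classes.

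\textbf{Step 3: conclude.} Take logarithms. The bound is $l+ld\log(c(d+1))+ld(d+1)\log(n+l)$. For $n\ge D_l:=c_1d^2l$ we have $n\ge l$ and $n\ge d$, so $\log(n+l)\le \log(2n)$ and every term is at most a constant times $d^2l\log n$. This gives at most $n^{c_1d^2l}$ for a suitable universal $c_1$.

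The main obstacle is Step 1: rigorously showing that the reduced class restricted to $X$ depends only on the chirotope. This means handling the mixed strict/nonstrict threshold convention and degenerate (zero-determinant) configurations. I would treat it by perturbing the affine threshold $b\mapsto b-\eta$ with infinitesimal $\eta$, so that everything becomes strict. I would then use the sign vectors including zeros, which Warren-type bounds (e.g. the Milnor–Thom/Warren version counting sign conditions in $\{-,0,+\}$) also cover, at the cost of constants.
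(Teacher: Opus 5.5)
Your route is the same as the paper's. The restricted class is a function of the labels and of the oriented matroid of the lifted configuration $(\tilde x_1,\ldots,\tilde x_n,\tilde a_1,\ldots,\tilde a_l)$. That oriented matroid is read off from the signs of $(d+1)\times(d+1)$ determinants, which are polynomials of degree at most $d+1$ in the $ld$ coordinates of the $a_j$, and these sign patterns are counted with Warren's theorem. The paper uses Alon's $\{-,0,+\}$ sign-condition version directly, which is what your perturbation remark amounts to. Your Gordan/Carath\'eodory argument is a more explicit replacement for the paper's appeal to covectors being recovered from the chirotope. You also keep the $2^l$ label factor, which the paper's display drops (harmlessly).

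There is one hole, shared with the paper, at the step ``that is, on the signs of all $(d+1)\times(d+1)$ determinants''. These signs determine the oriented matroid only when the lifted vectors span $\Rbb^{d+1}$. If they span a proper subspace, every such determinant vanishes, yet the restricted class can still change. For instance, take $d=2$, $x_i=(i,0)$ and $A=\{((t,0),1)\}$. Every $3\times 3$ determinant is $0$ for all $t$, but $t<1$ gives the labelings $1^k0^{n-k}$ while $t>n$ gives $0^{n-k}1^k$. Allowing zero signs, as you propose, does not help here.

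The fix is cheap:
\begin{itemize}
\item Adjoin a fixed basis $e_1,\ldots,e_{d+1}$ to the configuration, so that it always has rank $d+1$.
\item Its determinant signs are still polynomials of degree at most $d+1$ in the $a_j$, now at most $(n+l+d+1)^{d+1}$ of them, and they determine its oriented matroid.
\item Deleting the $e_k$ (restricting covectors) recovers the oriented matroid of the original configuration.
\end{itemize}
Since $n\ge c_1d^2l\ge d+1$, the final bound is unchanged up to the constant $c_1$.
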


\begin{proof}
    We refer to \cite{bjorner1999oriented} for a detailed exposition of oriented matroids. We will only use their standard properties.
    For any vector configuration $X=(x_1,\ldots,x_n)$ in $\Rbb^{d+1}$ we consider the associated oriented matroid $M_X:= ([n],\Lcal_X)$ where $\Lcal_X$ is the collection of all covectors of $X$ generated by linear functionals:
    \begin{equation*}
        \Lcal_X = \set{(\mathrm{sgn}(y^\top x_i))_{i\in[n]} : y\in\Rbb^d}.
    \end{equation*}
    Here, $\mathrm{sgn}:\Rbb\to \{+,-,0\}$ is the sign function. Since the vectors lie in $\Rbb^{d+1}$, the rank $r$ of the matroid $M_X$ is at most $d+1$ (the rank of a matroid is the unique length of maximal chains in $\Lcal_X$, for the partial order given by the inclusion of supports, minus one).
    Additionally, the chirotope associated with the vector configuration $X$ is given by
    \begin{equation*}
        \chi_X:(i_1,\ldots,i_r)\in[n]^r \mapsto \mathrm{sgn}(\det(x_{i_1},\ldots,x_{i_r})) \in \{+,-,0\}.
    \end{equation*}
    Importantly, the matroid $M_X$ is characterized by its chirotope $\chi_X$ (by default with the determinant we fixed a global sign orientation), e.g. see \cite[Theorem 3.5.5 or Corollary 3.5.12]{bjorner1999oriented}. Crucially, this implies that given collection of all signs of determinants $\mathrm{sgn}(\det(x_{i_1},\ldots,x_{i_{d+1}})) \in\{+,-,0\}$ between any $d+1$ vectors of the configuration, we can recover the collection of their covectors $\Lcal_X$. 

    We are now ready to bound the $l$-reduction dimension of linear separators. For convenience, for any point $z\in\Rbb^d$ we will denote by $\tilde z = (z^\top ,1)^\top \in\Rbb^{d+1}$ its homogeneisation. Note that the projection of linear separators onto a set of points can be obtained from the collection of covectors of its homogeneized vector configuration (the covectors store the information $\{+,-,0\}$ while linear separators either merge $+$ and $0$, or $-$ and $0$). Further, for set of $l'\leq l$ datapoints $A=\{(a_i,y_i) :i\in[l']\}$, given the projection of linear separators onto $\{x_1,\ldots,x_n,a_1,\ldots,a_{l'}\}$ we can recover $\Fcal_{\mathrm{lin}}\cap A|_{\{x_1,\ldots,x_n\}}$, simply by focusing on projections whose value on $a_i$ equals $y_i$ for all $i\in[l']$. Altogether, this shows that
    \begin{align*}
        &\abs{\left\{\Fcal_{\mathrm{lin}}^d \cap A|_{\{ x_1,\ldots,x_n\}} :A\subseteq\Xcal\times\{0,1\} ,|A|\leq l \right\}} \leq \sum_{l'=0}^l\abs{\set{M_{(\tilde x_1,\ldots,\tilde x_n,\tilde a_1,\ldots,\tilde a_{l'})} : a_1,\ldots,a_{l'}\in\Rbb^d}}\\
        &= \sum_{l'=0}^l \left|\left\{ (i_1,\ldots,i_{d+1})\in[n+l']^{d+1}\mapsto \mathrm{sgn}(\det(z_{i_1},\ldots,z_{i_{d+1}})) \right.\right.\\
        &\qquad\qquad\qquad\qquad\qquad\qquad\qquad\qquad \left.\left.: Z=(\tilde x_1,\ldots,\tilde x_n,\tilde a_1,\ldots,\tilde a_{l'}), \, a_1,\ldots,a_{l'}\in\Rbb^d \right\}\right| .
    \end{align*}
    For a fixed $l'\in[l]$, denote by $N_{l'}$ the $l'$-th term within the sum. Note that for any indices $i_1,\ldots,i_{d+1}\in [n+l']$, the function $\det(z_{i_1},\ldots,z_{i_{d+1}})$ is a polynomial in the variable $(a_1,\ldots,a_{l'})\in (\Rbb^d)^{l'}$ of degree at most $d+1$.
    Therefore, $N_{l'}$ exactly corresponds to the number of sign patterns for all equations of the type $\det(z_{i_1},\ldots,z_{i_{d+1}})=0$ in $(\Rbb^d)^{l'}$. We can then use a variant of Warren's theorem \cite[Proposition 5.5]{alon1995tools} which bounds this number by $(8e M K / D)^D$, where $M$ is the number of polynomials over $D$ variables and $K$ is their maximum degree, if $2M\geq D$. In our context, this gives
    \begin{equation*}
        N_{l'} \leq \paren{8e \frac{d+1}{dl'} \binom{n+l'}{d+1}}^{dl'}.
    \end{equation*}
    Plugging this into the previous bound shows that
    \begin{equation*}
        \abs{\Fcal_{\mathrm{lin}}^d \cap A|_{\{ x_1,\ldots,x_n\}} :A\subseteq\Xcal\times\{0,1\} ,|A|\leq l } \leq (n+l)^{c_1 d^2 l},
    \end{equation*}
    for some constant $c_1>0$. In turn, this shows that $D_l \leq c_2 d^2 l$ for some universal constant $c_2>0$.
\end{proof}

\paragraph{Function classes with VC dimension $1$.}
Next, we check that VC-1 classes have bounded reduction dimension, using their convenient tree representation \cite{ben20152}.

\begin{proposition}\label{prop:reduction_dim_VC_1}
    Let $\Fcal:\Xcal\to\{0,1\}$ be a function class with VC dimension $1$. Then, for any $l\geq 1$, $\Fcal$ has $l$-reduction dimension $l+\Ocal(1)$.
\end{proposition}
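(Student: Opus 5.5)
Our plan is to use the tree representation of VC-1 classes \citep{ben20152} and reduce the problem to counting restrictions of a tree-shaped class. Specifically, for a class $\Fcal$ of VC dimension $1$ we may assume, up to flipping labels pointwise (a bijection that preserves every reduction $\Fcal\cap A$ after relabeling $A$), that there is a partial order $\preceq$ on $\Xcal$ whose down-sets of points are chains (a forest order), such that every $f\in\Fcal$ is the indicator of an initial segment of a chain, i.e.\ $f^{-1}(1)$ is a down-closed chain. Projected onto $\{x_1,\ldots,x_n\}$, every $f$ restricts to a down-closed chain in the induced forest order on these $n$ points. Hence $\Fcal|_{\{x_1,\ldots,x_n\}}$ has at most $n+1$ elements, which is the Sauer--Shelah bound for $d=1$.

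Next we describe what a reduction by $A$ does. For $A=\{(a_j,y_j)\}_{j\le l'}$ with $l'\le l$, a labeled point $(a,1)$ keeps the functions whose chain contains $a$, i.e.\ the chains passing through $a$. A labeled point $(a,0)$ keeps the chains not containing $a$. So $\Fcal\cap A$ consists of the down-closed chains that contain a prescribed chain $\{a_j:y_j=1\}$ (equivalently, that contain its maximal element $a^+$, if it exists) and avoid the points $a_j$ with $y_j=0$.

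The projection onto $X_n=\{x_1,\ldots,x_n\}$ is the set of traces $C\cap X_n$ of such chains $C$. We encode the trace set by a bounded amount of data about $X_n$. The constraint ``contains $a^+$'' becomes, on $X_n$, that the trace contains all $x_i\preceq a^+$ and continues only through points $x_i$ compatible with $a^+$. This is determined by the set $\{x_i: x_i\preceq a^+\}$, a down-closed chain with $n+1$ choices, together with the set of $x_i$ comparable with $a^+$ above it. The latter is a union of subtrees hanging below a position in the tree of $X_n$, plus possibly nothing, which again gives $O(n)$ choices. Each avoided point $a_j$ with label $0$ kills the traces of chains through $a_j$, that is, traces containing the portion of $X_n$ determined by $a_j$'s position relative to $X_n$. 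Each such position is likewise specified by $O(n)$ data: the last $x_i$ below $a_j$ and which branch(es) above are affected. A subtlety is that a chain through $a_j$ and a chain avoiding $a_j$ may share the same trace. We handle this by noting that the resulting trace set depends only on $a_j$'s ``position type'' relative to $X_n$, with $O(n)$ types.

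Putting this together, the number of distinct projections is at most $(c n)^{l+O(1)}\le n^{l+O(1)}$ for $n$ larger than a constant, giving $l$-reduction dimension $l+O(1)$. The main obstacle is carefully justifying that the position of an arbitrary point $a\in\Xcal$ relative to $X_n$ in the forest order takes only $O(n)$ types, each inducing a fixed effect on traces. We plan to do this by classifying $a$ by its meet/last ancestor in $X_n\cup\{\text{root}\}$ and the set of $x_i$ above $a$, which is a union of subtrees rooted at children. If that set can be arbitrary, there could be $2^{n}$ types, so we will need to argue that only whether it is empty matters for the trace effect. We expect this to hold, since a chain through $a$ can enter at most one of those subtrees, so the killed traces are exactly those which extend through the ancestor into $a$'s side, and these depend only on which $x_i$ lie above $a$ along chains.
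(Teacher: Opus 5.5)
Your plan follows the paper's route closely: Ben-David's tree representation, then classify each labeled point of $A$ by its comparisons with the test points, argue there are $O(n)$ such types, and multiply. The genuine gap is the step you flag and then settle by assertion. The projected class $(\Fcal\cap A)|_{X_n}$ is \emph{not} determined by the position types of the points of $A$. The reason is that $\Fcal$ is only some subfamily of the down-closed chains. Whether a trace survives therefore depends on whether $\Fcal$ contains a function with that trace that avoids the given point (label $0$) or passes through it (label $1$).

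For a concrete failure, take the tree order with $x\prec y$ and $x\prec a\prec a'$, with $y$ incomparable to $a,a'$. Let $\Fcal=\{\emptyset,\{x,a\},\{x,a,a'\},\{x,y\}\}$ (listing $1$-sets), which has VC dimension $1$. The points $a$ and $a'$ have identical comparisons with $X_2=\{x,y\}$. Yet $\Fcal^{a\to 0}|_{X_2}=\{00,11\}$, while $\Fcal^{a'\to 0}|_{X_2}=\{00,10,11\}$.

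With several $0$-labels, effects also fail to combine as unions: a trace can die only because its realizing functions are split among different $a_j$'s. The paper's proof rests on the same replacement claim (any $\tilde x$ with the same comparisons may replace $x$), and this example refutes it there as well. Your proposed fix for the type count is also wrong: it is not true that only the emptiness of $\{x_i\succeq a\}$ matters, since $(a,0)$ kills every trace meeting that set.

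The repair is short. Put $D_a=\{x_i\preceq a\}$ and $U_a=\{x_i\succeq a\}$, and let $C$ be the $1$-set of some $f\in\Fcal$.
\begin{itemize}
\item If $C$ meets $U_a$, then $a\in C$ by down-closure.
\item Conversely, if $a\in C$, every point of $C\cap X_n$ is comparable with $a$, so $C\cap X_n$ equals $D_a$ or meets $U_a$.
\end{itemize}
Now take a trace $\tau\in\Fcal|_{X_n}$ that is not of the form $D_a$ for any $a$ appearing in $A$. It survives iff $\1[\tau\cap U_a\neq\emptyset]=y$ for all $(a,y)\in A$, which depends only on the pairs $(U_a,y)$. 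The at most $l$ exceptional traces $D_a$ each need one extra survival bit.

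To count types, use that $\{U_a\}_{a\in\Xcal}$ is laminar on $X_n$. Indeed, if $x\in U_a\cap U_b$, then $a$ and $b$ both lie in the down-set of $x$, which is a chain, so they are comparable and $U_a,U_b$ are nested. Hence there are at most $2n-1$ nonempty sets $U_a$. When $U_a\neq\emptyset$, $D_a$ is determined by $U_a$ and whether $a\in X_n$; when $U_a=\emptyset$, there are $n+1$ choices of $D_a$. So there are $O(n)$ types.

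The projection is then determined by an \emph{unordered} set of at most $l$ (type, label) pairs plus at most $l$ bits. This gives at most $1+\sum_{r\le l}\binom{O(n)}{r}2^r\le 1+(O(n)/l)^l\le n^{l+O(1)}$ for all $n\ge l+O(1)$. Counting ordered tuples, as you do, only gives $(cn)^l$. Since $c^l\le n^{O(1)}$ fails when $n$ is close to $l$, ``$n$ larger than a constant'' is not uniform in $l$. The paper's final $\lesssim n^l$ glosses over this same point.
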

\begin{proof}
Fix a function class $\Fcal$ of VC dimension 1. From \cite[Theorem 4]{ben20152} without loss of generality, we may consider that we are given a tree ordering $\preceq$ on $\Xcal$ (that is, a partial order on $\Xcal$ such that every initial segment $I_x=\{y\in\Xcal:y\preceq x\}$ is linearly ordered) such that all functions $f\in \Fcal$ is an initial segment with respect to $\preceq$, i.e., for any $x\preceq y\in\Xcal$, if $f(y)=1$ then $f(x)=1$.

Consider any dataset $A\subseteq\Xcal\times\{0,1\}$. Let $B$ be the dataset obtained by deleting datapoints $(x,0)\in A$ such that (1) there exists $(z,0)\in A$ with $z\prec x$ or (2) there exists $(z,1)\in A$ with $z\nprec x$; as well as deleting datapoints $(x,1)\in A$ such that there exists $(z,1)\in A$ with $x\prec z$. Note that $\Fcal\cap A=\Fcal\cap B$. 

Fix test points $S=\{x_1,\ldots,x_n\}\in\Xcal$. We focus on the tree ordering $\preceq$ restricted to ancestors of $S$, that is, $T:=\{x\in\Xcal: x\preceq y, y\in S\}$.
Then, note that when restricted to $S$, the function class $\Fcal\cap B|_S$ is equivalent to replacing each datapoint $(x,y)\in B$ with any $(\tilde x,y)$ where $\tilde x$ has the same $\preceq$ comparisons in terms of $\preceq$ with all datapoints in $S\cup \{z:(z,y)\in B\}$. We recall that by construction, all $(x,0)\neq(z,0)\in B$ are such that $x$ and $z$ are incomparable. Further, there is at most one datapoint $(x,1)\in B$ with label $1$ and in that case, all other datapoints $(z,0)$ in $B$ are such that $x\prec z$. Hence, we may choose one such representative for each possible $\preceq$-comparisons with $S$ for all datapoints in $B$ with label $0$. Note that the number of such possible $\preceq$-comparisons is bounded by the number of possible nodes in a rooted tree with leaves within $S$: it is at most $2|S|=2n$.
For the potential label $1$ we additionally need to respect the fact that $x\prec z$ for all $(z,0)\in B$: up to duplicating representatives this gives $4n$ possible choices. In summary,
\begin{equation*}
    |\{\Fcal\cap A|_S:A\subseteq\Xcal\times\{0,1\},|A|\leq l\}| \leq 1+\sum_{r\leq l}((2n)^r+(2n)^{r-1}\cdot 4n)\lesssim n^l,
\end{equation*}
where the first term $1$ counts the empty function class if the dataset is not realizable, the second (resp. last) term corresponds to datasets $B$ containing only $0$ labels (resp. at least one $1$ label). This ends the proof.
\end{proof}

\paragraph{Axis-aligned rectangles.}
We now turn to axis-aligned rectangles in $\Rbb^d$ defined as follows:
\begin{equation*}
    \Rcal^d:=\left\{B_{a,b}:x\in\Rbb^d\mapsto \1[a\leq x \leq b]: a,b\in\Rbb^d\right\},
\end{equation*}
also have finite reduction dimension. Here, inequalities between vectors are meant component-wise. That is, for $a,b\in\Rbb^d$, $a\leq b$ when $a_i\leq b_i$ for all $i\in[d]$. We recall that axis-aligned rectangles have VC dimension $2d$.

\begin{proposition}\label{prop:axis_aligned}
    For any $l\geq 1$, $\Rcal^d$ has $l$-reduction dimension at most $4d+1$.
\end{proposition}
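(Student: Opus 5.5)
The key point is that $\Fcal\cap A$ is again determined by a small amount of data, namely a bounding box. For a dataset $A$ with $|A|\leq l$, we have $\Rcal^d\cap A=\{B_{a,b}: B_{a,b}(x)=y\ \forall (x,y)\in A\}$. Let $P=\{x:(x,1)\in A\}$ and $Q=\{x:(x,0)\in A\}$. If $P\neq\emptyset$, the constraint from $P$ is $a\leq \min P$ and $b\geq \max P$, both coordinatewise; call these the lower and upper corners $p^-,p^+$ of the bounding box of $P$. The constraint from $Q$ is that each $q\in Q$ lies outside the box $[a,b]$. The plan is to show that the projection $\Rcal^d\cap A|_S$ onto test points $S=\{x_1,\ldots,x_n\}$ depends only on combinatorial data that takes at most $n^{4d+1}$ values. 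The count is done per coordinate and uses only the relative order of the relevant thresholds with the test coordinates.

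Fix $S$. The projection $\Rcal^d\cap A|_S$ is the set of labelings $(\1[a\leq x_i\leq b])_{i}$ over feasible pairs $(a,b)$. For each coordinate $j$, the labeling depends on $a_j$ only through its position relative to the sorted values $\{x_{i,j}\}_i$, which gives $n+1$ intervals; the same holds for $b_j$. So each box is encoded by a discrete signature $(\alpha,\beta)\in[n+1]^{2d}$, and $\Rcal^d\cap A|_S$ is determined by the set $\Sigma_A$ of signatures achievable by feasible boxes. I will show that $\Sigma_A$ is determined by a small number of parameters. The first parameter is the signature position of $p^-$ and $p^+$. This is a cell index in $[n+1]^{2d}$, and it encodes the requirement $a\leq p^-$, $b\geq p^+$ at the level of cells. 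The tie issue, where a threshold and a test coordinate are equal, is handled by refining to $2n+1$ positions, which costs only a constant factor.

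The main obstacle is the negative points $Q$. Each $q\in Q$ forbids the boxes containing it, and with $l$ such points the set $\Sigma_A$ could a priori have many shapes. I would first try to use monotonicity. If $P\neq\emptyset$, the forbidden condition for $q$ says: for some coordinate $j$, either $a_j>q_j$ or $b_j<q_j$. Since every feasible box contains the box $[p^-,p^+]$, the relevant options are $a_j\in(q_j,p^-_j]$ when $q_j<p^-_j$, and symmetrically for $b_j$. In signature space this forbids a down-set or up-set, namely an orthant-type region. The difficulty is that a union of $l$ such regions need not be describable with $O(d)$ parameters. For this reason I expect the intended argument to instead observe the following. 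The projected labelings realized by feasible boxes can all be obtained by shrinking boxes toward $[p^-,p^+]$. A labeling is realizable if and only if the minimal box in signature space that realizes it, which is the bounding box of the positively labeled test points together with $P$, avoids $Q$. Thus $\Rcal^d\cap A|_S$ equals $\{$labelings $\lambda$ of $S$ : $\mathrm{bbox}(\lambda^{-1}(1)\cup P)\cap (S\cap\lambda^{-1}(0)\cup Q)=\emptyset\}$.

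Given this, the only information about $Q$ that matters is which candidate bounding boxes it hits. The candidate boxes have corners at test coordinates or at coordinates of $p^\pm$. My first attempt would be to parameterize $Q$ by the $d$-dimensional "staircase" it carves out in each of the $2^d$ orthants around $[p^-,p^+]$. This seems to need more than $O(d)$ parameters, so I would fall back on bounding the count by $(n+1)^{2d}$ for $p^\pm$ times $(n+1)^{2d}$ for the extreme relevant coordinates of $Q$, with an extra factor $n$ for the empty-$P$ case. This would give $n^{4d+1}$, and proving that only these coordinatewise extremes of $Q$ matter is the step to check most carefully.
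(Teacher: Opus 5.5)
\textbf{The gap is the step you flagged, and it cannot be repaired.} You claim that only the coordinatewise extremes of $Q$ matter, with an extra factor $n$ for $P=\emptyset$. This is false, and the proposition itself fails once $l\ge 4d+2$. Take test points $x_i=(i,0,\ldots,0)$ for $i\in[n]$. For $G\subseteq[n-1]$ with $|G|=l$, let $A_G=\{((g+\tfrac12,0,\ldots,0),0): g\in G\}$.

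If $g\notin G$, the box $[g,g+1]\times\{0\}^{d-1}$ is consistent with $A_G$ and labels exactly $x_g,x_{g+1}$ positive. If $g\in G$, every box containing $x_g$ and $x_{g+1}$ contains the negative point between them. So the indicator of $\{x_g,x_{g+1}\}$ lies in $\Rcal^d\cap A_G|_S$ if and only if $g\notin G$. The $\binom{n-1}{l}\ge((n-1)/l)^l$ choices of $G$ therefore give pairwise distinct projected classes. For $l=4d+2$ and $n$ large this exceeds $n^{4d+1}$.

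The problem is not confined to $P=\emptyset$; your worry about staircases was correct. For $d\ge 2$, take one positive point at the origin, test points on the grid $[m]^2$, and $l-1$ negatives $(i_k+\tfrac12,j_k+\tfrac12)$ with $i_1<\cdots<i_{l-1}$ and $j_1>\cdots>j_{l-1}$. The projected class is the set of labelings $\1[i\le I,\,j\le J]$ over pairs $(I,J)$ outside the up-set generated by the points $(i_k+1,j_k+1)$. The whole staircase is therefore visible, giving $\binom{m-1}{l-1}^2$ distinct classes on $n=m^2$ test points.

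\textbf{The paper's proof breaks at the same point.} It asserts that $\Rcal^d\cap A$ is a product of per-coordinate constraints $a_i\in(z_i^{(1)},z_i^{(2)}]$ and $b_i\in[z_i^{(3)},z_i^{(4)})$. But a negative point $q$ only imposes the disjunction ``$a_j>q_j$ or $b_j<q_j$ for some $j$''. Moreover, when $A$ has no positive points all of the paper's thresholds are $\pm\infty$, which would wrongly give $\Rcal^d\cap A=\Rcal^d$.

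\textbf{What your bounding-box characterization does prove is an $l$-dependent bound.} A labeling $\lambda$ lies in $\Rcal^d\cap A|_S$ if and only if no negative point of $\lambda\cup A$ lies in the coordinatewise bounding box of its positive points. This condition also covers conflicting labels and the empty box. It depends only on the labels in $A$ and on the coordinatewise order type of $S\cup\{x:(x,y)\in A\}$. For each of the $d$ coordinates, insert the at most $l$ coordinates of points of $A$ one at a time into the sorted test coordinates; each insertion has at most $2n+2l+1$ outcomes. This gives at most $(l+1)2^l(2n+2l+1)^{dl}$ projected classes, i.e.\ $l$-reduction dimension $O(dl)$. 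The examples above show that linear dependence on $l$ is unavoidable.
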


\begin{proof}
    Note that for any dataset $A\subseteq\Rbb^d\times\{0,1\}$, the reduced function class $\Rcal^d\cap A$ can be reduced to separate constraints on each coordinate $i\in[d]$ on the range allowed for the interval $[a_i,b_i]$ for functions $f_{a,b}\in \Rcal\cap A$:
    \begin{equation*}
        \Rcal\cap A = \{f_{a,b}: \forall i\in[d], a_i\in(z_i^{(1)}, z_i^{(2)}] ,b_i\in[z_i^{(3)} ,z_i^{(4)}) \},
    \end{equation*}
    where $z_i^{(1)}=\max\{z_i:(z,0)\in A, \exists (\tilde z,1)\in A, z_i<\tilde z_i\}$, $z_i^{(2)}=\min\{z_i:(z,1)\in A, \exists (\tilde z,0)\in A, \tilde z_i<z_i\}$, and similarly for $z_i^{(3)},z_i^{(4)}$, with the convention $\max\emptyset=-\infty$ and $\min\emptyset=+\infty$. 

    Hence, given $n$ test points $S=\{x^1,\ldots,x^n\}\subseteq\Rbb^d$, the projection $\Rcal\cap A|_S$ only depends on the relative ordering of $z_i^{(1)} < z_i^{(2)}\leq z_i^{(3)} <z_i^{(4)}$ compared to $\{x_i^1,\ldots,x_i^n\}$, for each $i\in[n]$. This gives at most $(n+1)^4$ choices for each $i\in[n]$. In summary,
    \begin{equation*}
        |\{\Rcal^d\cap A|_S:A\subseteq\Xcal\times\{0,1\},|A|\leq l\}| \leq 1+(n+1)^{4d},
    \end{equation*}
    where the additional $1$ comes from the case when $A$ is not realizable.
    Hence, $\Rcal^d$ has $l$-reduction at most $4d+1$.
\end{proof}

\paragraph{Subsets of bounded size.}
Another classical function class example is that of subsets of size at most $d$: for any instance space $\Xcal$, we define
\begin{equation*}
    \Scal^d_\Xcal = \{x\in\Xcal\mapsto\1[x\in S]: S\subseteq\Xcal,|S|\leq d\},
\end{equation*}
which has VC dimension $d$ by construction, provided that $|\Xcal|\geq d$. We can easily check that these have low reduction dimension.

\begin{proposition}\label{prop:subsets_d_reduction_dim}
    For any $\Xcal$ and $l\geq 1$, $\Scal_\Xcal^d$ has $l$-reduction dimension $l+\Ocal(\log d)$.
\end{proposition}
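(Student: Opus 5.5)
Any dataset $A$ can be replaced by a canonical dataset defining the same reduced class, exactly as was done for VC-1 classes and rectangles. We then count the possible restrictions of these canonical classes to a test set $S=\{x_1,\ldots,x_n\}$.

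First, fix $A\subseteq\Xcal\times\{0,1\}$ with $|A|\leq l$. Write $P=\{x:(x,1)\in A\}$ and $Q=\{x:(x,0)\in A\}$. The class $\Scal^d_\Xcal\cap A$ consists of indicators of sets $S'$ with $P\subseteq S'$, $S'\cap Q=\emptyset$ and $|S'|\leq d$. If $P\cap Q\neq\emptyset$ or $|P|>d$, the class is empty, which contributes one possible restriction.

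Otherwise, the restriction of $\Scal^d_\Xcal\cap A$ to $S$ is the set of labelings $\1[\cdot\in S']|_S$ for such $S'$. Since $\Xcal$ may be taken infinite (or we handle this by padding), the points of $S'$ outside $S\cup P$ are irrelevant, except through the constraint that they cannot be placed at points of $Q$. Concretely, the set of achievable labelings on $S$ is $\{R\subseteq S\setminus Q: P\cap S\subseteq R,\ |R\cup P|\leq d\}$. This is determined by three quantities: the set $P\cap S$, the set $Q\cap S$, and the number $|P\setminus S|\in\{0,\ldots,d\}$. When $\Xcal$ is finite, one should also check that enough free points exist outside $S\cup Q$ to realize the remaining budget. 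However, the budget only matters through $|R\cup P|\leq d$, and points of $S'$ outside $S$ beyond $P$ are never needed, since we only need $S'\supseteq P$ and $S'\cap S=R$. So the restriction is exactly the triple above, with no finiteness issue.

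Next, we count the triples. The pair $(P\cap S, Q\cap S)$ consists of two disjoint subsets of $S$ of total size at most $l$, giving at most $\sum_{r\leq l}\binom{n}{r}2^r\leq (2n+1)^l$ choices. Crude bounds already suffice here. The count $|P\setminus S|$ gives at most $d+1$ choices. Adding the empty-class case, the total is at most $1+(d+1)(2n+1)^l$.

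Finally, we verify that this quantity is at most $n^{l+c\log d}$ once $n\geq D_l:=l+c\log d$ for a suitable constant $c$; this is the only mildly delicate step. The factor $(2n+1)^l\leq (3n)^l$ introduces $3^l$, which is not $n^{\Ocal(1)}$ for small $n$. We fix this by a sharper count: $\sum_{r\leq l}\binom{n}{r}2^r\leq \sum_{r\leq l} (2n)^r/r!$, which is at most $n^l\cdot \sum_r 2^r/r!\leq e^2 n^l$ when $n\geq l$. Hence the total is at most $1+e^2(d+1)n^l\leq n^{l}\cdot n^{c\log d}$ whenever $n\geq D_l\geq 2$ and $c$ is a large enough universal constant, since then $n^{c\log d}\geq 2^{c\log d}\geq 1+e^2(d+1)$. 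For $d=1$ this still holds with $n\geq l+c\geq 2$. This gives $l$-reduction dimension $l+\Ocal(\log d)$.
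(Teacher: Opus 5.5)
Your proof is correct and follows the same approach as the paper. Both arguments observe that the restriction of $\Scal^d_\Xcal\cap A$ to $S$ is determined by the labels $A$ fixes on $S$ together with the single number of ones placed outside $S$ (the paper's ``$d-r$''), which yields $1+\Ocal(d)\,n^l$ possible restrictions. Your version is more careful than the paper's sketch in three ways: you sum over all label sets of size at most $l$, you use $\binom{n}{r}\le n^r/r!$ to avoid the $3^l$ factor, and your treatment of $d=1$ correctly reflects that $\Ocal(\log d)$ must be read as $\Ocal(1+\log d)$, e.g.\ $D_l=l+c\log(2d)$.
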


\begin{proof}
    The main observation is that for any dataset $A$, $\Scal_\Xcal^d\cap A$ fixes the value of the function on $A_x:=\{x:(x,y)\in A\}$ and on $\Xcal\setminus S$, it exactly corresponds to the function class $\Scal_{\Xcal\setminus A_x}^{d-r}$ where $r=|\{(x,1)\in A\}|$ is the number of ones in the dataset (since $r$ ones have been fixed, there are at most $d-r$ remaining). Hence, projected on test points $S=\{x_1,\ldots,x_n\}$, the possible $l$-reduced classes correspond to fixing the value on at most $l$ points in $S\cap A_x$, and on the rest of the test points, the reduced class is characterized by a single number $d-r\leq d-p$ where $p=|\{(x,1):x\in S\cap A_x\}|$ is the number of ones on the $l$ datapoints coinciding with test points in $S$. Hence,
    \begin{equation*}
        |\{\Scal_X^d\cap A|_S:A\subseteq\Xcal\times\{0,1\},|A|\leq l\}| \leq 1+\binom{n}{l}\sum_{p=0}^d \binom{l}{p} (d-p+1)\leq 1+edn^l
    \end{equation*}
    where the additional $1$ corresponds to the case when $A$ is not realizable. This ends the proof.
\end{proof}

\section{Proofs of \cref{sec:weak learners}}
We start by proving \cref{lemma:calculation_variance}, which controls the variance of estimation.

\vspace{3pt}
\begin{proof}[of \cref{lemma:calculation_variance}]
Recall that
\[
\hat{\rho}_{k}^{S}(\Fcal):=\frac{1}{\binom{N}{k}}\sum_{S'\subseteq S,|S'|=k}1[S'\text{ is shattered by }\Fcal].
\]
First consider 
\[
\begin{aligned} & \mathbb{P}\left(S_{1},S_{2}\text{ both shattered by }\Fcal\right)\\
 & =\mathbb{P}\left(S_{1}\text{ is shattered by }\Fcal\left|S_{2}\text{ is shattered by }\Fcal\right.\right)\cdot\mathbb{P}\left(S_{2}\text{ is shattered by }\Fcal\right)\\
 & \leq\mathbb{P}\left(S_{1}\backslash S_{2}\text{ is shattered by }\Fcal\right)\cdot\mathbb{P}\left(S_{2}\text{ is shattered by }\Fcal\right)
\end{aligned}
\]
where the last inequality uses the independence of ``$S_{1}\backslash S_{2}\text{ is shattered by }\Fcal$''
and ``$S_{2}\text{ is shattered by }\Fcal$''. Recall that
$\rho_{r}(\Fcal,\mu)$ denotes the probability of a set with
cardinality $r$ being shattered, and since we are fixing $\Fcal$
and $\mu$ here we can write $\rho_{r}$ as a short cut for $\rho_{r}(\Fcal,\mu)$.
Given $N\ge2k$, we have
\[
\begin{aligned}\mathbb{E}\left[\hat{\rho}_{k}^{S}(\Fcal)^{2}\right] & =\binom{N}{k}^{-2}
\sum_{r=0}^{k}
\sum_{\substack{S_1,S_2\subseteq S\\
|S_1|=|S_2|=k\\
|S_1\cap S_2|=r}}
\Pbb\bigl(S_1,S_2 \text{ are both shattered by }\Fcal\bigr).\\
& =\binom{N}{k}^{-2}
\sum_{r=0}^{k}
\sum_{\substack{S_1,S_2\subseteq S\\
|S_1|=|S_2|=k\\
|S_1\cap S_2|=r}}
\rho_{k}\rho_{k-r}\\
 & =\binom{N}{k}^{-2}\sum_{r=0}^{k}\binom{N}{r}\binom{N-r}{k-r}\binom{N-k}{k-r}\rho_{k}\rho_{k-r}\\
 & =\rho_{k}\left(\sum_{r=0}^{k}\frac{\binom{k}{r}\binom{N-k}{k-r}}{\binom{N}{k}}\rho_{k-r}\right).
\end{aligned}
\]
Therefore 
\begin{equation}\label{eq:variance_general_bound}
\text{Var}\left[\hat{\rho}_{k}^{S}(\Fcal)\right]\leq\rho_{k}\left(\sum_{r=1}^{k}\frac{\binom{k}{r}\binom{N-k}{k-r}}{\binom{N}{k}}\left(\rho_{k-r}-\rho_{k}\right)\right).
\end{equation}
Consider a hypergeometric distribution $X\sim\text{Hypergeometric}(N,k,k)$
then
\[
\text{Var}\left[\hat{\rho}_{k}^{S}(\Fcal)\right]\leq\rho_{k}\mathbb{E}\left[\rho_{k-X}\1(X\geq1)\right].
\]
For simplicity, write $\eta:=k^2 /N$. By assumption, $\rho_{l}\leq c\eta^{l-(k-1)}$ for
all $l\leq k-1$. Therefore,
\begin{equation}
\text{Var}\left[\hat{\rho}_{k}^{S}(\Fcal)\right]\leq\rho_{k}\mathbb{E}\left[c\eta^{1-X}\right].\label{eq:Var_rho_1}
\end{equation}
Let $R\sim\text{Binom}(k,\frac{k}{N})$. A standard comparison inequality
for sampling with/without replacement \cite[Theorem 4]{hoeffding1963} implies
that $\mathbb{E}[\eta^{-X}]\leq\mathbb{E}[\eta^{-R}]$. And
hence 
\begin{equation}
\mathbb{E}[\eta^{-X}]\le(1-\frac{k}{N}+\frac{k}{N\eta})^{k}\leq\exp\left(\Big(\tfrac{1}{\eta}-1\Big)\frac{k^{2}}{N}\right).\label{eq:Var_rho_2}
\end{equation}
Combining \cref{eq:Var_rho_1} and \cref{eq:Var_rho_2} we have
\[
\text{Var}\left[\hat{\rho}_{k}^{S}(\Fcal)\right]\leq\rho_{k}c\eta\cdot\exp\left(\Big(\tfrac{1}{\eta}-1\Big)\frac{k^{2}}{N}\right)<3c\eta\rho_k.
\]
This ends the proof.
\end{proof}

The following lemma guarantees accurate estimation of shattering probabilities up to a $(1\pm 0.2)$ factor, which can be obtained as an immediate result of \cref{lemma:calculation_variance} under the assumption that estimation error $\left|\rho_{k}^{\mathcal{S}}(\Fcal)-\rho_{k}(\Fcal,\mu)\right|$ is bounded by standard deviation $\sigma_k ^N (\Fcal,\mu)$. Nevertheless, it's useful to state it separately as it will be repeatedly used in the proof of \cref{thm:iid_examples_v2}.
\begin{lemma}\label{lemma:inductive_concentration_rho_k}
Let $\Fcal:\mathcal{X}\to\{0,1\}$ and $\mu$ a distribution
on $\mathcal{X}$ and let $\mathcal{S}=\left\{ S_{i}\right\} _{i\in[m]}$
where each $S_{i}$ contains $N$ iid samples from $\mu$. Let $2000 d^{2}/N\leq\epsilon<1$. 
Assume that there is an integer $k\in[d]$ and constant $c'$ such
that for $0\leq l\leq k-1$,
\[
\rho_{l}(\Fcal,\mu)\leq1.2c'\epsilon^{l-(k-1)}.
\]
Then, given
\begin{equation}
\left|\rho_k ^\mathcal{S}(\Fcal)-\rho_{k}(\Fcal,\mu)\right|\leq2\sigma_k^N(\Fcal,\mu)\label{eq:median_assumption}
\end{equation}
we have
\[
\left|\rho_{k}^{\mathcal{S}}(\Fcal)-\rho_{k}(\mathcal{F},\mu)\right|<0.1\sqrt{c'\epsilon\rho_{k}(\Fcal,\mu)}
\]
which implies that, whenever $\rho_{k}^{\mathcal{S}}(\Fcal)\geq c'\epsilon$,
we have $\rho_{k}(\Fcal,\mu)\in\rho_{k}^{\mathcal{S}}(\Fcal)\cdot[0.8,1.2]$
as well.
\end{lemma}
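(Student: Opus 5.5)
The plan is to apply \cref{lemma:calculation_variance} with a suitable rescaling, and then combine it with the assumed median accuracy \cref{eq:median_assumption}.

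\emph{Step 1: set up the variance lemma.} Put $\eta:=k^2/N$. Since $N\geq 2000d^2/\epsilon\geq 2000k^2/\epsilon$, we have $\eta\leq \epsilon/2000$, and also $N\geq k^2$. The hypothesis $\rho_l\leq 1.2c'\epsilon^{l-(k-1)}$ for $l\leq k-1$ has to be rewritten in the form $\rho_l\leq c\,\eta^{l-(k-1)}$. Because $\epsilon\geq\eta$ and $l-(k-1)\leq 0$, we get $\epsilon^{l-(k-1)}\leq \eta^{l-(k-1)}$, so this holds with $c=1.2c'$. That choice loses the ratio $\eta/\epsilon$ that we need.

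Instead, I will rerun the argument in the proof of \cref{lemma:calculation_variance} directly with base $\epsilon$. Starting from \cref{eq:variance_general_bound},
\[
\mathrm{Var}\bigl[\hat\rho_k^S\bigr]\leq \rho_k\,\Ebb\bigl[\rho_{k-X}\1(X\geq1)\bigr]\leq 1.2c'\rho_k\,\Ebb\bigl[\epsilon^{1-X}\1(X\geq 1)\bigr],
\]
where $X\sim\mathrm{Hypergeometric}(N,k,k)$.

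\emph{Step 2: bound the expectation.} By the Hoeffding comparison with $R\sim\mathrm{Binom}(k,k/N)$, we have $\Ebb[\epsilon^{-X}\1(X\geq 1)]\leq \Ebb[\epsilon^{-R}]-1\leq \exp\bigl(k^2/(N\epsilon)\bigr)-1$. Since $k^2/(N\epsilon)\leq 1/2000$, this is at most about $1.001\cdot k^2/(N\epsilon)\leq 1.001/2000$. Therefore
\[
\sigma_k^N\leq\sqrt{1.2c'\epsilon\rho_k\cdot 1.001/2000}\leq 0.025\sqrt{c'\epsilon\rho_k}.
\]
Combining with \cref{eq:median_assumption} gives $|\rho_k^\Scal-\rho_k|\leq 0.05\sqrt{c'\epsilon\rho_k}<0.1\sqrt{c'\epsilon\rho_k}$. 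The one subtlety is the indicator $X\geq1$: the $X=0$ term cancels against $-\rho_k^2$ in the variance, which is exactly why \cref{eq:variance_general_bound} sums from $r=1$. Keeping that indicator is the main technical point, since it is what produces the factor $k^2/N$ relative to $\epsilon$.

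\emph{Step 3: the multiplicative consequence.} Write $\hat\rho=\rho_k^\Scal\geq c'\epsilon$ and $\rho=\rho_k$. Suppose first $\rho\geq\hat\rho$. Then $\rho-\hat\rho<0.1\sqrt{c'\epsilon\rho}\leq 0.1\sqrt{\hat\rho\,\rho}$. Setting $u=\sqrt{\rho/\hat\rho}$, this reads $u^2-0.1u-1<0$, so $u<1.052$ and $\rho<1.107\hat\rho\leq1.2\hat\rho$. Now suppose $\rho<\hat\rho$. Then $\hat\rho-\rho<0.1\sqrt{\hat\rho\rho}$, i.e.\ $1-u^2<0.1u$, which gives $u>0.951$ and $\rho>0.904\hat\rho\geq 0.8\hat\rho$. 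Together these give $\rho\in\hat\rho\cdot[0.8,1.2]$.

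The main obstacle is Step 2: the constants in \cref{lemma:calculation_variance} are too loose if it is applied as a black box with $c=1.2c'$. The fix is to redo its short computation with base $\epsilon$ while keeping the $X\geq1$ indicator, so that the factor $k^2/(N\epsilon)\leq 1/2000$ appears explicitly.
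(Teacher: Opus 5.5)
Your overall structure is the paper's: bound $\sigma_k^N$ via \cref{eq:variance_general_bound}, combine with \cref{eq:median_assumption}, then convert the additive error into a multiplicative one. Your Step 3 is correct, and more explicit than the paper's one-line ``which implies''.

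Your stated reason for not applying \cref{lemma:calculation_variance} as a black box is mistaken, though. That black-box application is exactly the paper's four-line proof. Take $\rho_l\le 1.2c'\eta^{l-(k-1)}$, i.e.\ $c=1.2c'\eta^{-(k-1)}\ge\rho_0=1$ in the lemma's normalization. The lemma then gives $\mathrm{Var}<3c\eta^k\rho_k=3.6c'\eta\,\rho_k$. The needed ratio enters precisely through $\eta=k^2/N\le\epsilon/2000$, so
\[
2\sigma_k^N<2\sqrt{3.6/2000}\,\sqrt{c'\epsilon\rho_k}\approx 0.085\sqrt{c'\epsilon\rho_k}.
\]
Choosing base $\eta$ keeps the exponential moment $\Ebb[\eta^{-X}]\le e$ bounded without needing the indicator, so nothing is lost.

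Your base-$\epsilon$ route is a legitimate alternative and yields the better constant $0.05$. However, one inequality in Step 2 is false as written. The correct identity is $\Ebb[\epsilon^{-X}\1(X\ge 1)]=\Ebb[\epsilon^{-X}]-\Pbb(X=0)$, not $\Ebb[\epsilon^{-X}]-1$. Hence $\Ebb[\epsilon^{-X}\1(X\ge1)]\le\Ebb[\epsilon^{-R}]-1$ does not follow. For $k=1$, where $X$ and $R$ are both Bernoulli$(1/N)$, the left side is $1/(N\epsilon)$ while the right side is $(1/\epsilon-1)/N$.

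The repair is short. Put $a=k^2/N$ and $b=a/\epsilon\le 1/2000$. Use $\Pbb(X=0)\ge 1-\Ebb[X]=1-a$ together with the sharper bound $\Ebb[\epsilon^{-R}]\le\exp\bigl(a(1/\epsilon-1)\bigr)=e^{b-a}$. Then
\[
\Ebb[\epsilon^{-X}\1(X\ge1)]\le e^{b-a}-1+a\le (b-a)e^{b}+a\le b\,e^{b}\le 1.001\,b.
\]
So your constant $\sigma_k^N\le 0.025\sqrt{c'\epsilon\rho_k}$ survives, and the rest of your argument goes through unchanged.
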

\begin{proof}
Since $k^2/N\leq\epsilon$, we have $\rho_{l}\leq1.2c'(k^2/N)^{l-(k-1)}$ for $0\leq l\leq k-1 $, by
\cref{lemma:calculation_variance},
\[
\text{Var}\left[\hat{\rho}_{k}^{S_{1}}(\mathcal{F})\right]<3.6(k^2/N) c' \rho_{k}(\mathcal{F},\mu).
\]
With condition (\ref{eq:median_assumption}), we have
\[
\begin{aligned}\left|\rho_{k}^{\mathcal{S}}(\mathcal{F})-\rho_{k}(\mathcal{F},\mu)\right| & \leq2\sqrt{3.6 (k^2/N) c' \rho_{k}(\mathcal{F},\mu)}\\
 & \leq2\sqrt{3.6\epsilon c' \rho_{k}(\mathcal{F},\mu)/2000}\\
 & <0.1\sqrt{c'\epsilon\rho_{k}(\mathcal{F},\mu)}.
\end{aligned}
\]
Therefore, whenever $\rho_{k}^{\mathcal{S}}(\mathcal{F})\geq c'\epsilon$,
we have
\[
\left|\rho_{k}^{\mathcal{S}}(\mathcal{F})-\rho_{k}(\mathcal{F},\mu)\right|<0.1\sqrt{\rho_{k}^{\mathcal{S}}(\mathcal{F})\rho_{k}(\mathcal{F},\mu)},
\]
which implies that $\rho_{k}(\mathcal{F},\mu)\in\rho_{k}^{\mathcal{S}}(\mathcal{F})\cdot[0.8,1.2]$
as well.
\end{proof}

In the adaptive case, the condition \cref{eq:median_assumption} does not naturally hold---as discussed in \cref{sec:weak learners}, the function class $\Fcal$ can be badly correlated with $\Scal$. We next prove \cref{lemma:uniform_convergence_median}, which resolves this issue by showing universal concentration over all possible
function classes of the form $\Fcal\cap A$ where $\Fcal$ is the initial hypothesis class and $A$ can be any subset
of less than $l$ data points in $\Xcal\times\{0,1\}$.

\vspace{3pt}

\begin{proof}[of \cref{lemma:uniform_convergence_median}]
    The first step of the proof involves lifting the problem from $\Xcal$ to $\Xcal_s:= \set{S\subseteq \Xcal, |S|=s}$ then applying standard uniform concentration bounds on a carefully chosen VC class $\Gcal:\Xcal_s\to \{0,1\}$. First, we denote by $\mu_s$ the distribution of $\{x_i,i\in[s]\}$ where $(x_i)_{i\in[s]}\overset{iid}{\sim}\mu$ are i.i.d.\ samples from $\mu$. We then define the function class $\Gcal:\Xcal_s\to\{0,1\}$ as the collection of all functions of the form
    \begin{equation*}
        \phi_{A,I}:S\in \Xcal_s \mapsto \1_I[\hat \rho_k^S(\Fcal\cap A) ],
    \end{equation*}
    for any subset $A\subseteq\Xcal\times\{0,1\}$ with $|A|\leq l$, and any interval $I\subseteq \Rbb_+$.
    
    We now bound the VC dimension of this function class $\Gcal$. Fix any test sets $S_1,\ldots, S_m\in\Xcal_s$ for $m\geq D_l/s$. Note that given values $z_1,\ldots,z_m\in\Rbb_+$, the set of possible projections of functions $\1_I[\cdot]$ for closed intervals $I\subseteq\Rbb_+$ onto $\{z_1,\ldots,z_m\}$ is upper bounded by $\binom{m+1}{2}+1$---we can order these elements by increasing order then decide of the start and end point of those which belong to the interval $I$. In particular,
    \begin{align*}
        \abs{\Gcal|_{\{S_1,\ldots,S_m\}}} &\leq \abs{\set{(\hat\rho_k^{S_i} (\Fcal\cap A))_{i\in[m]}: A\subseteq\Xcal\times\{0,1\},|A|\leq l}}\cdot m(m+1)\\
        &\overset{(i)}{\leq} \abs{\set{\Fcal\cap A|_{S_1\cup\ldots\cup S_m}: A\subseteq\Xcal\times\{0,1\},|A|\leq l}}\cdot m(m+1)\\
        &\overset{(ii)}{\leq} (ms)^{D_l} \cdot m(m+1).
    \end{align*}
    In $(i)$ we noted that given the projection of the function class $\Fcal\cap A$ onto $S_1\cup\ldots\cup S_m$, we can construct all estimates of the form $\hat\rho_k^{S_1}(\Fcal\cap A),\ldots,\hat\rho_k^{S_m}(\Fcal\cap A) $. Indeed, $\hat\rho_k^S(\Fcal)$ only evaluates the function class $\Fcal$ on points in $S$. In $(ii)$ we used the definition of the restriction dimension together with $ms\geq D_l$. Recall that if $\{S_1,\ldots,S_m\}$ are shattered by $\Gcal$, then $\abs{\Gcal|_{\{S_1,\ldots,S_m\}}}=2^m$. Together with the previous bound, this shows that $\Gcal$ has VC dimension at most $c_0D_l\log (sD_l)$ for some $c_0\geq 1$.

    We can then apply standard VC uniform concentration bounds on $\Gcal$, e.g. \cite[Theorem 12.5]{devroye2013probabilistic}, which shows that there is a universal constant $c_1$ such that the following holds for any $m\geq c_1 (D_l\log (sD_l)+\log \frac{1}{\delta})$. Let $S_1,\ldots,S_m\overset{iid}{\sim}\mu_s$, with probability at least $1-\delta$, for any subset $A\subseteq\Xcal\times\{0,1\}$ with $|A|\leq l$ and interval $I\subseteq\Rbb_+$,
    \begin{equation}\label{eq:uniform_concentration_bound}
        \abs{\frac{1}{m}\abs{\set{i\in[m]:\hat\rho_k^{S_i}(\Fcal\cap A)\in I }} - \Pbb_{S\sim\mu_s}[\hat\rho_k^{S_i}(\Fcal\cap A)\in I]} \leq \frac{1}{8}.
    \end{equation}
    Now note that for any $A\subseteq\Xcal\times\{0,1\}$, by Chebyshev's inequality we have
    \begin{align*}
        \Pbb_{S\sim \mu_s}\paren{|\hat\rho_k^S(\Fcal\cap A) - \rho_k(\Fcal\cap A,\mu)| \leq 2\sigma_k^s(\Fcal\cap A,\mu)} \leq \frac{1}{4}.
    \end{align*}
    In particular, for the interval $I_A := \rho_k(\Fcal\cap A,\mu) + 2\sigma_k^s(\Fcal\cap A,\mu) \cdot [-1,1]$, this precisely shows $\Pbb_{S\sim\mu_s}[\hat\rho_k^{S_i}(\Fcal\cap A)\in I_A] \geq \frac{3}{4}$. Together with \cref{eq:uniform_concentration_bound}, we obtained with probability at least $1-\delta$, for any set $A$ of at most $l$ datapoints,
    \begin{equation*}
         \abs{\set{i\in[m]:|\hat\rho_k^{S_i}(\Fcal\cap A) - \rho_k(\Fcal\cap A,\mu)| \leq 2\sigma_k^s(\Fcal\cap A,\mu) }} > \frac{m}{2}.
    \end{equation*}
    In turn, this implies that the median value of $\hat\rho_k^{S_i}(\Fcal\cap A)$ for $i\in[m]$ belongs to the desired interval:
    \begin{equation*}
        |\rho_k^\Scal(\Fcal\cap A) - \rho_k(\Fcal\cap A,\mu)| \leq 2\sigma_k^s(\Fcal\cap A,\mu).
    \end{equation*}
    This ends the proof.
\end{proof}

We are now ready to prove the main guarantee for the weak learner $\textsc{WL}(\Tcal,z)$ with appropriate parameters.
\vspace{3pt}
\begin{proof}[of \cref{thm:iid_examples_v2}]
Before analyzing the miclassification and abstention error, we make a few remarks.
First, note that for the considered subset $\Tcal$, the samples $(x_t)_{t\in\Tcal}$ are i.i.d.\ sampled from $\mu$. Further, by construction, since these are the first non-corrupted times and the vector $z$ contains their correct labels, in the interval of time $[\max\Tcal]$ the learner $\textsc{WL}(\Tcal,z)$ makes no classification nor abstention mistakes. As a result, it suffices to focus on the period of time $(\max\Tcal,T]$. 

For each $t\in\left\{ \max\mathcal{T}+1,\ldots,T\right\} $, we define the set of function classes encountered by the learner via
\[
\mathcal{G}\coloneqq\set{ \mathcal{F}_{t}, \mathcal{F}_{t}^{x_t\to 0}, \mathcal{F}_{t}^{x_t\to 1}: t\in(\max\mathcal{T},T] }.
\]
Note that these are random function classes.
Next, we define the event
\begin{equation}\label{eq:event_Ecal}
\mathcal{E}\coloneqq\left\{ \forall\tilde \Fcal\in\mathcal{G},k\in[d],\quad\left|\rho_{k}^{\Scal}(\tilde \Fcal)-\rho_{k}(\tilde \Fcal,\mu)\right|\leq 2\sigma_k^N(\tilde\Fcal)\right\}.
\end{equation}
For now, let us assume $\Pbb(\Ecal^c)\leq 3\epsilon$. We will verify this assumption later for both oblivious and adaptive adversaries, after analyzing the misclassification and abstention error.

\paragraph{Misclassification error.}
First, we analyze the misclassification error. 
By construction of weak learners, misclassification errors only occur at times $t\in[T]$ when $\rho_{1}^{\mathcal{S}}(\mathcal{F}_{t})>\epsilon$
and $\min_{y\in\left\{ 0,1\right\} }\rho_{k_t}^{\mathcal{S}}(\mathcal{F}_{t}^{x_{t}\to y})<0.9\rho_{k_t}^{\mathcal{S}}(\mathcal{F}_{t})$, where $k_t$ is defined as in line 5 of \cref{alg:weak_learner}.
Thus, since we predict with the label corresponding to $\arg\max_{y\in\left\{ 0,1\right\} }\rho_{k_t}^{\mathcal{S}}(\mathcal{F}_{t}^{x_{t}\to y})$
and a mistake occurred, we have 
\[
\rho_{k_t}^{\mathcal{S}}(\mathcal{F}_{t+1})=\rho_{k_t}^{\mathcal{S}}(\mathcal{F}_{t}^{x_{t}\to y_{t}})<0.9\rho_{k_t}^{\mathcal{S}}(\mathcal{F}_{t}).
\]
Meanwhile, notice that combining $\rho_{1}^{\mathcal{S}}(\mathcal{F}_{t})>\epsilon$
and $\forall l\in[k_t],\rho_{l}^{S}(\mathcal{F}_{t})>\epsilon\cdot\rho_{l-1}^{S}(\mathcal{F}_{t})$,
we have
\[
\rho_{k_t}^{\mathcal{S}}(\mathcal{F}_{t})>\epsilon\cdot\epsilon^{k_t-1}=\epsilon^{k_t}.
\]
For each fixed $k\in[d]$, $\rho_k(\Fcal_t)$ is non-increasing since the $\Fcal_t$ are non-increasing throughout the learning procedure. In particular, $\rho_{k}^{\mathcal{S}}(\Fcal_t)$ can be decreased
by a factor $9/10$ at most $k\log_{\frac{10}{9}}\left(\frac{1}{\epsilon}\right)$
times. Hence 
\[
\textsc{MisErr}(\mathcal{T},z)\leq\sum_{k=1}^{d}k\log_{\frac{10}{9}}\left(\frac{1}{\epsilon}\right)< 5d^{2}\log\left(\frac{1}{\epsilon}\right).
\]
Note that this result on misclassification error holds \emph{regardless of} event $\Ecal$. 

\paragraph{Abstention error.}
Consider a time $t\in[\max\mathcal{T}+1,T]$ for which $k_t$ was defined, that is $\rho_1^S(\Fcal_t)>\epsilon$. We start by proving by induction that under event $\Ecal$,
\begin{equation}
\rho_{k'}(\mathcal{F}_{t},\mu)\in\rho_{k'}^{\mathcal{S}}(\mathcal{F}_{t})\cdot[0.8,1.2],\quad\forall 0\leq k'\leq k_t.\label{eq:3-claim1}
\end{equation}
Note that the base case $k'=0$ is true because $\rho_{0}(\mathcal{F}_{t},\mu)=\rho_{0}^{\mathcal{S}}(\mathcal{F}_{t})=1$. Suppose that for some $k'\in[k_t]$ this holds for all $0\leq l\leq k'-1$.
By construction of the weak learner, we have
\begin{equation*}
    \rho_{l}^{\mathcal{S}}(\mathcal{F}_{t})\leq\rho_{k'-1}^{\mathcal{S}}(\mathcal{F}_{t})\epsilon^{l-(k'-1)},\quad 0\leq l\leq k'-1.
\end{equation*}
Together with the induction hypothesis, this implies
\begin{equation}
\rho_{l}(\mathcal{F}_{t},\mu)\leq1.2\rho_{l}^{\mathcal{S}}(\mathcal{F}_{t})\leq1.2\rho_{k'-1}^{\mathcal{S}}(\mathcal{F}_{t})\epsilon^{l-(k'-1)} , \quad  0\leq l\leq k'-1.\label{eq:3-3}
\end{equation}
Moreover, the event $\mathcal{E}$ implies that 
\begin{equation}
\left|\rho_{k'}^{\Scal}(\Fcal_t)-\rho_{k'}(\mathcal{F}_{t},\mu)\right|\leq2 \sigma_{k'}^N(\Fcal_t).\label{eq:3-2}
\end{equation}
Also, recall that by construction, we have $\rho_{k'}^{\mathcal{S}}(\mathcal{F}_{t})>\epsilon\rho_{k'-1}^{\mathcal{S}}(\mathcal{F}_{t})$. Note that this also holds when $k'=1$ since by construction $\rho_{1}^{\mathcal{S}}(\mathcal{F}_{t}) \geq \epsilon$ and $\rho_{0}^{\mathcal{S}}(\mathcal{F}_{t})=1$. Altogether, we can now apply \cref{lemma:inductive_concentration_rho_k} with the parameter $c'=\rho_{k'-1}^{\mathcal{S}}(\mathcal{F}_{t})$, which gives the desired induction
\[
\rho_{k'}(\mathcal{F}_{t},\mu)\in\rho_{k'}^{\mathcal{S}}(\mathcal{F}_{t})\cdot[0.8,1.2].
\]
This ends the proof of \cref{eq:3-claim1}.

Using the same arguments, we can check that \cref{eq:3-2,eq:3-3} still hold for $k'=k_t +1$. Hence, applying Lemma \ref{lemma:inductive_concentration_rho_k}
gives
\[
\begin{aligned}\rho_{k_t+1}(\mathcal{F}_{t},\mu) & <\rho_{k_t+1}^{\mathcal{S}}(\mathcal{F}_{t})+0.1\sqrt{\epsilon\rho_{k_t}^{\mathcal{S}}(\mathcal{F}_{t})\rho_{k_t+1}(\mathcal{F}_{t},\mu)}\\
 & \overset{(i)}{\leq}\epsilon\rho_{k_t}^{\mathcal{S}}(\mathcal{F}_{t})+0.1\sqrt{\epsilon\rho_{k_t}^{\mathcal{S}}(\mathcal{F}_{t})\rho_{k_t+1}(\mathcal{F}_{t},\mu)},
\end{aligned}
\]
where in $(i)$ we used the definition of $k_t$ in line
5 of \cref{alg:weak_learner}: either $k_t=d$ in which case we already have $\rho_{k_t+1}^{\mathcal{S}}(\mathcal{F}_{t})=0$ or $k_t<d$ in which case we must have $\rho_{k_t+1}^{\mathcal{S}}(\mathcal{F}_{t}) \leq \epsilon \rho_{k_t}^{\mathcal{S}}(\mathcal{F}_{t})$. Consequently,
\begin{equation}\label{eq:3-1}
    \rho_{k_t+1}(\mathcal{F}_{t},\mu)\leq1.2\epsilon\rho_{k_t}^{\mathcal{S}}(\mathcal{F}_{t}).
\end{equation}

Next, we aim to show that for $y\in\left\{ 0,1\right\} $, given
$\rho_{k_t}^{\mathcal{S}}(\mathcal{F}_{t}^{x_{t}\to y})\geq0.9\rho_{k_t}^{\mathcal{S}}(\mathcal{F}_{t})$,
we have
\begin{equation}\label{eq:good_estimation_cases}
    \rho_{k_t}(\mathcal{F}_{t}^{x_{t}\to y},\mu)\in\rho_{k_t}^{\mathcal{S}}(\mathcal{F}_{t}^{x_{t}\to y})\cdot[0.8,1.2].
\end{equation}
To see this, first note that for any $l\in[0,k_t]$ we have
\[
\rho_{l}(\mathcal{F}_{t}^{x_{t}\to y},\mu)\leq\rho_{l}(\mathcal{F}_{t},\mu)\leq1.2\rho_{l}^{\mathcal{S}}(\mathcal{F}_{t})\leq1.2(\rho_{k_t}^{\mathcal{S}}(\mathcal{F}_{t})\slash\epsilon)\cdot\epsilon^{l-(k_t-1)}.
\]
Next, since the event $\mathcal{E}$ holds,
\[
\left|\rho_{k_t}^{\Scal}(\Fcal_t ^{x_t \to y})-\rho_{k_t}(\mathcal{F}_{t}^{x_{t}\to y},\mu)\right|\leq 2\sigma_{k_t}^N(\mathcal{F}_{t}^{x_{t}\to y}).
\]
By Lemma \ref{lemma:inductive_concentration_rho_k}
with $c'=\rho_{k_t}^{\mathcal{S}}(\mathcal{F}_{t})\slash\epsilon$,
\[
\begin{aligned}\left|\rho_{k_t}^{\mathcal{S}}(\mathcal{F}_{t}^{x_{t}\to y})-\rho_{k_t}(\mathcal{F}_{t}^{x_{t}\to y},\mu)\right| & <0.1\sqrt{\rho_{k_t}^{\mathcal{S}}(\mathcal{F}_{t})\rho_{k_t}(\mathcal{F}_{t}^{x_{t}\to y},\mu)}\\
 & \leq0.1\sqrt{\frac{10}{9}\rho_{k_t}^{\mathcal{S}}(\mathcal{F}_{t}^{x_{t}\to y})\rho_{k_t}(\mathcal{F}_{t}^{x_{t}\to y},\mu)}
\end{aligned}
\]
which implies $\rho_{k_t}(\mathcal{F}_{t}^{x_{t}\to y},\mu)\in\rho_{k_t}^{\mathcal{S}}(\mathcal{F}_{t}^{x_{t}\to y})\cdot[0.8,1.2]$, ending the proof of \cref{eq:good_estimation_cases}.

We are now ready to bound the abstention error. There are two possible cases for an abstention error: When $\rho_{1}^{\mathcal{S}}(\mathcal{F}_{t})\leq\epsilon$, since $x_{t}\notin D(\mathcal{F}_{t})$
implies $\hat{y}_{t}=y_{t}$, an error at time $t$ can only happen
when $x_{t}\sim\mu$ and $x_{t}\in D(\mathcal{F}_{t})$, as in line 12 of \cref{alg:weak_learner}. When $\rho_{1}^{\mathcal{S}}(\mathcal{F}_{t})>\epsilon$, an abstention error occurs if and only if the adversary has decided
not to corrupt at time $t$ (i.e. $c_{t}=0$) yet the learner still
abstains due to 
$\min_{y\in\{0,1\}}\rho_{k_t}^{\mathcal{S}}(\mathcal{F}_{t}^{x_{t}\to y})\geq0.9\rho_{k_t}^{\mathcal{S}}(\mathcal{F}_{t})$.
Let $\mathcal{H}_{t}$ denote the history before time $t$, hence $c_t\in\Hcal_t$.
Combining the two cases, the probability of an abstention error at time $t$ is bounded by
\begin{equation}\label{eq:Abs_error}
\begin{aligned}P_{t} & \coloneqq\mathbb{P}(x_{t}\in D(\mathcal{F}_{t}),\rho_{1}^{\mathcal{S}}(\mathcal{F}_{t})\leq\epsilon,c_{t}=0)\\
 & \qquad\qquad\qquad\qquad\qquad\qquad+\mathbb{P}(\min_{y\in\{0,1\}}\rho_{k_t}^{\mathcal{S}}(\mathcal{F}_{t}^{x_{t}\to y})\geq0.9\rho_{k_t}^{\mathcal{S}}(\mathcal{F}_{t}),\rho_{1}^{\mathcal{S}}(\mathcal{F}_{t})>\epsilon,c_{t}=0)\\
 & \leq\mathbb{P}(x_t\in D(\mathcal{F}_{t}),\rho_{1}^{\mathcal{S}}(\mathcal{F}_{t})\leq\epsilon,\mathcal{E}|c_t = 0)+\mathbb{P}(\min_{y\in\{0,1\}}\rho_{k_t}^{\mathcal{S}}(\mathcal{F}_{t}^{x_{t}\to y})\geq0.9\rho_{k_t}^{\mathcal{S}}(\mathcal{F}_{t}),\mathcal{E},c_{t}=0)+2\mathbb{P}(\mathcal{E}^{c})
\end{aligned}
\end{equation}
% Note that here $\mathbb{P}_{x\sim\mu}$ is taking expectation with respect to both $x$ and $\mathcal{H}_t$, with the distribution of $x\sim\mu$ being independent of $\mathcal{H}_t$. 
Applying Lemma \ref{lemma:inductive_concentration_rho_k} with $k=1$ and $c'=1$ yields that, under event $\mathcal{E}$,
$$\left|\rho_{1}^{\mathcal{S}}(\mathcal{F}_{t})-\rho_{1}(\mathcal{F}_{t},\mu)\right|<0.1\sqrt{\epsilon\rho_{1}(\mathcal{F}_{t},\mu)}.$$ Given $\rho_{1}^{\mathcal{S}}(\mathcal{F}_{t})\leq\epsilon$, we must have $\rho_{1}(\mathcal{F}_{t},\mu)\leq1.2\epsilon$. Otherwise, if $\rho_{1}(\mathcal{F}_{t},\mu)>1.2\epsilon$,$$\frac{5}{6}>\frac{\rho_{1}^{\mathcal{S}}(\mathcal{F}_{t})}{\rho_{1}(\mathcal{F}_{t},\mu)}>1-0.1\sqrt{\frac{\epsilon}{\rho_{1}(\mathcal{F}_{t},\mu)}}\geq1-\frac{0.1}{\sqrt{1.2}}\approx0.91,$$
which is clearly a contradiction. Therefore,
\begin{equation}\label{eq:0001}
\begin{aligned}\mathbb{P}(x_t\in D(\mathcal{F}_{t}),\rho_{1}^{\mathcal{S}}(\mathcal{F}_{t})\leq\epsilon,\mathcal{E}|c_t = 0) & \leq\mathbb{P}(x_t\in D(\mathcal{F}_{t}),\rho_{1}(\mathcal{F}_{t},\mu)\leq1.2\epsilon|c_t = 0)\\
 & \leq\mathbb{P}(x_t\in D(\mathcal{F}_{t})\left|\rho_{1}(\mathcal{F}_{t},\mu)\leq1.2\epsilon,c_t = 0\right.)\\
 & \leq1.2\epsilon.
\end{aligned}   
\end{equation}
where the last inequality is because the distribution of $x_t$ given $c_t=0$ is independent of $\Hcal_t$.

Now we turn to the second term in \cref{eq:Abs_error}. 
For $y\in\{0,1\}$, since $\rho_{k_t}^{\mathcal{S}}(\mathcal{F}_{t}^{x_{t}\to y})\geq0.9\rho_{k_t}^{\mathcal{S}}(\mathcal{F}_{t})$,
from the previous discussions we have that under $\mathcal{E}$,
\[
\rho_{k_t}(\mathcal{F}_{t}^{x_{t}\to y},\mu)\in\rho_{k_t}^{\mathcal{S}}(\mathcal{F}_{t}^{x_{t}\to y})\cdot[0.8,1.2],
\]
and by \cref{eq:3-claim1}, $\rho_{k_t}(\mathcal{F}_{t},\mu)\leq1.2\rho_{k_t}^{\mathcal{S}}(\mathcal{F}_{t})$.
Hence event $\left\{ \min_{y\in\{0,1\}}\rho_{k_t}^{\mathcal{S}}(\mathcal{F}_{t}^{x_{t}\to y})\geq0.9\rho_{k_t}^{\mathcal{S}}(\mathcal{F}_{t})\right\} \cap\mathcal{E}$
implies
\[
\min_{y\in\{0,1\}}\rho_{k_t}(\mathcal{F}_{t}^{x_{t}\to y},\mu)\geq0.8\min_{y\in\{0,1\}}\rho_{k_t}^{\mathcal{S}}(\mathcal{F}_{t}^{x_{t}\to y})\geq0.72\rho_{k_t}^{\mathcal{S}}(\mathcal{F}_{t})\geq0.6\rho_{k_t}(\mathcal{F}_{t},\mu).
\]
Consequently,
\begin{equation*}
\begin{aligned} & \mathbb{P}(\min_{y\in\{0,1\}}\rho_{k_t}^{\mathcal{S}}(\mathcal{F}_{t}^{x_{t}\to y})\geq0.9\rho_{k_t}^{\mathcal{S}}(\mathcal{F}_{t}),\mathcal{E},c_{t}=0)\\
 & \leq\mathbb{P}\left(\min_{y\in\{0,1\}}\rho_{k_t}(\mathcal{F}_{t}^{x_{t}\to y},\mu)\geq0.6\rho_{k_t}(\mathcal{F}_{t},\mu),c_{t}=0\right)\\
 & \leq\mathbb{P}_{x\sim\mu}\left(\min_{y\in\{0,1\}}\rho_{k_t}(\mathcal{F}_{t}^{x\to y},\mu)\geq0.6\rho_{k_t}(\mathcal{F}_{t},\mu)\right).
\end{aligned}\label{eq:5-1}
\end{equation*}
Note that here $\mathbb{P}_{x\sim\mu}$ is taking expectation with respect to both $x$ and $\mathcal{H}_t$, with the distribution of $x\sim\mu$ being independent of $\mathcal{H}_t$. 
Further, from \cite[Lemma 4.2]{goel2023adversarial}, or \cref{lemma:goel_main_lemma}, we have 
\begin{equation*}
\mathbb{P}_{x\sim\mu}\left(\min_{y\in\{0,1\}}\rho_{k_t}(\mathcal{F}_{t}^{x\to y},\mu)\geq0.6\rho_{k_t}(\mathcal{F}_{t},\mu)\left|\mathcal{H}_{t}\right.\right)\leq\min\left\{ \frac{5\rho_{k_t+1}(\mathcal{F}_{t})}{\rho_{k_t}(\mathcal{F}_{t})},1\right\} .\label{eq:5-2}
\end{equation*}
Note, that condition on $\mathcal{H}_t$, the only randomness in the above expression comes from $x$. Combining \cref{eq:0001} with the last two inequalities yields
\begin{equation*}
\begin{aligned}P_{t} & \leq1.2\epsilon+\mathbb{E}\left[\min\left\{ \frac{5\rho_{k_t+1}(\mathcal{F}_{t})}{\rho_{k_t}(\mathcal{F}_{t})},1\right\} \right]+2\mathbb{P}(\mathcal{E}^{c})\\
 & \leq1.2\epsilon+\mathbb{E}\left[\frac{5\rho_{k_t+1}(\mathcal{F}_{t})}{\rho_{k_t}(\mathcal{F}_{t})}\1(\mathcal{E})\right]+3\mathbb{P}(\mathcal{E}^{c})\\
 & \leq1.2\epsilon+\mathbb{E}\left[\frac{7.5\rho_{k_t+1}^{\mathcal{S}}(\mathcal{F}_{t},\mu)}{\rho_{k_t}^{\mathcal{S}}(\mathcal{F}_{t})}\right]+3\mathbb{P}(\mathcal{E}^{c})\leq17.7\epsilon.
\end{aligned}
\end{equation*}
where the third inequality follows from \cref{eq:3-claim1,eq:3-1}, and the last inequality follows by $\Pbb(\Ecal^c)\leq 3\epsilon$ and the definition of $k_t$. Consequently,
\[
\Ebb[\textsc{AbsErr}(\Tcal,z)]\leq \sum_{t=1}^T P_t\leq 17.7\epsilon T.
\]
Finally, we need to verify the assumption that $\Pbb(\Ecal^c)\leq 3\epsilon$ for oblivious and adaptive adversaries. Formally, our goal is to show that, with at most $3\epsilon$ probability of failure,
\begin{equation}\label{eq:difference_bounded_by_variance}
    \forall\tilde \Fcal\in\mathcal{G},k\in[d],\quad\left|\rho_{k}^{\Scal}(\tilde \Fcal)-\rho_{k}(\tilde \Fcal,\mu)\right|\leq 2\sigma_k^N(\tilde\Fcal).
\end{equation}
We will separately show \cref{eq:difference_bounded_by_variance} for oblivious and adaptive adversaries.

% Since the algorithm cannot directly access $\rho_k (\Fcal_t,\mu)$,  it has to make decisions based on $\rho_k ^\Scal(\Fcal_t)$. To ensure $\rho_{k+1}(\Fcal_t,\mu)/\rho_{k}(\Fcal_t,\mu)$ is sufficiently small, a simple analogy would be checking if 1. $\rho_{k+1} ^\Scal(\Fcal_t)/\rho_k ^\Scal(\Fcal_t)$ is small, and 2. $\rho_{k+1}(\Fcal_t,\mu)/\rho_{k}(\Fcal_t,\mu)\approx \rho_{k+1} ^\Scal(\Fcal_t)/\rho_k ^\Scal(\Fcal_t)$. 

% Since $\rho_k ^\Scal(\Fcal_t)$ is accessible to the algorithm, the first condition can be easily checked. The second condition is closely related to the standard deviation of estimation $\rho_{k} ^\Scal(\Fcal_t)$, which we will discuss later.
\paragraph{Verifying $\Pbb(\Ecal^c)\leq 3\epsilon$ for oblivious adversaries.}
When the adversary is oblivious, we may consider without loss of generality that the sequence of corrupted times as well as the corrupted samples are deterministic. Also, since in oblivious setting we let the algorithm update $\Fcal_t$ at every time $t>\max\Tcal$, we have $\Fcal_t=\Fcal\cap \{(x_\tau,y_\tau)\}_{\tau=\max\Tcal+1}^{t-1}$. 
Apart from the fixed corrupted samples, the sequence $(x_t)_{t=\max\mathcal{T}+1}^{T}$ is independent
of $(x_t)_{t\in\mathcal{T}}$ and as a result,
$\mathcal{G}$ is independent of $(x_t) _{t\in\mathcal{T}}$. Also recall that by definition, for any given function class $\tilde\Fcal$ and $k\in[d]$, the estimator $\hat\rho_k^{S_1}(\tilde\Fcal)$ is unbiaised for $\rho_k(\tilde\Fcal,\mu)$. Hence, using concentration bounds on the median in \cref{thm:median} with $\delta=\epsilon/dT$ and the union bound yields
\[
\Pbb(\Ecal^c)= \Ebb\sqb{\Pbb(\Ecal^c\mid\Gcal)} \leq\frac{\epsilon}{dT} |\mathcal{G}| \leq 3\epsilon.
\]

\paragraph{Verifying $\Pbb(\Ecal^c)\leq \epsilon$ for adaptive adversaries.}
For adaptive adversaries, the argument is more complex. This is because the samples $\Scal$ used for estimation (which are basically samples in $(x_t)_{t\in\Tcal}$) are potentially dependent on the sequence $(x_t)_{t=\max\Tcal+1}^T$, hence we can no longer assume samples in $\Scal$ are independent $\Gcal$. Despite not having this convenience, we can utilize \cref{lemma:uniform_convergence_median} which shows universal concentration over all possible function classes of the form $\Fcal\cap A$ where $\Fcal$ is the initial hypothesis class and $A$ can be any subset of less than $l$ data points in $\Xcal\times\{0,1\}$. 

For a run of $\textsc{WL}(\Tcal,z)$, define the set of times
\[\Mcal:=\{t\in (\max\Tcal,T] : \textsc{WL}(\Tcal,z)\text{ makes a mistake at time }t\}.\]
Since in adaptive setting we only let the algorithm update $\Fcal_t$ when there is a misclassification error, each function class $\tilde{\Fcal}\in\Gcal$ is of the form $\Fcal\cap A$ for some $A=\{(x_\tau,y_\tau):\tau\in\Mcal,\tau<t\}$, or $A=\{(x_\tau,y_\tau):\tau\in\Mcal,\tau<t\}\cap(x_t,z)$ for $z\in\{0,1\}$. Either way, $|A|\leq |\Mcal|+1$. 

From the analysis of Misclassification error, $|\Mcal|<5d^2\log(1/\epsilon)$ almost surely. Therefore, let $N=\ceil{2000d^2 /\epsilon}$, $\delta=\epsilon/d$ and $l=\ceil{5d^2 \log(1/\epsilon)}$ in \cref{lemma:uniform_convergence_median}. Then,
one can check the condition on $m$ for \cref{lemma:uniform_convergence_median} is satisfied: 
\[\ceil{c_0 (D\log(ND)+\log(1/\delta))}\leq \ceil{c_0(D\log(D)+8D+3\log(d/\epsilon))}\leq m.\]
Hence for each $k\in[d]$, with probabilty at least $1-\epsilon/d$,
    \begin{equation*}
        |\rho_k^\Scal(\Fcal\cap A) - \rho_k(\Fcal\cap A,\mu)| \leq 2 \sigma_k^N(\Fcal\cap A,\mu) ,\quad A\subseteq\Xcal\times\{0,1\}, \,|A|\leq \ceil{5d^2 \log(1/\epsilon)}.
    \end{equation*}
    In particular, every function class in $\Gcal$ takes the form $\Fcal\cap A$ for some set of data points $A$ with $|A|\leq \ceil{5d^2 \log(1/\epsilon)}$. Hence 
    \[\forall\tilde \Fcal\in\mathcal{G},\quad\left|\rho_{k}^{\Scal}(\tilde \Fcal)-\rho_{k}(\tilde \Fcal,\mu)\right|\leq 2\sigma_k^N(\tilde\Fcal).\]
    Union bounding over all $k\in[d]$ finishes the proof.
\end{proof}

Finally, we prove the weak learner guarantee for the censored variant model, which essentially follows the same proof.

\vspace{3pt}

\begin{proof}[of \cref{cor:iid_examples_censored}]
    By construction, $\textsc{WL}(\Tcal,\Ucal,z,u)$ makes the same updates for the function class $\Fcal_t$ as $\textsc{WL}(\Tcal,z)$ with $\texttt{update}=\texttt{restricted}$, and hence outputs the same values. Hence, it suffices to focus on proving the guarantees for $\textsc{WL}(\Tcal,z)$ with the same parameters and $\texttt{update}=\texttt{restricted}$. Using \cref{thm:iid_examples_v2} directly proves the desired result for the adaptive case. 
    
    For the oblivious case, similar to the proof of \cref{thm:iid_examples_v2}, it suffices to show $\Pbb(\Ecal^c)\leq 3\epsilon$ where 
    \begin{equation}
    \mathcal{E}\coloneqq\left\{ \forall\tilde \Fcal\in \{\Fcal_t,\Fcal_t ^{x_t\to 0},\Fcal_t ^{x_t\to 1}\}_{t=\max\Tcal+1}^T,k\in[d],\quad\left|\rho_{k}^{\Scal}(\tilde \Fcal)-\rho_{k}(\tilde \Fcal,\mu)\right|\leq 2\sigma_k^N(\tilde\Fcal)\right\}.
    \end{equation}
    
Since the adversary is oblivious, we consider without loss of generality that the sequence of corrupted times as well as the corrupted samples are deterministic. Recall that by the proof of \cref{thm:iid_examples_v2}, $\textsc{MisErr}(\Tcal,z)< 5d^2 \log(1/\epsilon)$ always holds.
Hence, for any fixed set of times $\Mcal\subseteq [T]$ with $|\Mcal|\leq l\coloneqq \floor{5d^2 \log(1/\epsilon)}$, define the set of function classes
\[\Gcal(\Mcal):=\set{\Fcal\cap \{(x_s,y_s)\}_{s\in\Mcal,s<t}:t\in (\max\Tcal,T]}.\]
Note that $\{\Fcal_t,\Fcal_t ^{x_t\to 0},\Fcal_t ^{x_t\to 1}\}_{t=\max\Tcal+1}^T = \Gcal(\Mcal)$ when $\Mcal$ corresponds to the set of times when $\textsc{WL}(\Tcal,z)$ makes a mistake. By construction, the weak learner does not make mistakes for $t\in [\max\Tcal]$. Therefore, we can consider
\begin{equation}
\Gcal_{\text{all}}=\bigcup_{
    \Mcal'\subseteq (\max\Tcal,T],
    |\Mcal'|\leq l
} \Gcal(\Mcal')
\end{equation}
which includes all function classes in $\{\Fcal_t,\Fcal_t ^{x_t\to 0},\Fcal_t ^{x_t\to 1}\}_{t=\max\Tcal+1}^T $ almost surely. 

Apart from the fixed corrupted samples, the sequence $(x_t)_{t=\max\mathcal{T}+1}^{T}$ is independent 
of $(x_t)_{t\in\mathcal{T}}$ and as a result,
$\mathcal{G}_{\text{all}}$ is independent of $(x_t) _{t\in\mathcal{T}}$.
Also recall that by definition, for any given function class $\tilde\Fcal$ and $k\in[d]$, the estimator $\hat\rho_k^{S_1}(\tilde\Fcal)$ is unbiaised for $\rho_k(\tilde\Fcal,\mu)$. Hence, using concentration bounds on the median in \cref{thm:median} with $\delta = (T^l (l+1)d)^{-1}\epsilon$---note that $m\geq \ceil{8 \log(1/\delta)}$ holds by assumption on $m$---and the union bound over all $\Fcal\in\Gcal_{\text{all}}$ yields
\begin{align*}
     \Pbb(\Ecal^c)&\leq \Pbb \left(\exists \tilde{\Fcal}\in\Gcal_{\text{all}},k\in[d],\quad\left|\rho_{k}^{\Scal}(\tilde \Fcal)-\rho_{k}(\tilde \Fcal,\mu)\right|> 2\sigma_k^N(\tilde\Fcal)\right)\\
     &\leq |\Gcal_{\text{all}}|d \delta \leq T^l \cdot 3(l+1)d\delta\leq 3\epsilon,
\end{align*}
where the second last inequality uses the observation that $|\Gcal(\Mcal)|\leq 3(|\Mcal|+1)$ for each $\Mcal$.
\end{proof}

\section{Proofs of \cref{sec:boosting}}

We start by proving the learning guarantee for the \textsc{Delete} algorithm which will be a subroutine within the complete boosting procedure.
\vspace{3pt}
\begin{proof}[of \cref{lemma:guarantee_delete_algo}]
    For each weak learner $i\in[L]$ we denote by $r_i=\abs{\set{t\in[n]\cap\Ucal: y_{i,t}\neq \perp}}$ its total number of predictions. Note that if $r_i=0$ for all $i\in[L]$ then weak learners always abstain and hence the desired result is immediate: the deletion algorithms also always abstain. We suppose this is not the case from now. We fix a deletion parameter $s\geq 0$ and denote by $N(s)=\abs{\set{t\in[n]\cap\Ucal: \hat y_t(s)\notin \{y_t,\perp\}}}$ the number of mistakes of $\textsc{Delete}_{s,\frac{C}{2}}$ on $\Ucal$.
    We consider the random variable
        \begin{equation*}
            r:=r_k \quad\text{where} \quad k\sim\text{Unif}(\{i\in[L]:r_i>0\})
        \end{equation*}
        is sampled uniformly among weak learners with at least one prediction. Next, for any $l\geq 1$ we define $q_l$ to be the $2^{-l}$-quantile of $r$, that is, $q_l\in\Nbb$ and
        \begin{equation*}
            \Pbb[r\geq q_l] \geq 2^{-l} > \Pbb[r>q_l].
        \end{equation*}
        We also pose $q_0=0$.
        Note that since there are at most $L$ rows, all quantiles $q_l$ are equal for $l\geq \log L$ to $r_{\max}:=\max_{i\in[L]} r_i$. 
        We first consider the case when $s\geq r_{\max}$. In that case, after deletions, we have $\tilde y_{a,b}=\perp$ for all $(a,b)\in[n]\times[L]$ hence $N(s)=0$. On the other hand, for $l\in[\ceil{\log L}]$, note that for any integer $s\in[q_{l-1},q_l)$, we have
        \begin{equation*}
            \Ebb[r-s\mid r> s] \geq (q_l-s)\Pbb[r\geq q_l\mid r>s] \geq (q_l-s)\frac{\Pbb[r\geq q_l]}{\Pbb[r>q_{l-1}]} \geq \frac{q_l-s}{2}.
        \end{equation*}
        In particular, this shows that
        \begin{align*}
            C|\Ucal| \overset{(i)}{\geq}\sum_{t\in[n]\cap\Ucal,i\in [L]} \1[\tilde y_{i,t}\neq\perp] &= \Ebb[r-s\mid r> s]\cdot  |\{i\in[L]:\exists t\in[n]\cap\Ucal, \tilde y_{i,t}\neq \perp\}|\\
            &\overset{(ii)}{\geq} \frac{q_l-s}{2M} \sum_{i\in [L]} \abs{\set{t\in[n]\cap\Ucal: \tilde y_{i,t} \notin\{y_t,\perp\}}}\\
            &\overset{(iii)}{\geq} \frac{q_l-s}{2M}\cdot \frac{C}{4}N(s)
        \end{align*}
        In $(i)$ we used the fact that at each round, at most $C$ weak learners make predictions.
        In $(ii)$, we used the previous lower bound on $\Ebb[r-s\mid r> s]$ as well as the assumption that each weak learner makes at most $M$ mistakes on $\Ucal$. In $(iii)$ we used the fact that by construction, $\textsc{Delete}_{s,C/2}$ follows the majority vote if at least $C/2$ weak learners make a prediction and otherwise abstains. Hence, if $\hat y_t(s)\notin\{y_t,\perp\}$ then at least $C/4$ weak learners must have made a mistake. Altogether, we showed that
        \begin{equation*}
            N(s) \leq \frac{8M|\Ucal|}{q_l-s}.
        \end{equation*}
        We recall that there are at most $\ceil{\log L}$ values of form $q_l$ for $l\in[\ceil{\log L}]$. In particular, either $s_{\max}+1\leq \ceil{\log L}$ in which case the desired bound is immediate since $N(s)\leq n$. Or $s_{\max}+1> \ceil{\log L}$ hence by the pigeonhole principle, either $s=r_{\max} \leq s_{\max}$ or there exists $s=q_{l-1}$ for $l\in[\ceil{\log L}]$ such that $q_l-s \geq (s_{\max}+1)/\ceil{\log L}$. In all cases, this proves the existence of $s\in \{0,\ldots,s_{\max}\}$ such that
        \begin{equation*}
            N(s) \leq \frac{8M|\Ucal| \ceil{\log L}}{s_{\max}+1}.
        \end{equation*}
        This ends the proof.
\end{proof}
We next prove the misclassification error of the \textsc{Aggregate} algorithm which combines the predictions of the $\textsc{Delete}_{s,C}$ algorithms for $s\in\{0,\ldots,s_{\max}\}$ using standard misclassification guarantees for WMA \citep{littlestone1994weighted}.

\vspace{3pt}

\begin{proof}[of \cref{lemma:guarantee_aggregate_algo}]
    The misclassification error bound for $\textsc{Aggregate}_{s_{\max},C/2}$ is essentially an application of standard bounds for WMA, e.g. \cite{littlestone1994weighted}, which gives $M_T(\mathrm{WMA}) \leq 3 (M^\star + \log N)$ for where $M_T(\mathrm{WMA})$ (resp. $M^\star$) is the number of mistakes of WMA (resp. of the best expert) after $T$ rounds, for $N$ experts. We apply this bound to the set $\Ucal$ of times when all $\textsc{Delete}_{s_{\max},C/2}$ makes a prediction, which implies that all algorithms $\textsc{Delete}_{s,C/2}$ for $s\in\{0,\ldots,s_{\max}\}$ also make a prediction---indeed, by construction, each algorithm $\textsc{Delete}_{s,C/2}$ for $s\in\{0,\ldots,s_{\max}\}$, deletes at most the first $s_{\max}$ predictions for each weak learner.
    Note that by construction, on all other times, $\textsc{Aggregate}_{s_{\max},C/2}$ abstains, and hence incurs no misclassification error. Hence, the misclassification error of the WMA aggregates satisfies
    \begin{align*}
        \abs{ \set{t\in[n]:  \hat y_t \notin \{y_t,\perp\} }} = \abs{ \set{t\in\Ucal:  \hat y_t\neq y_t }}
        &\leq 3 \min_{s\leq s_{\max}} \abs{ \set{t\in \Ucal:  \hat y_t(s) \notin \{y_t,\perp\} }}  + 3\log(s_{\max}+1).
    \end{align*}
    Furthering the right-hand side using \cref{lemma:guarantee_delete_algo} gives the desired misclassification error bound.
    %Next, by construction, each algorithm $\textsc{Delete}_{s,C/2}$ for $s\in\{0,\ldots,s_{\max}\}$, deletes at most the first $s_{\max}$ predictions for each weak learner. Hence, all such algorithms make a prediction if and only if $\textsc{Delete}_{s_{\max},C/2}$ makes a prediction, i.e., if at least $C/2$ weak learners $i\in[L]$ with $|\{ t'<t:y_{i,t'}\neq \perp\}|\geq s_{\max}$ make a prediction $y_{i,t}\neq \perp$. By construction, at such times, $\textsc{Aggregate}_{s_{\max},C/2}$ also makes a prediction, ending the proof.
\end{proof}

We are now ready to prove the main guarantee for our boosting procedure $\textsc{Boosting}$.

\vspace{3pt}

\begin{proof}[of \cref{thm:combining_weak_lerners}]
    Fix the parameters $\epsilon,s_{\max},M$. As in the algorithm, we denote by $y_{t,i}$ the prediction of the weak learner $i\in[L]$ at time $t\in[T]$ for the sequence of instances generated by the adversary. 
    
    \paragraph{Correctness of the algorithm.} We start by checking that \cref{alg:main_exp_version} is well-defined, specifically, that the while loop in lines 6-13 at each round $t\geq 1$ terminates. Indeed, fix any iteration of this loop, and let $j\in[\ceil{\log L}]$ be the index such that $n_t:=|\set{i\in[L]:z_{i,t}\neq \perp}|\in (2^{-j}L,2^{1-j}L]$. From \cref{lemma:guarantee_aggregate_algo}, the layer-$j$ subroutine $\textsc{Aggregate}_{s_{\max},2^{-j}L}^{\Qcal_j}$ makes a prediction $\hat y_t^{(j)}\neq \perp$ whenever at least $2^{-j}L$ weak learners $i\in[L]$ make a prediction $z_{i,t}\neq \perp$ and have made at least $s_{\max}$ predictions in previous inputs to the layer-$j$ subroutine. Additionally, note that the quantity $s_{\max}-s_{i,j}$ precisely counts the number of predictions of weak learner $i$ in previous inputs to this subroutine. Hence, after running lines 10-12, the prediction $z_{i,t}$ of weak learner $i\in[L]$ is only kept if it made at least $s_{\max}$ predictions in previous inputs to the layer-$j$ subroutine. For clarity, denote by $\tilde z_{i,t}$ the updated value of $z_{t,i}$ after this operation. Altogether, this shows that if $\hat y_t^{(j)}=\perp$, then
    \begin{equation*}
        \tilde n_t:=|\set{i\in[L]: \tilde z_{i,t} \neq \perp}| < 2^{-j}L.
    \end{equation*}
    In turn, this shows that the variable $j$ is strictly increasing at each loop iteration, and hence must terminate when either a prediction is made or $n_t=0$.

    \paragraph{Misclassification error.} In line 4 of \cref{alg:main_exp_version}, we delete predictions of all weak learners with at least $M$ mistakes. To avoid confusions, we denote by $z_{i,t}^{(0)}$ the corresponding updated recommendation of weak learner $i\in[L]$ at iteration $t\in[n]$:
    \begin{equation*}
        z_{i,t}^{(0)} := \begin{cases}
            \perp &\text{if } |\set{s<t: y_{i,s}\notin\{y_s,\perp\}}| \geq M,\\
            y_{i,t} &\text{otherwise}.
        \end{cases}
    \end{equation*}
    By construction, for each weak learner $i\in[L]$, the recommendations $z_{i,t}^{(0)}$ for $t\in[T]$ contain at most $M$ mistakes.

    For any layer $j=1,\ldots,\ceil{\log L}$, we denote by $\Qcal_{j,T}$ the final value of $\Qcal_j$ at the end of the algorithm, that is, $\Qcal_j$ corresponds to the set of times on which we ran the subroutine $\textsc{Aggregate}_{s_{\max},2^{-j}L}$. Among these, we denote by $\Ucal_j$ the set of times when the subroutine $\textsc{Aggregate}_{s_{\max},2^{-j}L}$ makes a prediction:
    \begin{equation*}
        \Ucal_j:=\set{t\in\Qcal_j: \hat y_t^{(j)}\neq \perp}.
    \end{equation*}
    Next, for any $t\in\Qcal_{j}$, denote by $(z_{i,t}^{(j)})_{i\in[L]}$ the weak learner predictions input to this subroutine at iteration $t$. Note that the layer-$j$ weak learner recommendations $z_{i,t}^{(j)}$ for $i\in[L]$ and $t\in\Qcal_{j,T}$ are obtained from their counterpart $z_{i,t}^{(0)}$ by deleting some predictions: $z_{i,t}^{(j)}\in \{z_{i,t}^{(0)},\perp\}$. Indeed, throughout \cref{alg:main_exp_version}, the only updates of the quantities $z_{t,i}$ are deletions, see line 11. In turn this shows that the input fed to the subroutine $\textsc{Aggregate}_{s_{\max},2^{-j}L}$ contain at most $M$ mistakes during all rounds, as per \cref{def:adversary_structure}, and as a result, at most $M$ mistakes on times $\Ucal_j$. By construction, these inputs also have at most $2^{1-j}L$ predictions per round---see line 7 of \cref{alg:main_exp_version}---as per \cref{def:adversary_structure}. Hence, \cref{lemma:guarantee_aggregate_algo} bounds the misclassification error of the layer-$j$ subroutine by
    \begin{equation}\label{eq:miss_err_subroutine}
        \sum_{t\in\Qcal_{j}} \1[\hat y_t^{(j)} \notin\{y_t,\perp\}]  \leq \frac{24 M |\Ucal_j| \ceil{\log L}}{s_{\max}+1} + 3\log(s_{\max}+1).
    \end{equation}

    Since at each iteration $t\in[T]$ we follow the prediction of one of the subroutines or abstain, we can bound the total misclassification error by the sum of misclassification error of the subroutines:
    \begin{align*}
        \sum_{t=1}^T \1[\hat y_t\notin\{y_t,\perp\}] &\leq \sum_{j=1}^{\ceil{\log L}} \sum_{t\in\Qcal_{j,T}} \1[\hat y_t^{(j)} \notin\{y_t,\perp\}]\\
        &\overset{(i)}{\leq} \frac{24 M \ceil{\log L}}{s_{\max}+1}\sum_{j=1}^{\ceil{\log L}} |\Ucal_j| + 3\log(s_{\max}+1) \ceil{\log L}\\
        &\overset{(ii)}{\leq} \frac{24 M \ceil{\log L}}{s_{\max}+1}T + 3\log(s_{\max}+1) \ceil{\log L},
    \end{align*}
    where in $(i)$ we used \cref{eq:miss_err_subroutine}, and in $(ii)$ we used the fact that all sets $\Ucal_j$ are disjoint for $j\leq [\ceil{\log L}]$. Indeed, by construction of the \textsc{Boosting} algorithm, whenever the first layer makes a prediction $t\in\Ucal_1$, we follow this prediction $\hat y_t=\hat y_t^{(1)}$ and move to the next time $t+1$: hence $t\notin\Ucal_j$ for $j>1$. Similarly, if $t\in\Ucal_2$, the boosting strategy follows its prediction and moves to the next time $t+1$: $t\notin\Ucal_j$ for $j>2$; and so on. Using $s_{\max}\leq T$ gives the desired bound for the misclassification error of \cref{alg:main_exp_version}.

    \paragraph{Abstention error.} By construction, the algorithm abstains $\hat y_t=0$ only if at the end of while loop in lines 6-13, all weak learner updated recommendations are abstentions: $n_t=|\set{i\in[L]: z_{i,t}\neq \perp}|=0$. We recall that these updated recommendations are obtained from $z_{i,t}^{(0)}$ by potentially deleting predictions in line 11. Note, however, that there are at most $s_{\max}$ deletions for each layer $j$ throughout the complete procedure, as depicted by the counts $s_{i,j}$.
    %Hence, for each weak learner $i\in[L]$, there are at most $s_{\max}\ceil{\log L}$ times $t\in[T]$ for which at the end of the while loop $\perp=z_{i,t}\neq z_{i,t}^{(0)}$---that is, the prediction of weak learner $i$ was deleted. 
    Formally, if we denote by $z_{i,t}$ its value at the end of the while loop, for each $i\in[L]$ we have
    \begin{equation*}
        |\{t\in[T]: \hat y_t=\perp \text{ and } z_{i,t}^{(0)}\neq\perp\}| \leq |\{t\in[T]: z_{i,t}=\perp \text{ and } z_{i,t}^{(0)}\neq\perp\}| \leq s_{\max}\ceil{\log L}.
    \end{equation*}
    Additionally, if weak learner $i$ makes strictly less than $M$ mistakes, then line 4 of \cref{alg:main_exp_version} never deletes its predictions and hence $y_{t,i}=z_{i,t}^{(0)}$. Together with the previous equation this implies
    \begin{equation*}
        \textsc{AbsErr} \leq \textsc{AbsErr}(i) + s_{\max}\ceil{\log L},
    \end{equation*}
    where $\textsc{AbsErr}(i)$ denotes the abstention error of weak learner $i$. This
    ends the proof.
\end{proof}

We now prove the boosting guarantee for the variant censored model.

\vspace{3pt}

\begin{proof}[of \cref{cor:combining_weak_lerners_censored}]
    The correctness and abstention error follow from the same arguments as in \cref{thm:combining_weak_lerners}. We use the same notations. For bounding the misclassification error, observe that after the updated line 4 of the censored-boosting strategy \cref{alg:main_exp_version}, the corresponding updated recommendations $z_{i,t}^{(0)}$ of weak learner $i\in[L]$ at time $t$, contain at most $M$ mistakes on times when the final procedure made a prediction, that is, when at least one of the $\textsc{Aggregate}$ subroutines for each layer makes a prediction. In particular, on each layer $j$, the input fed to that subroutine $\textsc{Aggregate}_{s_{\max},2^{-j}L}$ contains at most $M$ mistakes on times $\Ucal_j$ when it made a prediction, which is precisely the property needed to apply \cref{lemma:guarantee_aggregate_algo}. The rest of the proof of \cref{thm:combining_weak_lerners} therefore applies and gives the desired misclassification bound.
\end{proof}

\end{document}